\documentclass[10pt]{article}
\usepackage[utf8]{inputenc}
\usepackage[T1]{fontenc}
\usepackage{graphicx}
\usepackage{hyperref}
\usepackage{geometry}
\usepackage{times}
\usepackage{natbib}  
\usepackage{authblk}  
\usepackage{anyfontsize} 
\usepackage{style/arxiv}

\usepackage{array}
\usepackage{tabu}
\usepackage{wrapfig}
\usepackage{float}

\usepackage{graphicx}
\usepackage{subcaption}
\usepackage{hyperref}
\usepackage{url}
\usepackage{xcolor}

\usepackage{multirow}

\RequirePackage{algorithm}
\RequirePackage{algorithmic}

\newif\ifInlineVersion
\InlineVersiontrue  

\usepackage{amsfonts,bm}
\usepackage{amsmath, amssymb, amsthm}
\usepackage{amstext}
\usepackage{enumitem}

\def\vzero{{\bm{0}}}
\def\vone{{\bm{1}}}

\def\va{{\bm{a}}}
\def\vb{{\bm{b}}}

\def\pb{\tilde{\bm{p}}}

\def\vh{{\bm{h}}}

\def\vp{{\bm{p}}}
\def\vq{{\bm{q}}}

\def\vs{{\bm{s}}}

\def\vv{{\bm{v}}}
\def\vw{{\bm{w}}}

\def\vy{{\bm{y}}}

\def\mL{{\bm{L}}}

\def\mU{{\bm{U}}}

\def\mW{{\bm{W}}}

\def\gA{{\mathcal{A}}}

\def\gC{{\mathcal{C}}}
\def\gD{{\mathcal{D}}}

\def\gF{{\mathcal{F}}}

\def\gM{{\mathcal{M}}}

\def\gP{{\mathcal{P}}}

\def\gS{{\mathcal{S}}}
\def\gT{{\mathcal{T}}}

\def\gV{{\mathcal{V}}}

\def\sI{{\mathbb{I}}}

\def\sN{{\mathbb{N}}}

\def\sR{{\mathbb{R}}}
\def\sS{{\mathbb{S}}}

\newcommand{\E}{\mathbb{E}}

\newcommand{\R}{\mathbb{R}}

\DeclareMathOperator{\GGF}{GGF}

\newcommand{\Prob}{\mathbb{P}}
\newcommand{\M}{\mathcal{M}}

\renewcommand{\S}{\mathcal{S}}
\newcommand{\A}{\mathcal{A}}
\renewcommand{\P}{p}
\renewcommand{\R}{r}

\newcommand{\Mn}{\M^{(N)}}
\newcommand{\Sn}{\S^{(N)}}
\newcommand{\An}{\A^{(N)}}

\newcommand{\Pn}{\P^{(N)}}
\newcommand{\Rn}{\bm{r}}
\newcommand\vmu{\mu^{(N)}}
\newcommand{\cM}{\M_\phi}
\newcommand{\cS}{\Sn_f}
\newcommand{\cA}{\An_{g}}
\newcommand{\cP}{\Pn_\phi}

\newcommand{\bR}{\bar{r}_{\phi}}
\newcommand{\cmu}{\vmu_f}

\newcommand{\s}{\bm{s}}
\renewcommand{\a}{\bm{a}}

\renewcommand{\r}{\bm{r}}
\newcommand{\x}{\bm{x}}
\renewcommand{\u}{\bm{u}}
\newcommand{\so}{\bar{\s}_0} 

\renewcommand{\E}{\mathbb{E}}
\newcommand{\V}{\bm{V}}
\newcommand{\bV}{\bar{V}_0}
\newcommand{\Vo}{\bm{V}_0}

\newcommand{\vpi}{\boldsymbol{\pi}} 
\newcommand{\bpi}{\bar{\vpi}} 
\newcommand{\cpi}{{\vpi}_{\phi}} 
\newcommand{\ppi}{\vpi^Q} 

\newcommand{\G}{\mathcal{G}^{N}}
\newcommand{\ut}{U}

\newcommand{\dx}{\bar{\bm{x}}}

\newcommand{\vpLag}{\hat{\bm{p}}}
\newcommand{\pLag}{\hat{p}}
\newcommand{\aWait}{a^{\text{wait}}}
\newcommand{\aFB}{\hat{a}}

\newcommand{\mAdj}{\bm{A}_{\text{adj}}}
\newcommand{\Adj}{A_{\text{adj}}}

\newcommand{\midd}{\,\big|\,}
\newcommand{\bigd}{\,\Big|\,}
\newcommand{\mybigtimes}{\mathop{\vcenter{\hbox{\LARGE$\times$}}}}
\newcommand{\mymbox}[1]{\mbox{\scriptsize #1}}

\author[1,3]{Xiaohui Tu \thanks{Corresponding author. Email addresses: 
    \href{mailto:xiaohui.tu@hec.ca}{\textit{xiaohui.tu@hec.ca}},
    \href{mailto:yossiri.adulyasak@hec.ca}{\textit{yossiri.adulyasak@hec.ca}}, and  
    \href{mailto:erick.delage@hec.ca}{\textit{erick.delage@hec.ca}}}}
\author[2]{Yossiri Adulyasak}
\author[1,3]{Erick Delage}

\affil[1]{GERAD \& Department of Decision Sciences, HEC Montréal, Montr\'eal, H3T2A7, Canada}
\affil[2]{GERAD \& Department of Logistics and Operations Management, HEC Montréal, Montr\'eal, H3T2A7, Canada}
\affil[3]{MILA - Quebec AI Institute, Montréal, H2S3H1, Canada}

\title{Fair Policy Optimization in Major-Minor Weakly Coupled Markov Decision Processes}

\date{}

\begin{document}
\maketitle

\begin{abstract} \normalsize
We consider fair resource allocation in sequential decision-making environments modeled as major-minor weakly coupled Markov decision processes (M2WCMDP). In this framework, resource constraints couple the action spaces of a major sub-Markov decision process (sub-MDP) and a population of minor sub-MDPs that would otherwise operate independently. Instead of using the traditional utilitarian (total-sum) objective, we optimize a general class of monotone, concave, permutation-invariant, normalized fairness functions. With homogeneous minor sub-MDPs, we prove that the problem under symmetry reduces to optimizing the platform-plus-mean-participant utilitarian objective over the class of \textit{permutation-invariant} policies, which allows us to exploit efficient algorithms that optimize the utilitarian-based objective to solve this fairness-aware problem. For more general settings, we introduce a count-proportion-based deep reinforcement learning approach with a priority-based sampler that generates feasible count actions. The generality of our framework means that the proposed algorithms and theoretical guarantees transfer to any domain with a symmetric M2WCMDP structure. We consider two applications: the machine replacement problem and the joint control of pricing and taxi relocation problem on a New York City-calibrated dataset. We validate our theoretical findings with comprehensive experiments, confirming the effectiveness of our proposed method in achieving strong fairness-aware performance while remaining scalable.
\end{abstract}

\InlineVersiontrue
\section{Introduction}\label{sec:Intro}
The dynamic allocation of shared resources in large-scale stochastic systems is a fundamental problem in various domains, including inventory pricing \citep{gallego1994optimal, adelman2004price}, marketing \citep{bertsimas2007learning, caro2007dynamic}, and preemptive asset maintenance \citep{nadarajah2025self}. These systems are often modeled using weakly coupled Markov decision processes (WCMDPs) \citep{hawkins2003langrangian, adelman2008relaxations}, where each component evolves according to a local sub-Markov decision process (sub-MDP), and coupling arises through shared resource constraints on the joint action space. 

However, a critical challenge arising in such systems is that system-level endogenous information and coordination decisions cannot be represented only by local components. In ride-hailing platforms, for example, joint decisions on pricing, dispatching, and relocation not only determine current matching outcomes, but also influence future revenue through subsequent vehicle availability and the spatial distribution of demands \citep{Chen2024ABC}. The platform observes {system-level information such as outstanding customer queues and controls coordination decisions such as prices}, while individual drivers follow local state transitions determined by their own locations and assigned routes. Similar structures arise in asset maintenance and public infrastructure systems \citep{nadarajah2025self}, where a central operator observes system-level resource availability and allocates limited budgets across assets. These applications motivate a major-minor modeling structure, in which a major sub-MDP represents system-level dynamics, while a large number of minor sub-MDPs represent participants.

At the same time, maximizing total system utility alone is often insufficient when multiple stakeholders are affected. A purely total-sum objective may achieve high overall performance by assigning more profitable opportunities or scarce resources to certain subsets of participants. However, in public systems, operational decisions are often expected to provide equitable service quality, balanced workload allocation, or fair access to opportunities across participants. Examples include equitable driver earnings in mobility platforms \citep{lesmana2019balancing, sun2022optimizing}, fair access to wireless networks \citep{huaizhou2013fairness}, and balanced service levels in healthcare \citep{bertsimas2013fairness} and public infrastructure systems \citep{michailidis2023balancing}. These considerations have led to growing interest in fairness-aware sequential decision-making to promote equitable outcomes among participants \citep{hassanzadeh2023sequential, bhattacharya2024active, aminian2026markovian}.

These two considerations motivate the need for new WCMDP formulations that model both endogenous system dynamics and fairness-aware coordination. Unfortunately, the important computational difficulties arise from the non-separable problem structure and the curse of dimensionality. While classical WCMDP formulations allow tractable analysis through decomposition techniques \citep{adelman2008relaxations} or fluid approximation \citep{brown2025fluid}, they typically assume that sub-MDPs have additive objectives and separable dynamics. These assumptions break down when fairness is optimized as part of the objective. Additionally, existing formulations fail to capture system-level decisions that evolve separately from the individual behaviour, while affecting the transition and reward structure of the participants.

We thus introduce a fairness-aware major-minor WCMDP (M2WCMDP) framework, which consists of one major sub-MDP and a collection of minor sub-MDPs. Conditional on the major state and action, and on each participant's own state-action pair, each minor sub-MDP evolves independently, while their actions remain coupled through shared resource constraints. Within this framework, the objective balances the expected total discounted reward of the major sub-MDP with a fairness measure evaluated over the vector of participant expected total discounted rewards, consistent with social welfare theory \citep{weymark1981generalized, xinying2023guide}. {This fairness measure accommodates many commonly used monotone, concave, permutation-invariant, and normalized welfare measures}, such as generalized Gini functions \citep{weymark1981generalized} and max-min welfare \citep{rawls1971egalitarian}. 

Our first contribution is a structural reduction result for symmetric M2WCMDPs. We prove that, under symmetry conditions, optimizing a fairness-aware objective is equivalent to maximizing a trade-off between the platform's expected total discounted rewards and the participants' average expected total discounted rewards over the class of permutation-invariant policies without loss of {optimality for the fairness-aware objective}. Additionally, motivated by symmetry reduction of \cite{gast2022reoptimization}, the problem complexity is further reduced by obtaining an equivalent problem on a count-based representation.

The second contribution concerns efficient and scalable algorithms. We develop a deep reinforcement learning (DRL) approach that uses count-proportional representations and a stochastic policy network to generate feasible count actions. We show mathematically the feasibility-preserving and expressiveness properties of the proposed priority-based sampling procedure. We also provide an approach to translate aggregate decisions into permutation-invariant participant-level execution that prevents systematic discrimination among participants. In particular, a count policy specifies how many participants take each action, while fairness at the individual level requires a permutation-invariant randomized disaggregation rule that does not systematically favour specific participant labels.

Finally, we evaluate the proposed framework in two numerical settings. The first is a machine replacement problem (MRP), which provides a simple M2WCMDP environment where optimal solutions can be computed for small instances. The second is a taxi dispatching and pricing problem (TDPP) calibrated with trip records from \cite{nyc_tripdata}. This application evaluates the full major-minor coordination structure. The results show that the proposed methods achieve significantly improved fairness-efficiency trade-offs compared to benchmark approaches while remaining computationally scalable in large-scale settings.

\textit{Paper structure.} The remainder of this paper is organized as follows. Section \ref{sec:3problem} introduces the M2WCMDP framework and formalizes the fairness-aware optimization problem. Section \ref{sec:4reduction} establishes the utilitarian reduction under symmetry and presents the count aggregation reformulation. In Section \ref{sec:5policy_nn}, we develop scalable reinforcement learning algorithms based on count-proportion representations and stochastic policy networks. Next, in Section \ref{sec:7numerical}, we validate the proposed framework and methodologies through extensive numerical experiments for the machine replacement and taxi dispatching applications. Finally, we conclude the paper in Section \ref{sec:8conclusion}. The {electronic companions} review related literature, scope the boundary of the welfare class, provide proofs and exact count-model formulations, as well as report implementation details, benchmark formulations, and additional experiments.

\section{Fairness-Aware Major-Minor Weakly Coupled Markov Decision Processes}\label{sec:3problem}
We start by introducing the labelled infinite-horizon M2WCMDPs in Section \ref{sec:3.1-model}. In Section \ref{sec:3.2-fair-problem}, we {characterize the class of fairness measures considered in this work through a set of properties motivated by social welfare theory}, and then formulate the fairness-aware optimization problem. Finally, in Section \ref{sec:3.3-example}, we instantiate the framework with two applications that will be used in the numerical studies.

\textbf{Notation.} Let $[N] := \{1, \dots, N\}$ for any integer $N$. An indicator function $\sI\{x \in A\}$ equals 1 if $x \in A$ and 0 otherwise. For a finite set $X$, let $\Delta(X)$ denote the probability simplex over $\sR^{|X|}$. {For a vector-valued mapping $\xi: \sR^N \rightarrow \sR^{|X|}$ with components indexed by a finite set $X$, we write $[\xi(\vy)](x)$ for its $x$-indexed component, for any $\vy \in \sR^N$ and $x \in X$.} We use $\vone$ and $\vzero$ to denote all-ones and all-zero arrays, respectively, with their dimensions specified by subscripts. We define $\G$ as the set of all $N \times N$ permutation matrices, where $Q \in \G$ is a linear operator that acts on any $\vv \in \sR^N$ by reordering its components.

\subsection{Model Formulation}\label{sec:3.1-model}
{We consider a centrally coordinated system evolving over an infinite horizon $t \in \gT := \{0, 1, \dots\}$. This system is formulated as a discrete-time Markov Decision Process (MDP) that decomposes into a major-minor format, which consists of one major sub-MDP $\M^0$ (referred to as the \textit{platform}) and $N$ minor sub-MDPs} $\{\M^n\}_{n \in [N]}$ (referred to as \textit{participants}). A central decision maker jointly optimizes system-level efficiency and participant-level welfare.

Formally, the major sub-MDP $\M^0$ is characterized by the tuple $(\mathcal{S}^0, \A^0, p^0, r^0, \mu^0, \gamma)$, where $\S^0$ and $\A^0$ are the finite major state and action spaces, respectively. The major transition probability $p^0(\cdot)$ and real-valued reward function $r^0(\cdot)$ depend on the joint state and action of the entire system, and will be defined in Equation \ref{eq:major-transition-kernel} and the paragraph following it. The $n$-th minor sub-MDP $\M^n$ is defined by the tuple {$(\S^n, \A^n, \P^n, \R^n, \mu^n, \gamma)$}, where $\S^n$ is a finite set of states with cardinality $S$, and $\A^n$ is a finite set of actions with cardinality $A$. The stationary transition probability function is denoted by $p^n(\tilde{s}^n \mid s^0, s^n, a^0, a^n) = \Prob(s^n_{t+1} = \tilde{s}^n \mid s^0_t = s^0, s^n_t = s^n, a^0_t = a^0, a^n_t = a^n)$, representing the probability of reaching minor state $\tilde{s}^n \in \S^n$, after performing minor action  $a^n \in \A^n$ and major action $a^0 \in \A^0$ in minor state $s^n \in \S^n$ and major state $s^0 \in \S^0$ at time $t$. The reward function $r^n(s^0, s^n, a^0, a^n)$ denotes the immediate real-valued reward obtained by executing action $a^n$ in state $s^n$ under the coordination of the major system $(s^0, a^0)$. In the general formulation, the transition probabilities and the reward functions may vary with the sub-MDP $n$, and we assume only that they are stationary over time for simplicity. The initial state distributions are represented by $\mu^0 \in \Delta(\S^0)$ and $\mu^n \in \Delta(\S^n)$, while each sub-MDP's initial state is assumed to be drawn independently. The discount factor, common to all sub-MDPs, is denoted by $\gamma \in [0, 1)$.

The state of the system at any time is represented by the pair $(s^0, \s)$. The component $s^0 \in \S^0$ corresponds to the state of the major sub-MDP. The joint state space for the $N$ minor sub-MDPs is given by $\s = (s^1, \dots, s^N) \in \Sn$, with $\Sn$ as the Cartesian product of the individual state spaces. There is a set of $K$ resource constraints that couple the action selection of the minor sub-MDPs. Let $d_k(a^n \mid s^n)$ be the state-dependent non-negative consumption of the $k$-th resource by the $n$-th minor sub-MDP. Then, for a given system state $(s^0, \s)$, the feasible joint minor action space is a state-dependent subset of the Cartesian product of minor action spaces, defined as

\ifInlineVersion
\begin{equation} \label{eq:constraint} 
    \An(s^0, \s) := \left\{(a^1, \dots, a^N) \in \mybigtimes_{n=1}^N \A^n \bigd \sum_{n=1}^N d_k(a^n \mid s^n) \leq b_k(s^0), \forall k \in [K]\right\}, 
\end{equation}
\else
$\An(s^0, \s) := \left\{(a^1, \dots, a^N) \in \mybigtimes_{n=1}^N \A^n \bigd \sum_{n=1}^N d_k(a^n \mid s^n) \leq b_k(s^0), \forall k \in [K]\right\}$,
\fi
where $b_k: \S^0 \to \sR_{+}$ is the budget function mapping the major state to the available resource capacity of type $k$ to capture the exogenous supply of resources (e.g., passenger demands) which varies over time. The major action does not directly restrict the feasible minor action space, but serves as a coordination signal that influences the system through the transition dynamics and the reward function.

The minor sub-MDP transitions are conditionally independent. {From an operational perspective, this assumption implies that conditional on the platform's state (e.g., outstanding customer queues), the macro-level coordination signal (e.g., the price multiplier), and each participant's own state-action pair, participants' state transitions are mutually independent, as a driver's subsequent travel time and next location depend only on their own realized trajectory.} Specifically, minor sub-MDPs transition from state $\s$ to state $\tilde{\s}$ for a given major action $a^0$ and feasible minor actions $\a = (a^1, \dots, a^N)$ at time $t$ with probability 
\ifInlineVersion
\begin{equation*}
    \Pn (\tilde{\s} \mid s^0, \s, a^0, \a) := \prod^{N}_{n=1} p^n(\tilde{s}^n \mid s^0, s^{ n}, a^0, a^{n}) = \prod^{N}_{n=1} \Prob(s_{t+1}^n = \tilde{s}^n \mid s^0_t = s^0, s_{t}^n = s^n, a^0_t = a^0, a_{t}^n = a^n),
\end{equation*}
\else
$\Pn (\tilde{\s} \mid s^0, \s, a^0, \a) := \prod^{N}_{n=1} p^n(\tilde{s}^n \mid s^0, s^{ n}, a^0, a^{n}) = \prod^{N}_{n=1} \Prob(s_{t+1}^n = \tilde{s}^n \mid s^0_t = s^0, s_{t}^n = s^n, a^0_t = a^0, a_{t}^n = a^n)$, 
\fi
while the major sub-MDP has the transition kernel
\begin{equation} \label{eq:major-transition-kernel}
\ifInlineVersion
    p^0 (\tilde{s}^0 \mid s^0, \s, a^0, \a) := \Prob(s_{t+1}^0 = \tilde{s}^0 \mid s^0_t = s^0, \s_{t} = \s, a_t^0 = a^0, \a_t = \a).
\else
\begin{aligned}
    & p^0(\tilde{s}^0\mid s^0,\s,a^0,\a):= \Prob \left(s_{t+1}^0=\tilde{s}^0 \mid \right.\\
    & \hspace{3em} \left. s_t^0=s^0, \s_t=\s, a_t^0=a^0,\ \a_t=\a \right).
\end{aligned}
\fi
\end{equation}

After choosing an action $(a^0, \a) \in \A^0 \times \An(s^0, \s)$ when in state $(s^0, \s) \in \S^0 \times \Sn$, the decision maker receives rewards for the minor sub-MDPs defined as {$\Rn(s^0, \s, a^0, \a) = (r^{1}(s^0, s^1, a^0, a^1), \dots, r^{N}(s^0, s^N, a^0, a^N))$} with each component representing the reward associated with the respective minor sub-MDP $\M^n$. We employ a vector form for the rewards to later offer the flexibility for formulating fairness objectives on individual expected total discounted rewards. The major sub-MDP receives a real-valued reward, denoted by {$r^0(s^0, \s, a^0, \a)$}, which represents the platform-level performance metric. 

Collectively, the infinite-horizon M2WCMDP model is completely specified by the collection $(\M^0, \Mn)$, where $\M^0 :=(\mathcal{S}^0, \A^0, p^0, r^0, \mu^0, \gamma)$ and $\Mn:= (\Sn, \An, \Pn, \r, \vmu, \gamma)$. Under centralized coordination, the system observes the full joint state and jointly selects the major action and a resource-feasible minor action for each participant. Let $\Pi$ denote the set of all feasible stationary randomized Markov policies. A joint policy $\vpi \in \Pi$ is formally defined by a mapping from the joint state space $\S^0 \times \Sn$ to a probability distribution over the state-dependent joint action space $\A^0 \times \An(s^0, \s)$. Thus, $\vpi(a^0, \a \mid s^0, \s)$ represents the probability of taking the joint action $(a^0, \a)$ given the joint state $(s^0, \s)$. We define this feasible policy space as $\Pi := \left\{\vpi \midd \vpi(\cdot, \cdot \mid s^0, \s) \in \Delta(\A^0\times \An(s^0,\s)), \, \forall (s^0, \s) \right\}$.

The initial system state $(s^0_0, \s_0)$ is sampled from the joint initial distribution $\bm{\mu} \in \Delta(\S^0 \times \Sn)$. Using the discounted-reward criterion, the performance of any joint policy $\vpi$ is evaluated by its expected total discounted rewards. For the major sub-MDP $\M^0$ under the initial distribution $\bm{\mu}$, this is given by:
\begin{equation}\label{eq:value}
\ifInlineVersion
    V^0_0(\vpi):= \mathbb{E}_{\vpi}\left[\sum_{t=0}^\infty \gamma^t r^0(s^0_t, \s_t, a^0_t, \a_t) \, \Big|(s^0_0, \s_0)\sim \bm{\mu} \right].
\else
\begin{aligned}
    & V^0_0(\vpi):= \\
    & \hspace{1em} \mathbb{E}_{\vpi}\left[\sum_{t=0}^\infty \gamma^t r^0(s^0_t, \s_t, a^0_t, \a_t) \, \Big|(s^0_0, \s_0)\sim \bm{\mu} \right].
\end{aligned}
\fi
\end{equation}

Similarly, the expected total discounted rewards $V^n_0(\vpi)$ specific to the $n$-th sub-MDP $\gM^n$, under the initial distribution $\bm{\mu}$ and policy $\vpi$, is defined as 
\begin{equation}\label{eq:vector-component}
\ifInlineVersion
    {V}^n_0(\vpi) := \mathbb{E}_{\vpi}\left[\sum_{t=0}^\infty \gamma^t r^n(s^0_t,s^n_t, a^0_t, a^n_t)  \, \Big| \,  (s_0^0,\s_0)\sim\bm{\mu}\right], \forall n \in [N],
\end{equation}
\else
\begin{aligned}
    & {V}^n_0(\vpi) := \\
    & \hspace{1em} \mathbb{E}_{\vpi}\left[\sum_{t=0}^\infty \gamma^t r^n(s^0_t,s^n_t, a^0_t, a^n_t)  \, \Big|\, (s_0^0,\s_0)\sim\bm{\mu}\right],
\end{aligned}
\end{equation}
for all $n \in [N]$, 
\fi
where $(a^0_t, \a_t) \sim \vpi(\cdot, \cdot \mid s^0_t, \s_t)$. The vectorial expected total discounted rewards $\V_0(\vpi)$ for all minor sub-MDPs under policy $\vpi$ is defined as 
\begin{equation}\label{eq:vector}
    \V_0({\vpi}) := [{V}_0^1({\vpi}), \dots, {V}_0^N({\vpi})]^\top.
\end{equation}

Traditionally, the performance of such a centrally coordinated system is evaluated by maximizing a purely utilitarian objective. Expressed through our formulated value functions (\ref{eq:value}) and (\ref{eq:vector}), this typically involves maximizing the expected infinite-horizon discounted total reward of the platform and all participants, $V^0_0(\vpi) + \vone^\top \V_0(\vpi)$ \citep{zhu2021mean}. When no separate platform reward is modelled, this is reduced to $\vone^\top \V_0(\vpi)$ \citep{Chen2024ABC}. However, under a strictly efficiency-driven objective, the utilities of marginalized participants might be systematically sacrificed to achieve a higher system performance. Fairness concerns therefore arise naturally in various decision-making contexts and application domains, including healthcare management \citep{bertsimas2013fairness}, transportation \citep{gutjahr2018equity, Chen2024ABC}, facility location \citep{filippi2021single}, and resource allocation \citep{hassanzadeh2023sequential, bhattacharya2024active}. Fairness should be explicitly modeled to reflect equity in the distribution of outcomes across the participants in such systems \citep{aziz2024best}. This concern maps directly to our vector-valued welfare framework, where we seek to evaluate the joint performance vector $\V_0(\vpi)$ rather than only the total sum. This motivates the fairness-aware optimization problem.

\subsection{Fair Optimization Problem}\label{sec:3.2-fair-problem}
In modern welfare economics, ensuring equity among multiple participants fundamentally reduces to formulating a socially meaningful scalarization of vector-valued outcomes. Grounded in the social choice and utility theory literature \citep{weymark1981generalized, moulin1991axioms, tsang2025unified}, we resort to a social welfare function that aggregates the vectorial expected discounted rewards of all minor sub-MDPs into a scalar representation of system-level welfare criteria. This approach provides a unifying mathematical framework for representing welfare objectives suitable for optimization contexts \citep{xinying2023guide, fan2023welfare}.

Rather than prescriptively defining what constitutes a fair measure, we identify the mathematical properties required to enable tractable fair policy optimization. We formally define $\rho$-fairness, which serves as the theoretical foundation for our subsequent utilitarian reduction theorem in Section \ref{sec:4reduction}.

\begin{definition}[$\rho$-fairness] \label{def:rho-fairness}
Let $\rho:\gV\rightarrow\sR$, with $\gV\subseteq\sR^N$ as the domain of $\rho$, be a fairness measure  that satisfies:
\begin{itemize}
    \item {\textbf{Monotonicity (MN)}: For any $\vv,\vw\in\gV$ such that $\vv\geq\vw$ componentwise, $\rho[\vv]\geq\rho[\vw]$,}
    \item \textbf{Concavity (CV)}: The set $\gV$ is convex and $\forall \vv,\vw\in\gV$, and $\tau\in[0,1]$, $\rho[\tau \vv + (1-\tau)\vw]\geq \tau\rho[\vv]+(1-\tau)\rho[\vw]$,
    \item \textbf{Permutation invariance (PI)}: $\forall \vv\in\gV$ and all $Q\in\G$, both $Q\vv\in\gV$ and $\rho[\vv]=\rho[Q \vv]$,
    \item \textbf{Constant vector invariance (CVI)}: {$\forall c \in \sR$ such that $c\vone \in \gV$, $\rho[c\vone]=c$.}
\end{itemize}
\end{definition}

{MN ensures that improving participant outcomes componentwise cannot reduce social welfare.} CV reflects a preference for equality through decreasing marginal gains and guarantees that redistributing reward among participants cannot decrease welfare. CV is not only a desirable property in optimization contexts, but also a property satisfied by many commonly used fairness measures as if a fairness measure is non-concave, its value could paradoxically increase by partitioning the groups of interests into subgroups \citep{kolm1976unequal, williamson2019fairness, tsang2025unified}. Next, PI ensures that welfare does not depend on participant identities \citep{moulin1991axioms, xinying2023guide}. CVI ensures the measure is normalized to standard individual reward units, which provides a normalized economic scale when trading off against the platform's utilitarian reward.

We provide two examples on fairness measures satisfying these conditions that will be used in numerical experiments. The comparison of additional welfare measures and Definition \ref{def:rho-fairness} compliance is provided in \ref{apx:welfare-examples} to scope the boundary of the proposed welfare class.

\begin{example}[Generalized Gini Function]\label{exmp:ggf}
A typical instance is the Generalized Gini Function (GGF) \citep{weymark1981generalized, siddique2020learning}, defined by assigning ordered weights to the components of a utility vector that $\GGF_{\vw}[\vv] := \min_{\sigma \in \sS^N} \sum_{n=1}^N w_n v_{\sigma(n)}$, where $\sS^N$ is the set of all permutations of $N$ elements, and $\vw \in \Delta([N])$ satisfies $w_1 \geq w_2 \geq \dots \geq w_N$. The minimizer $\sigma^*$ effectively sorts $\vv$ in ascending order, prioritizing the lowest earners. Another prominent special case is the Rawlsian max-min fairness \citep{rawls1971egalitarian}, $\rho[\vv] = \min_{n \in [N]} \{v_n\}$, which corresponds to $w_1 = 1$ and $w_{i} = 0, \forall i >1$. Both natively satisfy Definition \ref{def:rho-fairness}. \hfill $\square$
\end{example}

\begin{example}[Certainty-Equivalent Representations of $\alpha$-Fairness]\label{exmp:ce-alpha}
The classical $\alpha$-fair welfare family \citep{mo2000fair, ju2023achieving} is parameterized by $\alpha > 0$ and takes the additive form $W_\alpha[\vv] = \sum_{n=1}^N \frac{v_n^{1-\alpha}}{1-\alpha}$, with $\sum_{n=1}^N \log(v_n)$ for $\alpha=1$ as a special instance of Nash Social Welfare \citep{fan2023welfare, mandal2022socially}. While it satisfies MN, CV and PI, it violates CVI because it is not normalized. For instance, evaluating a uniform outcome $\V = \bar{v}\vone$ yields $N \frac{\bar{v}^{1-\alpha}}{1-\alpha}$, explicitly violating CVI that requires $\rho(\bar{v}\vone) = \bar{v}$. To restore normalization, we apply the certainty-equivalent alternatives via the inverse utility function $u^{-1}$ by setting the expected utility model $\rho[\vv] = u^{-1}\left({\frac{1}{N}}\sum_{n=1}^N u(v_n)\right)$, where $u(\cdot)$ is a monotone and concave function. This leads to the alternative ordinally equivalent welfare class $\rho_\alpha[\vv] = \left(\frac{1}{N}\sum_{n=1}^N v_n^{1-\alpha}\right)^{\frac{1}{1-\alpha}}$ for $\alpha > 0, \alpha \neq 1$, and the geometric mean $\rho_1[\vv] = \left(\prod_{n=1}^N v_n\right)^{1/N}$ for Nash Social Welfare. As limits, this family recovers the utilitarian mean ($\alpha = 0$), the normalized Nash Social Welfare \citep{fan2023welfare} ($\alpha \to 1$), and the Rawlsian max-min \citep{rawls1971egalitarian} ($\alpha \to \infty$), all of which satisfy Definition \ref{def:rho-fairness}. \hfill $\square$
\end{example}

Given an initial joint state distribution $\bm{\mu} \in \Delta(\S^0 \times \Sn)$, we are seeking a policy that achieves the best tradeoff between the efficiency of the platform and the fairness of participants \citep{tsang2025unified}. 
Formally, the objective of our fairness-aware optimization problem ($\rho$-M2WCMDPs) under policy $\vpi$ is given by
\begin{equation}\label{eq:obj-rho-M2WCMDP}
    \max_{\vpi \in \Pi}G_\rho(\vpi) := V^0_0(\vpi) + \lambda \rho[\V_0(\vpi)],
\end{equation}
where $V^0_0(\vpi)$ denotes the expected total discounted rewards of the major sub-MDP, $\V_0(\vpi)$ is the vectorial counterpart for the minor sub-MDPs, both evaluated under the initial distribution $\bm{\mu}$, and $\lambda \geq 0$ is the weighting factor. We adopt this weighted-sum scalarization because it provides a transparent and widely used mechanism for balancing platform efficiency and participant welfare through a single preference parameter $\lambda$ \citep{bertsimas2012efficiency, hooker2012combining, tsang2025unified}.
{We further note that when $\rho$ has a restricted domain, i.e., $\mathcal{V}\neq \sR$, one needs to impose the following assumption to make the $\rho$-M2WCMDP well defined. 
\begin{assumption}\label{assum:domain}
    The expected discounted total reward vector generated by any feasible policy lies entirely within the domain of $\rho$. That is, $\V_0(\vpi) \in \gV, \forall \vpi \in \Pi$.
\end{assumption}
This assumption is required to ensure that $G_\rho(\vpi)$ is well defined for all $\pi\in\Pi$. Alternatively, one should consider using a different $\rho$ measure or formulating a version of $\rho$-M2WCMDP that constrains $\V_0(\vpi) \in \gV$, which unfortunately falls outside the scope of our study.}

\subsection{Two Illustrative Examples}\label{sec:3.3-example}
To contextualize the M2WCMDP framework and the tension between system efficiency and equity, we introduce two representative applications that will be extensively studied in the subsequent numerical experiments.

{\begin{example}[Machine Replacement Problem] \label{exmp:mrp} This problem is adapted from \cite{delage2010percentile, akbarzadeh2019restless}, and represents a natural degenerate case where no active major sub-MDP is present (i.e., $\S^0$ and $\A^0$ are singletons). Consider a maintenance system consisting only of a large collection of deteriorating machines (minor sub-MDPs $\Mn$), reducing the system to a classical WCMDP. For each machine $n$, the minor state $s^n \in \S^n$ captures its current deterioration level, and the binary action space $\A^n := \{0, 1\}$ corresponds to either continuing normal operation ($a^n=0$) or performing an active replacement ($a^n=1$) to reset the machine state to the newest condition. The sub-MDPs are weakly coupled through the maintenance budget $b_k$ (e.g., available repair crews or parts) that implicitly restricts the decision space for maintenance resources of type $k \in [K]$ at each time step. Formally, the feasible joint minor action space is coupled as $\An(\s) := \left\{\a \in \mybigtimes_{n=1}^N \A^n \bigd \sum_{n=1}^N d_k(a^n \mid s^n) \leq b_k, \, \forall k \in [K] \right\}$. \hfill $\square$
\end{example}}

In the context of Example \ref{exmp:mrp}, if equipment is regionally distributed in cases like electricity or telecommunication networks \citep{nadarajah2025self}, a fair policy guarantees equitable operations and replacements, thereby preventing frequent failures in specific areas that lead to unsatisfactory and unfair results for certain customers. A similar issue occurs in taxi dispatching systems (as detailed in Example \ref{exmp:taxi-dispatching}), purely efficiency-driven ride assignments may lead to the situation where some drivers consistently get profitable trips, while others systematically receive fewer opportunities \citep{dai2017balanced}. 

\begin{example}[Taxi Dispatching and Joint Pricing Control Problem]\label{exmp:taxi-dispatching}
A mobility-on-demand platform coordinates $N$ homogeneous taxis over zones $\gM := \{1, \dots, M\}$. The network connectivity is represented by a graph with an adjacency matrix $\mAdj \in \{0,1\}^{M \times M}$, where $\Adj(i, j) = 1$ indicates the presence of a {direct} movement from zone $i$ to $j$. The travel times between any two nodes $i, j \in \gM$ are given by a deterministic duration matrix $D \in \sN^{M \times M}$, which is proportional to the shortest geodesic distance between two nodes. The maximum travel duration across the network is defined as $D_{\max} := \max_{i,j \in \gM} D(i,j)$, and let $\gD := \{0, \dots, D_{\max}\}$.

The fleet current state is given by $\s = \{s^n\}_{n \in [N]}$, where the state of each taxi $n$ is a tuple $s^n := (i, d) \in \M \times \gD$. This indicates that taxi $n$ is en route to destination $i$ with $d$ remaining time steps. A new decision is made when the driver is available at the destination zone ($d= 0$). The platform monitors the {origin-destination (OD)} queue of unassigned ride requests {$\vq_t \in \sN^{M \times M}$ and offers a price-multiplier matrix $\vp_t \in \sR_{+}^{M \times M}$ at time $t$}\endnote{Here, finiteness of  state and action spaces can be obtained by assuming bounded OD queues and discretizing the support of the price multipliers. In our implementation, we let price multipliers to be continuous with the assumption that our theory from sections \ref{sec:3problem} and \ref{sec:4reduction} still holds in continuous spaces.}. To guarantee price consistency for attracted customers \citep{Chen2024ABC}, the platform uses a lagged price $\vp_{t-1}$ for order matching and the price-induced demands are non-stationary. Accordingly, we incorporate the pricing multiplier into the augmented major state $s^0_t := (\vq_t, \vp_{t-1}, t)$ at time $t$, also written as $s^0=(\vq,\vpLag,t)$ when the time index is suppressed.

Based on the joint augmented states $(s^0, \s)$, the platform chooses the major action $a^0 :=\vp$ and a routing action $a^n := (j,o) \in \M \times \{0,1\}$ for each idle taxi, where $j$ is the destination zone and $o$ indicates taxi occupancy ($o=1$ for a passenger match and $0$ for relocation). The resource budgets are of $K = M^2$ types, each referring to the number of unmatched orders $b_{ij}(s^0) := q(i, j)$ for each OD pair $(i,j)$. For taxi $n$, we define the indicator function of accepting a ride from $i$ to $j$ as $d_{ij}(a^{n}\mid s^{n}) := \sI\{s^n = (i,0), a^{n} = (j, 1) \}$. The feasible joint minor action set is $\An(s^0,\s):=\left\{\a\in\mybigtimes_{n=1}^N\A^n(s^n) \bigd \sum_{n \in [N]} \sI\{s^n = (i,0), a^{n} = (j, 1)\} \leq q(i, j), \, \forall i, j \in \gM \right\}.$ Thus, taxis have local routing actions, but their passenger-matching decisions cannot be selected independently because they compete for the same finite set of ride requests.

The state transitions involve the evolution of both the fleet and the demand arrivals. Each idle taxi ($d=0$) takes a routing action, while in-transit taxis decrement their timers deterministically, i.e.,
\begin{equation}
    s^n_{t+1} :=
    \begin{cases}
        (j, D(i, j)) & \text{if } d = 0, a^n = (j ,\cdot)\\
        (i, d - 1) & \text{if } d \geq 1 \\
    \end{cases}, \quad \forall n.
\end{equation}

The customer request queue evolves stochastically according to $q_{t+1}(i,j) = n_{t+1}(i,j)$ that initializes new arrivals $n_{t+1}(i,j)$ drawn from independent Poisson distributions with per-OD rates $\theta_{t+1}(i,j)=\theta^{\text{base}}_{t+1}(i,j) \psi(p_t(i,j))$ for all $i,j \in \gM$, where $\psi(\cdot)$ is a monotonic price-response function and the base rate $\bm{\theta}^{\text{base}}_t$ is estimated from historical data.

The system objective balances platform service fees, driver income, and operational travel costs. Let $c_f \geq0$ the fixed fare of a served trip, $c_r \geq0$ denote the variable per-unit ride price, $c_o \geq0$ denote the variable per-unit operational cost for driving, and $\beta \in (0, 1)$ be the platform service rate. The immediate reward to the driver $n$ in state $(i, d)$ when taking action $a^n$ is given by
\begin{equation}
\ifInlineVersion
    r(s^0, s^n, a^n) :=
    \begin{cases}
    (1-\beta) \left[c_f+c_r\pLag(i,j)D(i,j)\right] - c_o D(i, j) & \text{if } d = 0, a^n = (j, 1),\\
    - c_o D(i, j) & \text{if } d = 0, a^n = (j, 0),\\
    0 & \text{if } d \geq 1.\\
    \end{cases}
\else
\begin{aligned}
    & r(s^0, s^n, a^n) :=\\
    & \begin{cases}
    \begin{aligned}
    & (1-\beta) \left[c_f+c_r\pLag(i,j)D(i,j)\right] - c_o D(i, j) & \\
    & \hspace{9.4em}\text{if } d = 0, a^n = (j, 1), &\\
    & - c_o D(i, j) \hspace{4.65em} \text{if } d = 0, a^n = (j, 0), &\\
    & 0 \hspace{8.85em} \text{if } d \geq 1. &
    \end{aligned}
    \end{cases}
\end{aligned}
\fi
\end{equation}

The platform’s net reward is the total service-fee revenue collected from all matched rides, i.e., $r^0(s^0, \s, \a) := \, \beta \sum_{i,j \in \M}  \sum_{n \in [N]} \sI\{s^n = (i,0), a^{n} = (j, 1)\} \left[c_f+c_r\pLag(i,j)D(i,j)\right]$. The objective is to maximize the trade-off between the efficiency of the platform and the fairness of the participants following (\ref{eq:obj-rho-M2WCMDP}). \hfill $\square$
\end{example}

\section{Utilitarian Reduction under Symmetry}\label{sec:4reduction}
This section develops the structural reduction for symmetric M2WCMDPs. In Section \ref{sec:rho_problem}, we start by formally defining the symmetric M2WCMDPs (Definition \ref{def:symmetry-M2WCMDP}) and prove that an optimal policy of the symmetric $\rho$-M2WCMDP problem can be obtained by solving the utilitarian M2WCMDP using “permutation-invariant” policies. Section \ref{sec4.2:count_mdp} proposes an exact count aggregation method to further simplify the model. Finally, Section \ref{sec:3.3count-examples} instantiates the count formulation for the machine replacement problem of Example \ref{exmp:mrp} and taxi application of Example \ref{exmp:taxi-dispatching}. The resulting count aggregation MDP provides the optimization model addressed in Section \ref{sec:5policy_nn}.

\subsection{Symmetric \texorpdfstring{$\rho$}{rho}-M2WCMDPs Problem Reduction} \label{sec:rho_problem}
We first establish the formal conditions for a M2WCMDP to be considered symmetric.

\begin{definition}[Symmetric M2WCMDP]\label{def:symmetry-M2WCMDP}
A M2WCMDP is symmetric if
\begin{enumerate}[leftmargin=*]
    \setlength{\parskip}{3pt}
    \item \textbf{(Identical Minor Sub-MDPs)} Each minor sub-MDP is identical, i.e.,  $\S^n = \S$, $\A^n = \A$, $\P^n = \P$, $\R^n = \R$, $\mu^n = \mu$, for all $n\in [N]$, and for some $(\S, \A, \P, \R, \mu, \gamma)$ tuple.
    \item \textbf{(Permutation-Invariant Initial Distribution)}  For any permutation operator $Q \in \G$, the probability of selecting the permuted initial minor state $Q \so$ at major state $\bar{s}^0_0$ is equal to that of selecting $\so$, i.e., $\bm{\mu}(\bar{s}^0_0, \so) = \bm{\mu}(\bar{s}^0_0, Q \so), \forall (\bar{s}^0_0, \so) \in \S^0 \times \Sn, \forall Q \in \G$.
    \item \textbf{(Permutation-Invariant Participants Influence on Major sub-MDP)}  The probability of transitioning from $s^0$ to $\tilde{s}^0$ under action $a^0$ when the participants apply $(\s,\a)$ satisfies $p^0(\tilde{\s}^0 \mid s^0, \s, a^0, \a) = p^0(\tilde{\s}^0 \mid s^0, Q \s, a^0, Q \a)$ for all $Q \in \G$. The reward function of the major sub-MDP satisfies $r^0(s^0,\s,a^0,\a)=r^0(s^0,Q\s,a^0,Q\a)$ for all $Q \in \G$.
\end{enumerate}
\end{definition}

The conditions for a symmetric M2WCMDP define a class of problems that is invariant under any permutation of sub-MDP indexing. This invariance naturally leads to the concept of \textit{permutation-invariant} policies (see Definition 1 in \cite{cai2021efficient}).

\begin{definition}[Permutation Invariant Policy]\label{def:permutation-invariant-policy}
    A Markov stationary policy $\vpi$ is said to be permutation-invariant if the probability of selecting action $(a^0,\va)$ in state $(s^0,\vs)$ is equal to that of selecting the permuted action $Q\va$ in the permuted state $Q\vs$, for all $Q \in \G$. Formally, this can be expressed as $\vpi(a^0,\a\mid s^0,\s) = \vpi(a^0,Q\a\mid s^0,Q\s)$, for all $Q \in \G$, $\vs \in \Sn$ and $\va \in \An$. 
\end{definition}

This naturally brings us to the question of  whether this class of policies is optimal for symmetric M2WCMDPs. The key observation is that symmetrizing a policy preserves the platform value and the mean participant value while equalizing the participant-value vector. The properties PI and CV of $\rho$ imply that this equalization cannot decrease welfare, and CVI identifies the welfare of the resulting constant vector with its common component. This argument connects the fairness-aware problem to the utilitarian mean in Definition \ref{def:utilitarian}.

\begin{definition}[Utilitarian Approach]\label{def:utilitarian}
    The utilitarian approach is the unique $\rho$-fair (Definition \ref{def:rho-fairness}) linear mapping on $\sR^N$, given by $$\rho_U[\vv] := \frac{1}{N}\sum_{n=1}^N v_n =: \bar{v}.$$
\end{definition}

\begin{remark}
    The uniqueness of $\rho_U$ follows from the fact that any $\rho$-fair linear mapping on $\sR^N$ can be represented as $\vw^\top \vv$. Permutation invariance restricts $\vw$ to the set of constant vectors ($\vw = c\mathbf{1}$ for some $c \in \sR$), while the constant vector invariance property forces $c = 1/N$.
\end{remark}

Additionally, the PI property implies that averaging the occupancy measure over all label permutations equalizes participant values. Using the stationary-policy construction associated with a feasible discounted occupancy measure (Theorem 6.9.1 from \cite{puterman2014markov}), we construct a permutation-invariant policy from any policy, resulting in uniform state-value representation (Lemma \ref{thm:bar-policy}).

\begin{lemma}[Uniform State-Value Representation]\label{thm:bar-policy}
    If a M2WCMDP is symmetric (Definition \ref{def:symmetry-M2WCMDP}), then for any policy $\vpi$, there exists a corresponding permutation-invariant policy $\bpi$ such that the vector of expected total discounted rewards for all sub-MDPs under $\bpi$ is equal to the average of the expected total discounted rewards for each sub-MDP, i.e., $\V_0(\bpi) = \bar{V}_0(\vpi) \vone$, where $\bar{V}_0(\vpi) := \frac{1}{N} \sum_{n=1}^N V_0^n(\vpi)$, and $V^0_0(\bpi) = V^0_0(\vpi)$.
\end{lemma}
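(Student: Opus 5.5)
The approach is to symmetrize the discounted occupancy measure of $\vpi$ over all label permutations and then recover a stationary policy from the averaged measure via Theorem 6.9.1 of \cite{puterman2014markov}. Concretely, define the discounted state-action occupancy measure
\[
x_{\vpi}(s^0,\s,a^0,\a) := \sum_{t=0}^\infty \gamma^t \Prob_{\vpi}\bigl(s^0_t=s^0,\s_t=\s,a^0_t=a^0,\a_t=\a\bigr),
\]
with the initial state drawn from $\bm{\mu}$. It satisfies the standard flow-balance constraints
\[
\sum_{a^0,\a} x(\tilde s^0,\tilde\s,a^0,\a) = \bm{\mu}(\tilde s^0,\tilde \s) + \gamma \sum x(s^0,\s,a^0,\a)\,p^0(\tilde s^0\mid\cdot)\,\Pn(\tilde\s\mid\cdot),
\]
and it is supported on feasible pairs $\a\in\An(s^0,\s)$. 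For $Q\in\G$, let $x^Q(s^0,\s,a^0,\a):=x_{\vpi}(s^0,Q\s,a^0,Q\a)$, and set $\bar x := \frac{1}{N!}\sum_{Q\in\G} x^Q$.

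\textbf{Step 1: each $x^Q$ is again feasible.} Under Definition \ref{def:symmetry-M2WCMDP}, the following invariances hold.
\begin{itemize}
\item The constraint set (\ref{eq:constraint}) is permutation invariant, because $d_k$ is common to all participants and the constraint is a sum over $n$. Hence $\a\in\An(s^0,\s)$ if and only if $Q\a\in\An(s^0,Q\s)$.
\item The product kernel $\Pn$ is invariant under simultaneous permutation of $(\tilde\s,\s,\a)$, because the minor sub-MDPs are identical.
\item $p^0$ is invariant by condition 3, and $\bm{\mu}$ is invariant by condition 2.
\end{itemize}
Substituting $Q\tilde\s$ into the flow-balance constraints and reindexing the sums over $\s$ and $\a$ by the bijection $Q$ shows that $x^Q$ satisfies the same constraints. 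Since the constraints are linear, $\bar x$ is feasible too, and $\bar x$ is invariant under every permutation.

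\textbf{Step 2: construct $\bpi$.} Define
\[
\bpi(a^0,\a\mid s^0,\s):=\frac{\bar x(s^0,\s,a^0,\a)}{\sum_{a'^0,\a'}\bar x(s^0,\s,a'^0,\a')}
\]
whenever the denominator is positive, and otherwise take any fixed permutation-invariant feasible choice (for example, the uniform distribution over $\A^0\times\An(s^0,\s)$). Theorem 6.9.1 of \cite{puterman2014markov} gives $x_{\bpi}=\bar x$. Permutation invariance of $\bpi$ in the sense of Definition \ref{def:permutation-invariant-policy} follows directly from the invariance of $\bar x$ and of the fallback rule. Feasibility holds because $\bar x$ is supported on feasible actions.

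\textbf{Step 3: values.} Write each value as a linear functional of the occupancy measure:
\[
V^0_0(\vpi)=\langle x_{\vpi}, r^0\rangle, \qquad V^n_0(\vpi)=\langle x_{\vpi}, r(s^0,s^n,a^0,a^n)\rangle .
\]
Because $r^0$ is permutation invariant, $\langle x^Q,r^0\rangle = \langle x_{\vpi},r^0\rangle$ for every $Q$, and therefore $V^0_0(\bpi)=V^0_0(\vpi)$. For participant $n$, we have $\langle x^Q, r_n\rangle = V_0^{\sigma_Q(n)}(\vpi)$, where $\sigma_Q$ is the index map induced by $Q$. Averaging over $\G$, each index $m$ appears as $\sigma_Q(n)$ for exactly $(N-1)!$ permutations. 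This gives $V^n_0(\bpi)=\frac1N\sum_m V^m_0(\vpi)=\bar V_0(\vpi)$ for every $n$, i.e. $\V_0(\bpi)=\bar V_0(\vpi)\vone$.

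\textbf{Main obstacle.} The delicate step is Step 1: getting the direction of the permutation right in the reindexing. One must check that $x^Q$, rather than an inverse version, satisfies the balance equations, using that $\Pn(Q\tilde\s\mid s^0,Q\s,a^0,Q\a)=\Pn(\tilde\s\mid s^0,\s,a^0,\a)$. One must also make sure that the joint $(s^0,\s)$ transition factorizes as $p^0\cdot\Pn$, which the model's conditional-independence specification supports, so that both invariances apply simultaneously.
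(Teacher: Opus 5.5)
Your proposal is correct and follows essentially the same route as the paper: average the discounted occupancy measure over all $N!$ label permutations, recover a stationary policy from the average via Puterman's Theorem 6.9.1, and read off $V^0_0$ and $\V_0$ as linear functionals of that occupancy measure. The differences are minor. You verify the flow-balance equations for $x^Q$ directly, whereas the paper uses a trajectory-level induction for the permuted policy $\ppi$ (Lemma \ref{thm:piQVQ}). You also explicitly check that $\bpi$ is permutation-invariant, including a symmetric fallback on zero-occupancy states, which the paper leaves implicit.
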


The proof is detailed in \ref{apx:bar-policy}. Furthermore, one can use the above lemma to show that the optimal policy for the $\rho$-M2WCMDP problem (\ref{eq:obj-rho-M2WCMDP}) under symmetry can be recovered from solving the problem with the utilitarian approach. Our main result is presented in the following theorem. See \ref{apx:thm:eqv} for a detailed proof.

\begin{theorem}[Utilitarian Reduction] \label{thm:eqv}
    Under Assumption \ref{assum:domain}, for a symmetric M2WCMDP, let $\Pi^*_{U,\mymbox{PI}}$ be the set of optimal policies for the utilitarian approach that is permutation-invariant, then $\Pi^*_{U,\mymbox{PI}}$ is necessarily non-empty and all $\vpi^*_{U,\mymbox{PI}}\in\Pi^*_{U,\mymbox{PI}}$ satisfy $G_\rho(\vpi_{U,\mbox{\tiny{PI}}}^{*})=\max \limits_{\vpi \in \Pi}  G_\rho(\vpi).$
\end{theorem}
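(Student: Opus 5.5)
The plan has three steps: show the utilitarian problem attains its optimum on a permutation-invariant policy, prove the upper bound $G_\rho(\vpi) \le G_\rho(\bpi)$ by symmetrization, and conclude by comparing values. Here the utilitarian objective is $G_U(\vpi) := V^0_0(\vpi) + \lambda \bar{V}_0(\vpi)$, i.e.\ (\ref{eq:obj-rho-M2WCMDP}) with $\rho = \rho_U$.

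\textbf{Step 1: non-emptiness of $\Pi^*_{U,\mymbox{PI}}$.} The utilitarian objective is linear in the rewards. It is the expected discounted sum of the scalar reward $r^0 + \frac{\lambda}{N}\sum_n r^n$ in a finite MDP with feasible action sets $\A^0\times\An(s^0,\s)$. Standard theory (e.g.\ \cite{puterman2014markov}) therefore gives an optimal stationary policy $\vpi^\star$, possibly deterministic, and $G_U$ attains its maximum over $\Pi$. Applying Lemma \ref{thm:bar-policy} to $\vpi^\star$ yields a permutation-invariant $\bpi^\star$ with $V^0_0(\bpi^\star)=V^0_0(\vpi^\star)$ and $\V_0(\bpi^\star)=\bar V_0(\vpi^\star)\vone$. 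Hence $G_U(\bpi^\star)=G_U(\vpi^\star)$, so $\bpi^\star \in \Pi^*_{U,\mymbox{PI}}$.

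\textbf{Step 2: the key inequality.} For an arbitrary $\vpi\in\Pi$, take $\bpi$ from Lemma \ref{thm:bar-policy}. We have
\[
\bar V_0(\vpi)\vone=\frac{1}{|\G|}\sum_{Q\in\G} Q\,\V_0(\vpi).
\]
This holds because averaging over all permutations sends each coordinate to the mean. By Assumption \ref{assum:domain}, $\V_0(\vpi)\in\gV$. PI then gives $Q\V_0(\vpi)\in\gV$ with $\rho[Q\V_0(\vpi)]=\rho[\V_0(\vpi)]$. Convexity of $\gV$ puts the average in $\gV$, and the concave Jensen inequality gives $\rho[\bar V_0(\vpi)\vone]\ge \rho[\V_0(\vpi)]$. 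CVI gives $\rho[\bar V_0(\vpi)\vone]=\bar V_0(\vpi)$. Combining these,
\[
G_\rho(\vpi)\le V^0_0(\vpi)+\lambda\bar V_0(\vpi)=G_U(\vpi),
\]
using $\lambda\ge 0$.

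\textbf{Step 3: conclusion.} For any permutation-invariant policy $\vpi_{\mathrm{PI}}$ I want $\V_0(\vpi_{\mathrm{PI}})$ to be a constant vector, so that $G_\rho(\vpi_{\mathrm{PI}})=G_U(\vpi_{\mathrm{PI}})$ by CVI. I would prove this by a relabeling argument. Under a permutation-invariant policy in a symmetric M2WCMDP (identical minors, permutation-invariant $\bm\mu$, $p^0$, $r^0$), the joint law of the trajectory $(s^0_t,\s_t,a^0_t,\a_t)_t$ equals the law of its relabeled version $(s^0_t,Q\s_t,a^0_t,Q\a_t)_t$. This follows by induction on $t$, using the product transition kernel and the invariance of $\vpi$. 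Consequently $V^n_0=V^{m}_0$ for all $n,m$.

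Then for any $\vpi^*_{U,\mymbox{PI}}\in\Pi^*_{U,\mymbox{PI}}$ and any $\vpi\in\Pi$,
\[
G_\rho(\vpi)\le G_U(\vpi)\le G_U(\vpi^*_{U,\mymbox{PI}})=G_\rho(\vpi^*_{U,\mymbox{PI}}).
\]
Taking the supremum over $\vpi$ gives $G_\rho(\vpi^*_{U,\mymbox{PI}}) = \max_{\vpi\in\Pi} G_\rho(\vpi)$, and the max is attained.

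\textbf{Main obstacle.} Step 1 and Step 2 are immediate once Lemma \ref{thm:bar-policy} is available. The step needing the most care is the invariance claim in Step 3. It requires checking that the feasible set satisfies $\An(s^0,Q\s)=Q\An(s^0,\s)$, which holds because the constraints are sums over $n$ of identical $d_k$. It also requires checking that the initial distribution, the transition kernel and the policy are jointly invariant, so that the induction goes through.
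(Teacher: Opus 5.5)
Your proposal is correct and follows essentially the paper's argument: you symmetrize a utilitarian optimum via Lemma \ref{thm:bar-policy} to obtain non-emptiness, and you use PI, CV (Jensen) and CVI to get $\rho[\vv]\le\rho_U[\vv]$, hence $G_\rho\le G_U$. Your closing chain $G_\rho(\vpi)\le G_U(\vpi)\le G_U(\vpi^*_{U,\mymbox{PI}})=G_\rho(\vpi^*_{U,\mymbox{PI}})$, taken over arbitrary $\vpi$, is slightly tidier than the paper's sandwich in two ways: it does not presuppose that a $\rho$-optimal policy exists, and your relabeling step (the paper's Lemma \ref{thm:piQVQ} with $\vpi^Q=\vpi$) shows explicitly that every element of $\Pi^*_{U,\mymbox{PI}}$ has a constant value vector, not just the constructed $\bpi^*_{U}$, which is what the theorem's ``all'' requires.
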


This theorem simplifies solving the $\rho$-M2WCMDP problem by reducing it to an equivalent utilitarian problem, showing that any permutation-invariant policy that is utilitarian optimal is also optimal for the original $\rho$-M2WCMDP problem under the utilitarian reduction that takes the form
\begin{equation}\label{eq:obj-utilitarian-reduced}
\ifInlineVersion
    \mbox{(Utilitarian approach)} \quad\max_{\vpi \in \Pi}G_{\ut}(\vpi) := V^0_0(\vpi) + \lambda \bar{V}_0(\vpi),
\else
\begin{aligned}
    &\mbox{(Utilitarian approach)} \\
    & \hspace{2.5em} \quad\max_{\vpi \in \Pi}G_{\ut}(\vpi) := V^0_0(\vpi) + \lambda \bar{V}_0(\vpi),
\end{aligned}
\fi
\end{equation}
where $\bar{V}_0(\vpi) := (1/N)\sum_{n=1}^N V^n_0(\vpi)$. Therefore, we can restrict the search for optimal policies to the class of permutation-invariant policies.

\subsection{The Count Aggregation MDP} \label{sec4.2:count_mdp}
Under Definition \ref{def:symmetry-M2WCMDP}, the labelled M2WCMDP is invariant under permutations of the minor indices. This invariance property allows us to further simplify the problem using a more compact count-based representation motivated by the symmetry simplification representation in \cite{gast2022reoptimization} for the reduced utilitarian objective.

Let $\phi=(f, g)$ denote the aggregation mapping. The mapping $f: \Sn \rightarrow \{0, \dots, N\}^{|\gS|}$ maps from the labelled minor state $\s$ to a count state $\x =f(\s)$, where $[f(\s)](s):=\sum_{n=1}^N \sI\{s^n=s\}, \forall s\in \gS$. The corresponding count state space is
\begin{equation*}
    \cS := \left\{\x\in \{0, \dots, N\}^{|\gS|} \midd \sum_{s\in\gS}x(s)=N \right\}.
\end{equation*}

Thus, $x(s)$ denotes the number of minor sub-MDPs in the $s$-th state. Similarly, for each feasible state-action pair $(s,a) \in \S \times \A$, the mapping  $g: \Sn \times \An \rightarrow \{0, \dots, N\}^{|\S| \times |\A|}$ is defined by $[g(\s,\a)](s, a) := \sum_{n=1}^N \sI\{s_n = s, a_n = a\}$. For a major state $s^0 \in \S^0$ and a count state $\x \in \cS$, the feasible count action set is
\begin{equation}\label{eq:feasible-action-action}
\ifInlineVersion
    \cA(s^0, \x) := \left\{ \u \in \{0, \dots, N\}^{|\S| \times |\A|} \midd
    \begin{aligned}
        & \sum_{a\in\A} u(s,a) = x(s), \forall s \in \S\\
        & \sum_{s\in\S}\sum_{a\in\A} d_k(a \mid s) u(s,a) \le b_k(s^0), \forall k \in [K]
    \end{aligned}
    \right\}.
\else
\begin{aligned}
    & \cA(s^0, \x) := \bigl\{ \u \in \{0, \dots, N\}^{|\S| \times |\A|} \mid \\
    & \hspace{1.5em} \sum_{a\in\A} u(s,a) = x(s), \forall s \in \S, \\
    & \hspace{1.5em} \sum_{s\in\S}\sum_{a\in\A} d_k(a \mid s) u(s,a) \le b_k(s^0), \forall k \in [K] \bigr\}.
\end{aligned}
\fi
\end{equation}

Here, $u{(s, a)}$ indicates the number of minor MDPs at $s$-th state that perform $a$-th action. In particular, for any labelled state $\s$ and any feasible labelled action $\a\in\An(s^0,\s)$, if $f(\s)=\x$, then $g(\s,\a)\in\cA(s^0,\x)$. We can then formulate the count aggregation MDP (Definition \ref{def:count-MDP}).

\begin{definition}[Count Aggregation MDP]\label{def:count-MDP}
The count aggregation MDP $\cM := (\gM^0_\phi, \gM^{(N)}_\phi)$ derived from M2WCMDP $(\gM^0, \gM^{(N)})$ is composed of:
    \begin{enumerate}
        \item major aggregation sub-MDP $\gM^0_\phi$ with elements $(\S^0, \A^0, p_\phi^0, r_\phi^0, \mu^0, \gamma)$ derived from major sub-MDPs $(\S^0, \A^0, p^0, r^0, \mu^0, \gamma)$. \label{def:majorMDP}
        \item minor aggregation sub-MDPs $\gM^{(N)}_\phi$ with elements $(\cS, \cA, \cP, \bR, \cmu, \gamma)$ derived from minor sub-MDPs $(\Sn, \An, \Pn, \Rn, \vmu, \gamma)$.
    \end{enumerate}
\end{definition}

Both representations lead to the same optimization problem as established in \cite{gast2022reoptimization} when the objective is utilitarian. Using the count representation, the mean expected total discounted reward $\bar{V}_0({\vpi})$ for minor sub-MDPs $\Mn$ with a permutation-invariant distribution $\bm{\mu}$ under utilitarian reduction (Theorem \ref{thm:eqv}) is then equivalent to the expected total discounted mean reward $\bar{V}_0({\vpi_\phi})$ for the count aggregation MDP $\cM$ given the policy $\cpi: (\S^0, \cS) \rightarrow \Delta(\A^0, \cA(s^0, \x))$ under aggregation mapping with initial distribution $\bm{\mu}_f \in \Delta(\S^0 \times \Sn_f)$, i.e., $\bV(\vpi) = \frac{1}{N}\sum_{n=1}^N V^{n}_{0}(\vpi) = \bV{(\cpi})$. Similarly, for the major sub-MDP, $V_0^0(\vpi) = V_0^0{(\cpi})$. The objective in Equation (\ref{eq:obj-utilitarian-reduced}) is therefore reformulated as
\begin{equation}\label{eq:obj-utilitarian-count-reduced} 
    \max_{\vpi_\phi}G^{\phi}_{\ut}(\vpi_\phi) := V^0_0(\vpi_\phi) + \lambda \bar{V}_0(\vpi_\phi),
\end{equation}
\ifInlineVersion
where 
\begin{equation*}
    V^0_0(\vpi_\phi):=\mathbb{E}_{\vpi_\phi}\left[\sum_{t=0}^\infty \gamma^t r^0_\phi(s^0_t, \x_t, a^0_t, \u_t)\bigd (s^0_0, \x_0)\sim \bm{\mu}_f \right],
\end{equation*}
and
\begin{equation*}
    \bar{V}_0(\vpi_\phi) := \E_{\cpi}\left[\sum_{t=0}^{\infty} \gamma^t \bar{r}_{\phi}(s^0_t, \x_t, a_t^0, \u_t) \bigd (s^0_0, \x_0) \sim \bm{\mu}_{f} \right].
\end{equation*}
\else
where $V^0_0(\vpi_\phi):=\mathbb{E}_{\vpi_\phi}\left[\sum_{t=0}^\infty \gamma^t r^0_\phi(s^0_t, \x_t, a^0_t, \u_t)\bigd (s^0_0, \x_0)\sim \bm{\mu}_f \right]$, and $\bar{V}_0(\vpi_\phi) := \E_{\cpi}\left[\sum_{t=0}^{\infty} \gamma^t \bar{r}_{\phi}(s^0_t, \x_t, a_t^0, \u_t) \bigd (s^0_0, \x_0) \sim \bm{\mu}_{f} \right]$.
\fi

Here, the utilitarian reward aggregation mapping for minor sub-MDPs is $\bar{r}_{\phi}(s^0, \x, a^0, \u) := \frac{1}{N} \sum_{s \in \S, a \in \A} u(s, a) r(s^0,s,a^0,a)$. Similarly, the reward aggregation mapping for the major reward is $r^0_{\phi}(s^0, \x, a^0, \u) = r^0(s^0, \bar{\s}, a^0, \bar{\a})$ for any arbitrary labelled state-action pair $(\bar{\s}, \bar{\a})$ satisfying $f(\bar{\s}) = \x$ and $g(\bar{\s}, \bar{\a}) = \u$. Definition \ref{def:symmetry-M2WCMDP} ensures that this value is independent of the representative labelled pair. The count transition kernel and initial distribution of $\gM_{\phi}$ are derived in \ref{apx:exact-form-count-M2WCMDP}. 

By Theorem \ref{thm:eqv}, an optimal policy for the count aggregation MDP, after permutation-invariant lifting (see Equation (\ref{eq:lift-to-labelled-pi-policy}) in \ref{apx:dual-lp}), is optimal for the original symmetric $\rho$-M2WCMDP. An exact linear programming (LP) method is provided to solve the utilitarian-reduced count aggregation M2WCMDP in \ref{apx:dual-lp}. However, due to the curse of dimensionality, solving this LP is computationally prohibitive despite the reduction. We thus design a count-proportion-based learning approach in Section \ref{sec:5policy_nn}.

\subsection{Examples of Count Reformulation}\label{sec:3.3count-examples}
{We now detail the count reformulation to the MRP and TDPP applications in Section \ref{sec:3.3-example}. 

\begin{example}[Count Reformulation of the MRP Example \ref{exmp:mrp}]\label{exmp:mrp-count}
For the binary-action, single-resource MRP, let $x(s)$ denote the number of machines in deterioration state $s$, and let $u(s)$ denote the number replaced. The count state and feasible count action are $\cS :=\{\x\in\sN^{S} \mid \sum_{s\in\gS}x(s)=N\}$, and $ \cA(\x) := \{\bm u\in\sN^{S}\mid 0\le u(s)\le x(s),\ \sum_{s\in\gS}u(s)\le b\}.$ \hfill $\square$
\end{example}}

\begin{example}[Count Reformulation of the Taxi Example \ref{exmp:taxi-dispatching}]\label{exmp:taxi-dispatching-count}
For the taxi application, the count action space is $\cS:=\left\{\x\in\sN^{M\times|\gD|}\midd \sum_{i\in\M}\sum_{d\in\gD}x(i,d)=N\right\}$, where $x(i,d)$ is the number of taxis that are en route to zone $i$ and will be available after $d$ time steps. 

Let $\u :=(\u^1, \u^2)$, where $u^1_t(i,j)$ and $u^2_t(i,j)$ denote, respectively, the number of idle taxis relocated and matched from origin zone $i \in \gM$ to target zone $j \in \gM$. For a major state $s^0=(\vq,\vpLag,t)$, the feasible count action set is $\cA(s^0,\x) :=\left\{(\u^1,\u^2) \in\sN^{M\times M} \times\sN^{M\times M}\midd \right.$ $\left. \sum_{j\in\M}\bigl(u^1(i,j)+u^2(i,j)\bigr)=x(i,0), \forall i\in\M; \, \, u^2(i,j)\leq s^0(i,j), \forall i,j\in\M \right\}$. For $d \geq 1$, the only feasible individual action is the wait-in-place action $\aWait$. The corresponding count therefore must satisfy $u((i,d),\aWait)=x(i,d)$ automatically.

With the convention $\x_t(i, D_{\max}+1):= 0$ and $D(j, i) = 0$ if and only if $j =i$, the in-transit counts evolve according to $\x_{t+1}(i, d) := \x_{t}(i, {d+1}) +  {\sum_{j \in \M} \left(u^1_{t}(j, i) + u^2_t(j ,i) \right)\mathbb{I}\{D(j, i) = d\}},$ for all $d \in \gD$ and all $ i \in \gM$.

Under the utilitarian reduction (Theorem \ref{thm:eqv}) and count aggregation $\phi$, the platform's reward is a fraction of all ride rewards $r^0_{\phi}(s^0, \x, \u) := \beta \sum_{i \in \M, j \in \M} \left[c_f+c_r\pLag(i,j)D(i,j)\right] u^2(i,j)$. The mean participant reward is determined by the total successful ride assignments to {locally available taxis} and then normalized by $N$ to get the average per driver reward, i.e., $\bar{r}_{\phi}\left(s^0, \x, \u\right) := \frac{1}{N} \left[(1-\beta) \sum_{i,j \in \M} \left[c_f+c_r\pLag(i,j)D(i,j)\right] u^2(i,j) - c_o \sum_{i,j \in \M} D(i,j) \left(u^1(i,j) + u^2(i,j)\right)\right]$. The objective is formulated by (\ref{eq:obj-utilitarian-count-reduced}). \hfill $\square$
\end{example}

\section{Count-Proportion-Based Deep Reinforcement Learning} \label{sec:5policy_nn}
For large-scale optimization and when the model $(\gM^0, \Mn)$ is unknown, we develop a count-proportion-based deep reinforcement learning (CP-DRL) method for approximately solving the utilitarian-reduced count M2WCMDP in (\ref{eq:obj-utilitarian-count-reduced}). In Section \ref{sec:policy-NN}, we introduce the architecture of our stochastic policy network, which is explicitly designed to search within {a scalable parameterized subclass of} the permutation-invariant policies. The fundamental challenge in this setting is bridging the gap between continuous deep neural network outputs and the combinatorial constraints of the feasible discrete count action space. Therefore, in Section \ref{sec:sampling}, we introduce a parametrized priority-based sampling procedure. This ensures operational feasibility while maintaining sufficient stochasticity for exploration.

\subsection{Stochastic Policy Neural Network} \label{sec:policy-NN}
One key property of the count aggregation MDP is that the dimensions of the state space $\cS$ and the action space $\cA$ are independent of the number $N$ of minor sub-MDPs, although the cardinalities of the feasible count-action sets still depend on $N$. For $\x\in\cS$, we define the count minor state proportion as $\dx := \x/N \in [0, 1]^{|\gS|}$ to further simplify the analysis and eliminate the explicit dependence on $N$. The role of the stochastic policy network is to convert the state $(s_0, \bar{\x})$ to a distribution over $(a^0,\u)$ from which a sample is drawn.  We represent the stochastic policy through a chain-rule decomposition, i.e., $\vpi(a^0,\u\mid s^0,\x)=\vpi^{0}(a^0\mid s^0,\x)\vpi^{(N)}(\u\mid s^0,\x,a^0)$. The conditional relationship between $a^0$ and $\u$ is implicitly represented within the joint neural-network architecture. 

To deal with the exponential size of the support of $\u \in \cA$ and with the challenges associated to the state dependent constraints (\ref{eq:feasible-action-action}), we represent the conditional minor policy $\vpi^{(N)}(\u\mid s^0,\x,a^0)$ using an intermediate parametrization $(\tilde{\vp}, \widetilde{\bm{W}}, \bm{U})$, where $(\tilde{\vp},\widetilde{\bm W},\bm U)$ are the state-dependent network outputs jointly generated with the major-action output $a^0$ and used for a priority-based sampling procedure $\vpi_{\tilde{\vp}, \widetilde{\bm{W}}, \bm{U}}(\u\mid s^0,\x)$ to be described next. The neural networks for producing $a_0$ and $(\tilde{\vp}, \widetilde{\bm{W}}, \bm{U})$ are combined to reduce the number of learned parameters and share a common representation of the state $(s^0,\x)$. {Since count-state proportions alone can erase scale information needed to evaluate resource capacities, the raw count tuple $(s^0,\x)$ is passed explicitly to the final integer sampler. Figure \ref{fig:stochastic-policy-network} presents a diagram representing our proposed implementation.

\begin{figure}\FIGURE
    {\includegraphics[width=0.8\textwidth]{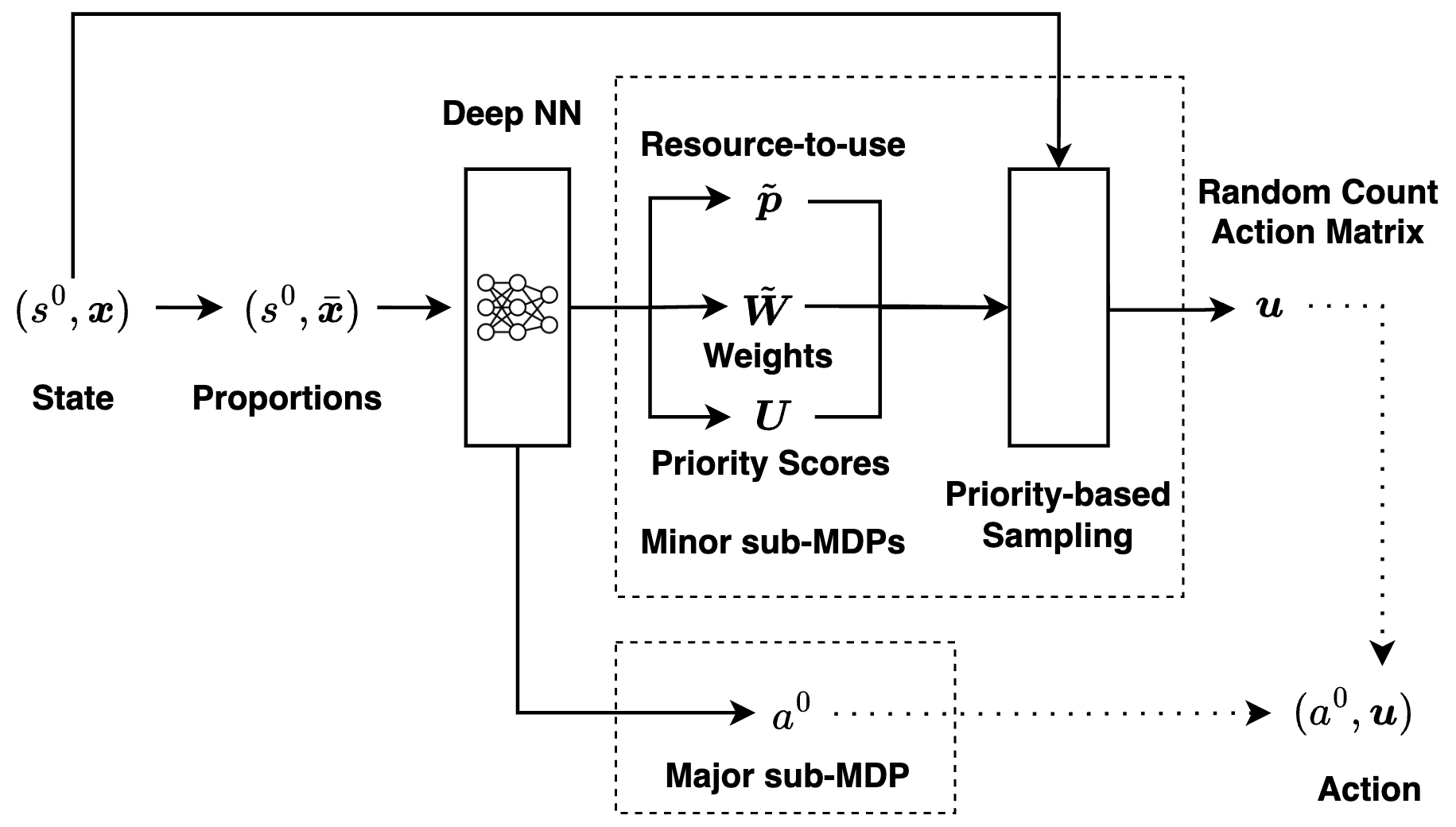}}
    {CP-based Stochastic Policy Neural Network \label{fig:stochastic-policy-network}} 
    {} 
\end{figure}

\subsection{Parameterized Priority-based Sampling Procedure} \label{sec:sampling}
The priority-based sampling procedure $\vpi_{\tilde{\vp}, \widetilde{\bm{W}}, \bm{U}}(\u\mid s^0,\x)$ first employs a parameter {$\tilde{\vp}\in [0, 1]^K$} controlling the proportion of available resources used by the state-action count plan. It then exploits a predefined partition of the action space $\{\A_c\}_{c\in\gC}$ that identifies $|\gC|$ categories of actions, in order to control the {targeted} count of planned actions within each category $c$, for each state $s$. This is done using a {category-allocation weight} matrix $\widetilde{\bm W}\in\Delta(\gC)^{|\S|}$ that {determines integer category budgets $B(s, c) \approx \widetilde{\bm{W}}(s,c)x(s)$} for every $s\in\S$, satisfying $\sum_{c\in \gC} B(s, c) = x(s)$. Finally, a priority score matrix $\mU \in [0, 1]^{|\S|\times|\A|}$ promotes that each state-action pair be sampled proportionally to $U(s,a)$ within each category\endnote{If all eligible priority scores are zero, a uniform value of $1/(|\S|\times |\A|)$ is used.}. Algorithm \ref{alg:sampling} presents the details of the state-action count sampling procedure, ensuring a feasible realization of the count action matrix $\u$.

\begin{algorithm}[htbp]
\caption{Priority-Score based State-Action Count Sampling Procedure ($\u\sim\vpi_{\tilde{\vp}, \widetilde{\bm{W}}, \bm{U}}(\cdot \mid s^0,\x)$)}
\label{alg:sampling}
\begin{algorithmic}
\STATE {\bfseries Input:} Major state $s^0$, count state $\x$
\STATE {\bfseries Parameters:} Resource-to-use proportion $\pb$, category allocation weights $\widetilde{\bm{W}}$, priority matrix $\mU$
\STATE {\bfseries Assumption:} There exists $\aFB\in\A$ such that $d_k(\aFB \mid s) = 0$ for all $s,k$
\STATE \textbf{Initialize}: $\tilde{\vb} \leftarrow \vb(s^0) \cdot \pb$, $\u \leftarrow \bm{0}_{|\S|\times|\A|}$
\STATE \#Identify categories' count budgets closest to fractional targets
\STATE $B(s,c_i)\leftarrow \left[\argmin_{\mathfrak{b}
\in\mathbb{N}^{|\mathcal{C}|}:\sum_{c\in\mathcal{C}} \mathfrak{b}(c)=x(s)} \sum_{c\in\mathcal{C}} |\mathfrak{b}(c)-x(s)\widetilde W(s,c)| \right](c_i)$, $\forall c_i\in\mathcal{C}$, $\forall s\in\gS$
\FOR{$i\in\{1,\dots,|\gC|\}$}
\STATE $\gF \leftarrow \{(s, a)\in\gS\times\gA_{c_i} \mid {B}(s, c_i) = 0\}$ \;\;\#Create set of exhausted state-action pairs in $\gS\times\gA_{c_i}$ 
\WHILE{$|\gF| < |\S|\times |\gA_{c_i}|$}
\STATE Sample $(s,a) \propto \mU(s,a) \sI\{(s,a) \in \gS\times\gA_{c_i} \setminus \gF\}$ \;\;\#Sample unexhausted pair based on priority
    \IF{$d_{k}(a \mid s) \leq \tilde{b}_k$ for all $k$}
    \STATE $B(s, c_i) \leftarrow B(s, c_i) - 1$ \;\;\#Update action count budget
    \STATE $\tilde{b}_k \leftarrow \tilde{b}_k - d_k(a\mid s)$ for all $k$ \;\;\#Update left-over resources
    \STATE $u(s, a) \leftarrow u({s, a}) + 1$ \;\;\#Add $(s,a)$ to state-action count plan
    \STATE $\gF \leftarrow \gF \cup \{(s, a) \in \gS\times\gA_{c_i}\mid B(s,c_i)=0\}$ \;\;\#Update $\gF$
    \ELSE 
    \STATE $\gF \leftarrow \gF \cup \{(s, a)\}$ \;\;\#Add $(s,a)$ to exhausted pairs
    \ENDIF
\ENDWHILE
\ENDFOR
\STATE $u(s, \aFB) \leftarrow u(s,\aFB) + x(s)-\sum_a u(s,a)$ for all $s$\;\;\#Complete plan with free action counts
\STATE {\bf Return:} Count action matrix $\u$
\end{algorithmic}
\end{algorithm}

We next establish two properties of the sampler. \ref{apx:proof-sampler-properties} provides the proof.

\begin{lemma}[Feasibility and deterministic coverage]\label{lemma:sampler-properties}
    Given that there exists a fallback action $\aFB\in\A$ such that $d_k(\aFB \mid s) = 0$ for all $s\in\S$ and all $k\in[K]$, the sampling procedure described in Algorithm \ref{alg:sampling},  has the following properties:
    \begin{enumerate}
        \item $\Prob(\u\in{\cA}(s^0,\x))=1$ for all $s^0\in\S^0$ and $\x\in\cS$.
        \item If $\{\gA_c\}_{c\in\gC}=\gA$, then for any deterministic policy $\cpi^D:\S^0\times{\cS}\rightarrow${$\{0, \dots, N\}^{|\gS| \times |\gA|}$ satisfying $\cpi^D(s^0, \x) \in\cA(s^0,\x)$}, there exists a mapping $h:\S^0\times{\cS}\rightarrow [0, 1]^K\times\Delta(\gC)^{|\S|}\times [0, 1]^{|\S|\times|\A|}$ such that a sample $\u\sim\vpi_{h(s^0,\x)}(\cdot\mid s^0,\x)= \cpi^D(s^0,\x)$ with probability one.
    \end{enumerate}
\end{lemma}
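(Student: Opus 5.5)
For part 1, I would argue by invariants maintained along the run of Algorithm \ref{alg:sampling}. First, the rounding step produces nonnegative integers $B(s,c)$ with $\sum_{c\in\gC}B(s,c)=x(s)$ for every $s$. Throughout the loop, $B(s,c)$ is decremented only when $(s,a)$ with $a\in\A_c$ is selected, and such a pair can be selected only if it is not in $\gF$. Every pair with $B(s,c_i)=0$ is inserted into $\gF$, so no $B(s,c)$ ever becomes negative.

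It follows that, for each $s$, the number of increments of $u(s,\cdot)$ during the loop is at most $\sum_c B(s,c)=x(s)$. Hence the final completion step adds a nonnegative integer $x(s)-\sum_a u(s,a)$ to $u(s,\aFB)$. This gives $\sum_a u(s,a)=x(s)$ and $\u\in\{0,\dots,N\}^{|\S|\times|\A|}$.

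For the resource constraints, I would use the invariant $\tilde b_k\ge 0$ together with
\[
\sum_{s,a}d_k(a\mid s)u(s,a)+\tilde b_k=\tilde p_k\, b_k(s^0).
\]
An increment happens only when $d_k(a\mid s)\le \tilde b_k$ for all $k$, and the fallback increments consume nothing. At termination this gives consumption at most $\tilde p_k b_k(s^0)\le b_k(s^0)$.

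Finally, each while loop terminates. Every iteration either adds $(s,a)$ to $\gF$ or strictly decreases $\sum_s B(s,c_i)$, and once a budget hits zero the pairs are added to $\gF$. The sampling step is well defined because the sampled set is nonempty whenever $|\gF|<|\S||\A_{c_i}|$, using the uniform fallback of the endnote when all eligible scores vanish. Since all of this holds on every sample path, $\Prob(\u\in\cA(s^0,\x))=1$.

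For part 2, I read the hypothesis as allowing the partition to be the singletons, $\A_c=\{c\}$ with $\gC\equiv\A$. Given the target $\u^D=\cpi^D(s^0,\x)\in\cA(s^0,\x)$, I would set $h(s^0,\x)=(\tilde\vp,\widetilde{\bm W},\bm U)$ with the following choices.
\begin{itemize}
\item $\tilde\vp=\vone$, so that the full budget $b_k(s^0)$ is available.
\item $\widetilde W(s,a)=u^D(s,a)/x(s)$ when $x(s)>0$, and any point of the simplex otherwise.
\item $U(s,a)=1$ for all pairs.
\end{itemize}
With these weights the fractional targets $x(s)\widetilde W(s,a)=u^D(s,a)$ are already integers, so the $\ell_1$ rounding has the unique minimizer $B(s,a)=u^D(s,a)$ with objective value zero.

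The core step, and the main obstacle, is to show that every sample path then produces exactly $\u^D$. The danger is a path where some draw is rejected for resources even though $\u^D$ is feasible. I would show that no rejection occurs. At any moment the increments made so far form a sub-plan $\u'\le\u^D$ componentwise, because each pair $(s,a)$ is incremented at most $B(s,a)=u^D(s,a)$ times.

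Since $d_k\ge 0$, I get
\[
\tilde b_k=b_k(s^0)-\sum d_k u'\ \ge\ \sum d_k\,(u^D-u').
\]
Any pair that is still eligible has $u'(s,a)<u^D(s,a)$, so its $d_k(a\mid s)$ is one of the terms of the right-hand sum and therefore $d_k(a\mid s)\le\tilde b_k$. The resource test thus always passes, and a pair enters $\gF$ only when its budget is exhausted. The loop for category $a$ therefore ends exactly when $u(s,a)=u^D(s,a)$ for all $s$.

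The order of draws is random, but the final counts are deterministic. The completion step then adds zero, since $\sum_a u^D(s,a)=x(s)$. Hence $\u=\cpi^D(s^0,\x)$ with probability one.
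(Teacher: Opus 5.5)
Your proposal is correct and follows the same route as the paper. Part 1 uses the conservation invariant $\tilde b_k+\sum_{s,a}d_k(a\mid s)u(s,a)=\tilde p_k b_k(s^0)$ with $\tilde b_k\ge 0$, and Part 2 uses the same choice $h(s^0,\x)=(\vone_K,\,u^D(s,\cdot)/x(s),\,\vone)$, which makes the category budgets equal $u^D$ exactly. You also make explicit three points the paper's proof only asserts: budgets never go negative, so the fallback completion adds a nonnegative count; each loop terminates; and, in Part 2, the bound $\tilde b_k\ge\sum_{s,a}d_k(a\mid s)\bigl(u^D(s,a)-u'(s,a)\bigr)$ rules out any resource rejection.
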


The assumption of a fallback action $\aFB$ ensures that the system always has a valid default operation. Consequently, the sampling procedure strictly guarantees operational feasibility (Property 1) by reverting to $\aFB$, so both the count-conservation constraints and the resource constraints are satisfied for every realization of the sampler.

Property 2 establishes that the sampling parameterization does not exclude any deterministic feasible count policy when each action is represented by a separate category. In this case, the category weights can encode the desired action proportions directly, and the sampler can reproduce any feasible integer count action with probability one. This indicates that, under singleton categories, an optimal deterministic count policy is contained in the sampler-induced policy class.

There is, however, a trade-off in the number of action categories. Using more categories increases representational flexibility, and Property 2 provides exact deterministic representability in the limiting singleton partition. On the other hand, a larger number of categories increases the number of policy-network outputs, the number of sequential sampling stages, and the computational burden of action generation and learning. Coarser partitions exploit application-specific structure, but they generally restrict the determinism of the policy class, because actions within the same category are controlled only through their relative priority scores.

\section{Numerical Results}\label{sec:7numerical}
This section evaluates the proposed CP-DRL framework (Section \ref{sec:5policy_nn}) in two count-reformulated problem settings of Section \ref{sec:3.3count-examples}. In all experiments, CP-DRL is trained to maximize the utilitarian-reduced count objective $G^{\phi}_{\ut}(\vpi_\phi)$ in (\ref{eq:obj-utilitarian-count-reduced}) and evaluated on the original fairness-aware objective $G_\rho(\vpi)$ in (\ref{eq:obj-rho-M2WCMDP}) using a permutation-invariant disaggregation rule $\phi^{-1}$. We first consider the machine replacement problem (see Example \ref{exmp:mrp-count}), where exact $\rho$-optimal solutions can be calculated for small instances. This setting provides a controlled environment for showing the empirical consistency and operational implications of the utilitarian reduction in Theorem \ref{thm:eqv}. The taxi dispatching problem (see Example \ref{exmp:taxi-dispatching-count}) provides a complementary setting to evaluate the full major-minor coordination with endogenous pricing and routing decisions.

The MRP does not include a major sub-MDP and is therefore a classical WCMDP. This makes it a useful diagnostic experiment to isolate the approximation quality of CP-DRL and the computational benefit of the count-proportion representation without confounding effects from major-state dynamics. Among the DRL algorithms considered in preliminary experiments, the \textit{Proximal Policy Optimization} (PPO) algorithm \citep{schulman2017proximal} consistently delivers the most stable and high-quality performance, and is thus used as the main algorithm for our CP-DRL approach.

\subsection{The Machine Replacement Problem} \label{sec:mrp-expt}
The MRP provides a scalable framework for evaluating the CP-DRL approach as problem size and complexity increase. We focus on a single resource ($K = 1$) and binary action ($A = 2$) for each machine, which places the problem within the restless multi-armed bandits (RMABs) subclass of WCMDPs \citep{whittle1988restless, zhang2022near}. For selected indexable instances, this structure enables direct comparison with the Whittle index policy, a strong structure-exploiting benchmark for RMABs.

\subsubsection{Implementation Details}  For each state-action pair, resource consumption $d(a \mid s \,)$ is 1 for replacements and 0 for operations, with up to $b$ replacements per time step. We convert the normalized cost function $c(s, a) \in [0, 1]$ into rewards by applying the positive shift affine transformation $r(s,a):=1-c(s,a)$. Machines degrade if not replaced and remain in the most deteriorated state $S$ until replaced. We choose operational and replacement costs across two presets to capture different cost structures, characterized by different replacement cost constant coefficients (RCCCs): \textit{i) Exponential-RCCC} and \textit{{ii)} Quadratic-RCCC}. Refer to \ref{apx:instance-generation-mrp} for cost structures and transition probabilities.

\paragraph{Objectives and Evaluation Protocol} We use the GGF in Example \ref{exmp:ggf} as the primary fairness criterion for testing the symmetry-based reduction. GGF weights decay exponentially with a factor of 2, defined as $w_n = 1/2^n$, and normalized to sum to 1. The goal is to find a fair policy that maximizes the GGF score over the expected total discounted mean rewards under the count aggregation MDP. We use a uniform distribution $\vmu$ over $\Sn$ and set the discount factor $\gamma = 0.95$. We use Monte Carlo simulations to evaluate policies over $N_{MC}$ trajectories truncated at time length $T$. We choose $N_{MC}$ = 1,000 and $T$ = 300 across all experiments.

\paragraph{CP-DRL and Benchmarks} We evaluate two CP-DRL training regimes. The single-task CP-DRL is trained separately for each machine count $N$. CP-DRL(MT) is a multi-task (MT) extension trained jointly over $N\in\{2,3,4,5\}$ with $S=3$, randomly switching the configuration at the end of each episode over 2,000 training episodes. Both variants use the same count-proportion representation and priority-based sampler described in Section \ref{sec:5policy_nn}. Hyperparameters for the CP-DRL algorithm are in \ref{apx:hyperparameter-mrp}. We compare CP-DRL against seven benchmarks, including optimal solutions (OPT) from the utilitarian-reduced dual LP model (\ref{eq:utilitarian-reduced-lp}) for small instances solved with Gurobi 10.0.3, the Whittle index policy (WIP) for RMABs, and a random (RDM) agent that selects actions randomly at each time step and averages the results over 10 independent runs. {Additionally,  we implemented a simple DRL baseline, Vanilla-DRL (V-DRL), with a utilitarian objective. The stochastic policy network employs a fully connected neural network that maps the vector $\s$ to a $N$-dimensional probability vector. We also implemented two heuristics to complement the random agent approach. The oldest-first (OFT) approach selects the machine in the worst state, while the myopic (MYP) selects the machine that maximizes immediate reward. We finally implemented an equal-resources (EQR) approach based on \cite{li2022efficient}, which imposes that each machine be replaced once every $N$ steps to ensure an equal distribution of resources.}

\subsubsection{Results} We designed a series of experiments to test the GGF-optimality and fairness of our CP-DRL algorithm. Additional experiments on scalability and efficiency are provided in \ref{apx:additional-expt-mrp}.

\begin{table}[!htb]
\TABLE{GGF Scores (Exponential-RCCC) \label{tab:performance1}}
{
    \begin{tabular}{ccccccccc>{\centering\arraybackslash}c}
    \hline
    $N$ & OPT & WIP & CP-DRL & CP-DRL(MT)  & V-DRL& OFT & MYP & EQR & RDM \\ \hline
    2 & 14.19 & 14.07 & \textbf{14.12} $\pm$ 0.01 & 14.11 $\pm$ 0.01  & 13.56 
    $\pm$ 0.00&5.84& 12.59&10.05& 9.67 \\
    3 & 14.08 & 13.75 & \textbf{13.95} $\pm$ 0.02 & 13.89 $\pm$ 0.14  &  13.39 $\pm$ 0.00&7.92& 12.32&	11.67& 10.13 \\
    4 & 13.94 & 13.27 & \textbf{13.64} $\pm$ 0.05 & 13.59 $\pm$ 0.10  &  13.04 $\pm$ 0.01&9.02& 12.86&12.03& 9.74 \\
    5 & 13.77 & 12.47 & 12.96 $\pm$ 0.01 & \textbf{13.28} $\pm$ 0.03  &  12.83 $\pm$ 0.00&10.01& 12.08&11.87& 8.95 \\ \hline
    \end{tabular}
}
{\textit{Note.} CP-DRL and CP-DRL(MT) entries are mean $\pm$ standard deviation over five random seeds.}
\end{table}

\begin{table}[!htb]
\TABLE{GGF Scores (Quadratic-RCCC) \label{tab:performance2}}
{
    \begin{tabular}{ccccccccc>{\centering\arraybackslash}c}
    \hline
    $N$ & OPT & WIP & CP-DRL & CP-DRL(MT)  & V-DRL& OFT & MYP & EQR & RDM \\ \hline
    2 & 16.17 & \textbf{16.17} & 16.14 $\pm$ 0.00 & 16.14 $\pm$ 0.00  & 15.36 
    $\pm$ 0.00& 3.11& 6.61&9.73& 10.15 \\
    3 & 16.10 & \textbf{16.09} & 16.05 $\pm$ 0.00 & 16.05 $\pm$ 0.00  & 15.17 $\pm$ 0.01& 6.16& 6.63&12.73& 11.83 \\
    4 & 16.01 & \textbf{16.01} & 15.94 $\pm$ 0.00 & 15.94 $\pm$ 0.00  & 15.01 $\pm$ 0.00	& 7.92& 6.85&14.02& 12.17 \\
    5 & 15.91 & 15.86 & \textbf{15.87} $\pm$ 0.02 & 15.86 $\pm$ 0.02  & 14.73 $\pm$ 0.00& 9.25& 6.70&14.64& 11.98 \\ \hline
    \end{tabular}
}
{\textit{Note.} CP-DRL and CP-DRL(MT) entries are mean $\pm$ standard deviation over five random seeds.}
\end{table}

Tables \ref{tab:performance1} and \ref{tab:performance2} report the GGF scores for the Exponential-RCCC and Quadratic-RCCC instances, respectively. WIP and RDM are evaluated with 1,000 Monte Carlo runs, and their standard deviations are negligible and omitted. Bold font indicating the best GGF scores at each row excluding optimal values. As shown in Table \ref{tab:performance1}, CP-DRL(MT) consistently achieves scores very close to the OPT values as the number of machines increases from 2 to 4. For the 5-machine case, CP-DRL(MT) shows slightly better performance than the single-task CP-DRL. In Table \ref{tab:performance2}, the single- and multi-task CP-DRL agents show slight variations in performance across different machine numbers. For $N=5$, CP-DRL achieves the best GGF score, slightly outperforming WIP. {In both cases, the CP-DRL approach outperforms Vanilla-DRL, the three heuristic methods, and the random benchmark.}

\subsection{Taxi Dispatching and Dynamic Pricing}\label{sec:taxi-expt}
We next evaluate the proposed CP-DRL methods in the taxi dispatching application. The New York City (NYC) calibrated dataset is used to evaluate real-world applicability and scalability through a calibrated simulator. In the first setup (Section \ref{sec:nyc}), we focus on performance evaluation by comparing fairness objective values and operational metrics on a 15-node case. In the second setup (Section \ref{sec:mechanism}), we conduct mechanism analysis by inspecting the adjusted price distribution and a pricing trajectory. In the final setup (Section \ref{sec:fairness-exec}), we validate the fairness execution on a smaller 4-zone Midtown network by comparing disaggregation rules and different fairness measures. Additional details can be found in \ref{apx:details}.

\subsubsection{Experimental Setup} The simulation environment is built on the NYC network and Yellow Taxi trip records provided by \cite{nyc_tripdata}. The environment operates in discrete time steps of length $\Delta t = 3$ minutes, which balances responsiveness and computational tractability, similar to prior mobility system studies \citep{sun2022optimizing}. The spatial domain is discretized into a graph where nodes represent taxi zones and edges represent connectivity based on road network distances. All operational and economic parameters, including per-unit ride reward, per-unit travel cost, and price-response coefficients, are calibrated to the same setting to ensure cross-experiment comparability (see \ref{apx:nyc-data} for full details of calibration on base arrival rates and other parameters).

\paragraph{Pricing Mechanism} We consider four levels of pricing granularity. OD pricing assigns a distinct multiplier to each OD pair per time step, which offers maximal theoretical flexibility but may complicate learning. Origin-only (O) uses one multiplier per-origin, reducing the decision space while still accounting for spatial demand heterogeneity. Uniform (U) pricing implements a single multiplier per time step, focusing mainly on temporal fluctuations. Static (S) pricing maintains a constant price throughout the episode. The optimal multiplier for this static baseline is determined by an exhaustive search and serves as the primary benchmark to quantify the performance gains of dynamic intervention strategies. {To reduce the price-output dimension and regularize spatial price variation, low-rank parameterizations are used, in which two factor matrices $\mL^{\text{O}},\mL^{\text{D}}\in[0,1]^{M\times r_p}$ are mapped to price multipliers as specified in \ref{apx:low-rank-pricing}, and $r_p$ is the chosen pricing rank.}

\paragraph{Objectives and Metrics} All policies are trained with $\lambda=1$. We then evaluate the resulting policies from both welfare and operational perspectives. Participant welfare is assessed using the GGF (with the same weight configuration as in Section \ref{sec:mrp-expt}), the $\alpha$-fairness, and the utilitarian mean. We also report metrics to characterize the mechanisms underlying policy performance. Occupancy measures productive fleet utilization, admitted orders measure passenger service, and the average and time-varying OD multipliers are used to interpret the learned pricing policy.

\paragraph{CP-DRL and Benchmarks} We use PPO and a standard Graph Neural Network (GNN) as the feature extractor to capture the underlying spatial dependencies (see structural details in \ref{apx:gnn}), compared with the following benchmarks: 1) \textit{count-proportional assignment linear program} (CP-ALP) uses a myopic per-step optimizer to replace the trained priority-based sampling procedure to generate count actions. 2) \textit{Random} (RDM) samples actions uniformly from the feasible parameter space, which isolates the contribution of structured decision-making. 3) A \textit{lowest income first} (LIF) heuristic gives assignment priority to drivers with lower accumulated income. LIF uses the fixed price multiplier 5.10 reported in Table \ref{tab:c_r_od_pricing_mechanism} and does not optimize price. LIF is a local execution rule and is included to test the performance of income-priority matching alone compared with the welfare achieved by coordinated pricing and dispatch. 4) We additionally report the \textit{multilayer perceptron} (MLP) benchmark with fully connected layers to show how the spatial feature GNN improves performance. For every fleet-size and pricing-granularity configuration, CP-DRL and MLP are trained separately over 10 independent seeds. CP-ALP, RDM, and LIF are evaluated over 10 independent simulation seeds. All methods are evaluated with a demand synchronization to CP-DRL over the corresponding 10 seeds with 50 episodes per seed. Complete benchmark definitions and implementation details are provided in \ref{apx:benchmark}.

\subsubsection{Performance on the 15-Zone Configuration} \label{sec:nyc} We first evaluate CP-DRL on a 15-zone NYC-calibrated network under OD pricing. The fleet size varies over $\{60,120,180,240,300\}$, which changes the degree of supply scarcity while keeping the calibrated base demand fixed. The key managerial question is whether coordinated dynamic pricing and dispatch can simultaneously improve driver-side welfare, passenger matching, and online decision speed as fleet size changes.

\paragraph{Fairness-Aware Performance} We first present the main performance on a 15-zone network (see details in \ref{apx:graph}) under OD pricing in Table \ref{tab:c_r_objective_by_ggf}, with $\rho$ set to GGF and $\lambda=1$. The number of taxis governs the scarcity of supply. CP-DRL obtains the highest mean value at every fleet size. {Relative to CP-ALP, its mean objective is higher by 18.9\% at 60 taxis, 18.9\% at 120 taxis, 25.7\% at 180 taxis, 23.5\% at 240 taxis, and 19.7\% at 300 taxis.} Since CP-ALP also learns pricing, this gap should not be attributed merely to the existence of a learned price policy. Instead, it indicates that myopic per-step dispatch optimization is insufficient when current assignment decisions shape future taxi availability, future matching opportunities, and the distribution of driver rewards.

\begin{table}[!ht]
\TABLE{Fairness-Aware Objective Values (OD Pricing)\label{tab:c_r_objective_by_ggf}}{
    \begin{tabular}{cccccc}
    \hline
    \# Taxis & CP-DRL & CP-ALP & RDM & LIF & MLP \\
    \hline
    60 & $420.58 \pm 3.00$ & $353.83 \pm 1.92$ & $65.79 \pm 1.83$ & $259.62 \pm 2.51$ & $384.74 \pm 9.02$ \\
    120 & $544.12 \pm 6.49$ & $457.53 \pm 4.26$ & $30.80 \pm 2.76$ & $291.80 \pm 3.85$ & $458.34 \pm 15.66$ \\
    180 & $597.00 \pm 9.34$ & $474.98 \pm 2.37$ & $-4.08 \pm 2.37$ & $307.78 \pm 3.10$ & $469.51 \pm 26.41$ \\
    240 & $606.48 \pm 10.43$ & $490.99 \pm 3.32$ & $-39.18 \pm 2.81$ & $314.78 \pm 3.17$ & $475.92 \pm 40.12$ \\
    300 & $602.08 \pm 11.11$ & $503.14 \pm 2.76$ & $-74.00 \pm 2.90$ & $317.84 \pm 2.31$ & $457.67 \pm 41.47$ \\
    \hline
    \end{tabular}
}{}
\end{table}

LIF remains well below CP-DRL despite explicitly favoring lower-income drivers, showing that fairness-motivated assignment at the execution layer cannot compensate for uncoordinated pricing and future fleet positioning. RDM deteriorates rapidly as the fleet grows, confirming that feasibility alone does not generate effective coordination. MLP is competitive for the smaller instances but becomes increasingly variable as fleet size grows.

\paragraph{Operational Performance} Table \ref{tab:c_r_od_pricing_mechanism} links the welfare differences to service and fleet utilization. CP-DRL sustains substantially higher occupancy and admits many more orders than CP-ALP, RDM, and LIF. The gap of admitted orders relative to CP-ALP grows with fleet size, where CP-ALP admits 14.35\% fewer orders than CP-DRL at 60 taxis, and the gap reaches 56.51\% at 300 taxis. At the same time, CP-DRL lowers its average multiplier as supply expands, from 3.04 to 1.38. The combined pattern suggests that CP-DRL does not obtain its welfare gain by suppressing demand through high prices, but uses additional fleet capacity more productively while relaxing prices as supply scarcity falls.

\begin{table}[!ht]
\TABLE{Operational Performance Under OD Pricing \label{tab:c_r_od_pricing_mechanism}}{
    \begin{tabular}{cccccccc}
    \hline
    \multirow{2}{*}{\# Taxis} & \multirow{2}{*}{Method} & \multicolumn{2}{c}{Occupancy Rate} & \multicolumn{2}{c}{Admitted Orders} & \multicolumn{2}{c}{Average Price} \\
    \cline{3-8}
    & & Mean & Gap vs. DRL & Mean & Gap vs. DRL & Mean & Gap vs. DRL \\
    \hline
    \multirow{5}{*}{60} & CP-DRL & $0.83 \pm 0.01$ & --- & $12.61 \pm 0.22$ & --- & $3.04 \pm 0.04$ & --- \\
     & CP-ALP & $0.75 \pm 0.00$ & $-9.64\%$ & $10.80 \pm 0.02$ & $-14.35\%$ & $3.06 \pm 0.00$ & $+0.66\%$ \\
     & RDM & $0.19 \pm 0.00$ & $-77.11\%$ & $2.80 \pm 0.03$ & $-77.80\%$ & $4.93 \pm 0.01$ & $+62.17\%$ \\
     & LIF & $0.36 \pm 0.00$ & $-56.63\%$ & $5.69 \pm 0.05$ & $-54.88\%$ & $5.10 \pm 0.00$ & $+67.76\%$ \\
     & MLP & $0.88 \pm 0.01$ & $+6.02\%$ & $12.96 \pm 0.13$ & $+2.78\%$ & $2.53 \pm 0.09$ & $-16.78\%$ \\
    \hline
    \multirow{5}{*}{120} & CP-DRL & $0.81 \pm 0.02$ & --- & $24.75 \pm 0.48$ & --- & $2.31 \pm 0.09$ & --- \\
     & CP-ALP & $0.61 \pm 0.00$ & $-24.69\%$ & $18.68 \pm 0.03$ & $-24.53\%$ & $2.99 \pm 0.00$ & $+29.44\%$ \\
     & RDM & $0.10 \pm 0.00$ & $-87.65\%$ & $3.16 \pm 0.05$ & $-87.23\%$ & $4.93 \pm 0.00$ & $+113.42\%$ \\
     & LIF & $0.22 \pm 0.00$ & $-72.84\%$ & $7.09 \pm 0.03$ & $-71.35\%$ & $5.10 \pm 0.00$ & $+120.78\%$ \\
     & MLP & $0.88 \pm 0.02$ & $+8.64\%$ & $25.58 \pm 0.45$ & $+3.35\%$ & $1.67 \pm 0.11$ & $-27.71\%$ \\
    \hline
    \multirow{5}{*}{180} & CP-DRL & $0.76 \pm 0.01$ & --- & $36.02 \pm 0.86$ & --- & $1.89 \pm 0.06$ & --- \\
     & CP-ALP & $0.45 \pm 0.00$ & $-40.79\%$ & $20.87 \pm 0.05$ & $-42.06\%$ & $2.97 \pm 0.00$ & $+57.14\%$ \\
     & RDM & $0.07 \pm 0.00$ & $-90.79\%$ & $3.28 \pm 0.04$ & $-90.89\%$ & $4.92 \pm 0.00$ & $+160.32\%$ \\
     & LIF & $0.16 \pm 0.00$ & $-78.95\%$ & $7.79 \pm 0.05$ & $-78.37\%$ & $5.10 \pm 0.00$ & $+169.84\%$ \\
     & MLP & $0.86 \pm 0.02$ & $+13.16\%$ & $37.55 \pm 0.80$ & $+4.25\%$ & $1.21 \pm 0.09$ & $-35.98\%$ \\
    \hline
    \multirow{5}{*}{240} & CP-DRL & $0.72 \pm 0.02$ & --- & $46.01 \pm 1.13$ & --- & $1.50 \pm 0.05$ & --- \\
     & CP-ALP & $0.35 \pm 0.00$ & $-51.39\%$ & $21.88 \pm 0.05$ & $-52.45\%$ & $2.96 \pm 0.00$ & $+97.33\%$ \\
     & RDM & $0.05 \pm 0.00$ & $-93.06\%$ & $3.33 \pm 0.03$ & $-92.76\%$ & $4.92 \pm 0.00$ & $+228.00\%$ \\
     & LIF & $0.13 \pm 0.00$ & $-81.94\%$ & $8.22 \pm 0.04$ & $-82.13\%$ & $5.10 \pm 0.00$ & $+240.00\%$ \\
     & MLP & $0.80 \pm 0.04$ & $+11.11\%$ & $48.01 \pm 1.79$ & $+4.35\%$ & $1.02 \pm 0.17$ & $-32.00\%$ \\
    \hline
    \multirow{5}{*}{300} & CP-DRL & $0.64 \pm 0.03$ & --- & $51.83 \pm 2.09$ & --- & $1.38 \pm 0.07$ & --- \\
     & CP-ALP & $0.28 \pm 0.00$ & $-56.25\%$ & $22.54 \pm 0.06$ & $-56.51\%$ & $2.95 \pm 0.00$ & $+113.77\%$ \\
     & RDM & $0.04 \pm 0.00$ & $-93.75\%$ & $3.39 \pm 0.02$ & $-93.46\%$ & $4.92 \pm 0.00$ & $+256.52\%$ \\
     & LIF & $0.10 \pm 0.00$ & $-84.38\%$ & $8.49 \pm 0.05$ & $-83.62\%$ & $5.10 \pm 0.00$ & $+269.57\%$ \\
     & MLP & $0.78 \pm 0.04$ & $+21.88\%$ & $59.20 \pm 2.09$ & $+14.22\%$ & $0.76 \pm 0.14$ & $-44.93\%$ \\
    \hline
    \end{tabular}}
{\textit{Note.} Gap vs. DRL is 100({Method}-{DRL})/{DRL}\%. A negative value gap indicates a lower value than CP-DRL, and vice versa.  For RDM, average price is the sample mean of randomized multipliers. LIF uses the fixed multiplier 5.10.}
\end{table}

\paragraph{Computational Performance} Table \ref{tab:computational_time} measures online count-action generation during evaluation. For CP-DRL and MLP, the recorded time covers the the count action sampling procedure, whereas CP-ALP solves its per-period LP and RDM/LIF construct actions using their respective random or heuristic rules. Under OD pricing, CP-DRL's mean action-generation time increases linearly with fleet size, from approximately $2.9$ milliseconds at 60 taxis to $6.7$ milliseconds at 300 taxis, while CP-ALP requires approximately 26 to 28 milliseconds.

\begin{table}[!ht]
\TABLE{Computational Time on Count-Action Generation\label{tab:computational_time}}{
    \begin{tabular}{ccccccc}
    \hline
    \# Taxis & Granularity & CP-DRL & CP-ALP & RDM & LIF & MLP \\
    \hline
    \multirow{4}{*}{60} & OD & $2.92 \pm 0.12$ & $26.50 \pm 0.92$ & $2.97 \pm 0.12$ & $0.39 \pm 0.02$ & $2.44 \pm 0.10$ \\
     & O & $2.97 \pm 0.13$ & $26.44 \pm 1.09$ & $2.98 \pm 0.12$ & $0.42 \pm 0.03$ & $2.65 \pm 0.07$ \\
     & U & $3.38 \pm 1.26$ & $62.40 \pm 10.47$ & $3.20 \pm 0.92$ & $0.45 \pm 0.02$ & $2.94 \pm 0.16$ \\
    \hline
    \multirow{4}{*}{120} & OD & $3.56 \pm 0.08$ & $27.41 \pm 0.91$ & $3.84 \pm 0.17$ & $0.59 \pm 0.02$ & $3.03 \pm 0.19$ \\
     & O & $3.88 \pm 0.16$ & $27.08 \pm 0.88$ & $3.87 \pm 0.12$ & $0.63 \pm 0.03$ & $3.23 \pm 0.23$ \\
     & U & $3.78 \pm 0.10$ & $43.66 \pm 1.26$ & $3.56 \pm 0.06$ & $0.67 \pm 0.03$ & $3.54 \pm 0.19$ \\
    \hline
    \multirow{4}{*}{180} & OD & $4.19 \pm 0.21$ & $26.43 \pm 0.78$ & $4.36 \pm 0.11$ & $0.77 \pm 0.02$ & $3.27 \pm 0.15$ \\
     & O & $4.23 \pm 0.20$ & $26.22 \pm 0.64$ & $4.15 \pm 0.15$ & $0.85 \pm 0.03$ & $3.93 \pm 0.35$ \\
     & U & $4.09 \pm 0.05$ & $33.27 \pm 0.74$ & $4.09 \pm 0.09$ & $0.87 \pm 0.02$ & $4.11 \pm 0.20$ \\
    \hline
    \multirow{4}{*}{240} & OD & $4.85 \pm 0.42$ & $26.79 \pm 3.30$ & $4.86 \pm 0.35$ & $0.94 \pm 0.02$ & $3.73 \pm 0.31$ \\
     & O & $5.57 \pm 0.44$ & $26.23 \pm 1.16$ & $4.90 \pm 0.27$ & $1.00 \pm 0.02$ & $4.60 \pm 0.35$ \\
     & U & $5.77 \pm 0.32$ & $30.39 \pm 1.26$ & $4.63 \pm 0.20$ & $1.07 \pm 0.04$ & $5.65 \pm 0.36$ \\
    \hline
    \multirow{4}{*}{300} & OD & $6.67 \pm 0.72$ & $28.49 \pm 2.67$ & $5.85 \pm 0.50$ & $1.11 \pm 0.02$ & $4.63 \pm 0.59$ \\
     & O & $6.60 \pm 0.49$ & $26.51 \pm 1.70$ & $5.48 \pm 0.17$ & $1.18 \pm 0.02$ & $5.40 \pm 0.40$ \\
     & U & $8.56 \pm 0.77$ & $31.31 \pm 0.94$ & $5.73 \pm 0.11$ & $1.25 \pm 0.03$ & $7.40 \pm 0.87$ \\
    \hline
    \end{tabular}}
{\textit{Note.} Computation time is measured in milliseconds during policy evaluation.}
\end{table}

\subsubsection{Pricing Mechanism Under Non-Stationary Demand} \label{sec:mechanism} The calibrated base arrival rates vary over time, and thus not directly comparable across decision steps. We next report adjusted price multipliers $\psi^{-1}\,\left(\theta^{\mathrm{base}}_{t}(i,j)\psi(p_t(i,j))\right)$ according to the base demands to show how the major pricing decision changes with supply scarcity.

\paragraph{Price Adjustment to Supply Scarcity} Figure \ref{fig:pricing} reports the uniform (U) pricing policy. The adjusted learned price distributions provide a demand-equivalent comparison. As fleet size increases from 60 to 300, the CP-DRL's mean price multipliers are lowered to admit additional demands and allow the platform to use the expanded fleet. The optimized static multiplier changes across fleet sizes in the same direction as indicated by the dashed line.

\begin{figure}\FIGURE
    {\includegraphics[width=\textwidth]{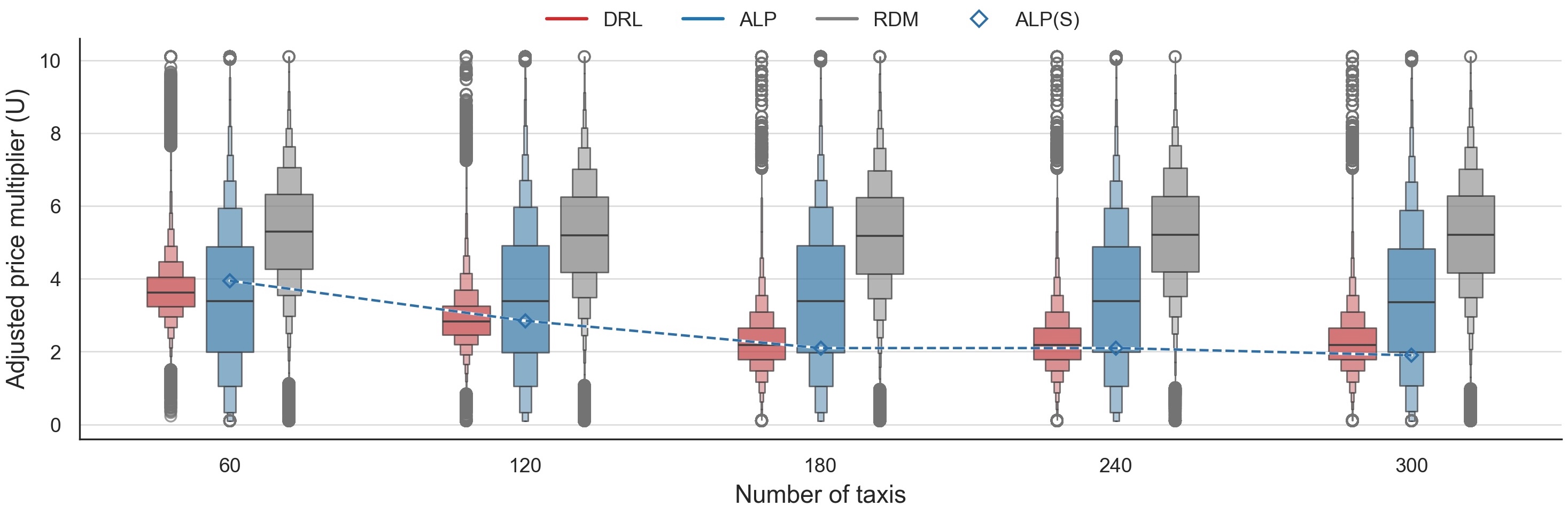}}
    {(Color online) Distribution of Adjusted Learned Uniform Price Multipliers
    \label{fig:pricing}} 
    {The letter-value (boxen) plots show the empirical distribution of adjusted price multipliers. The center line denotes the median. The nested boxes represent successively wider quantile intervals, from the interquartile range (25th and 75th percentiles) to the distribution tails. The dashed reference line denotes the optimized static multiplier for the corresponding fleet size.} 
\end{figure}

\begin{figure}\FIGURE
    {\includegraphics[width=0.96\textwidth]{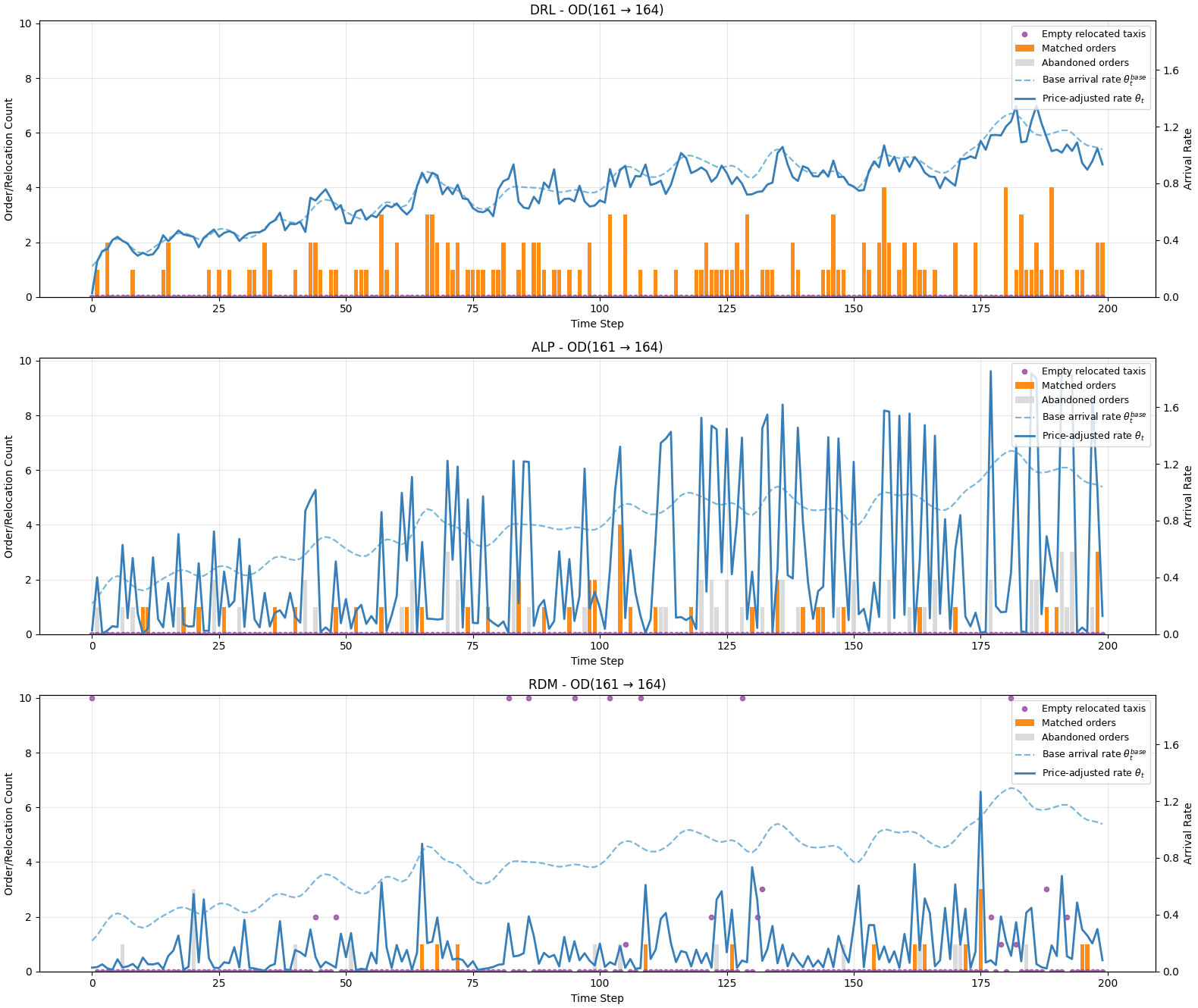}}
    {(Color online) Adjusted Uniform Pricing and Dispatching on a Representative OD Pair \label{fig:pricing-trajectory}} 
    {Lines report price multipliers, bars report matched and abandoned orders, and markers report empty relocations over time.} 
\end{figure}

\paragraph{Joint Evolution of Pricing and Matching} Further, in Figure \ref{fig:pricing-trajectory}, we show a representative OD trajectory and how the aggregate patterns arise. CP-DRL maintains comparatively stable prices (blue lines) with more matched orders on the selected OD pair (orange bars) and less abandoned orders (grey bars). CP-ALP changes prices more sharply and is accompanied by more abandoned demand, whereas RDM combines highly variable prices with few matches and more taxis relocated to other places while there are demands (purple dots). This indicates that CP-DRL's advantage comes from effectively coordinating pricing with dispatch over time.

\subsubsection{Fairness Comparison} \label{sec:fairness-exec} The previous experiments evaluate count policies at the aggregate level. However, a count policy specifies how many taxis take each action, not which labelled taxis receive those actions. This distinction is central for fairness. If a count action is always assigned to the same subset of taxi labels, individual rewards may become systematically unequal.

In this section, we focus on small 4-Node Midtown area (see details in \ref{apx:graph}) with varying fleet size $N \in \{10, 20, 30, \dots, 60\}$ to compare the permutation-invariant randomized disaggregation used by CP-DRL with a biased (B) disaggregation model CP-DRL-B that repeatedly selects taxis according to the lowest fixed indices.

As shown in Figure \ref{fig:fairness-by-execution}, the two execution rules can produce similar mean-objective values while generating a clear gap in the GGF value. A permutation-invariant disaggregation is therefore an operational requirement for transferring the fairness properties of the count policy to individual-level execution.

\begin{figure}\FIGURE
    {\includegraphics[width=\textwidth]{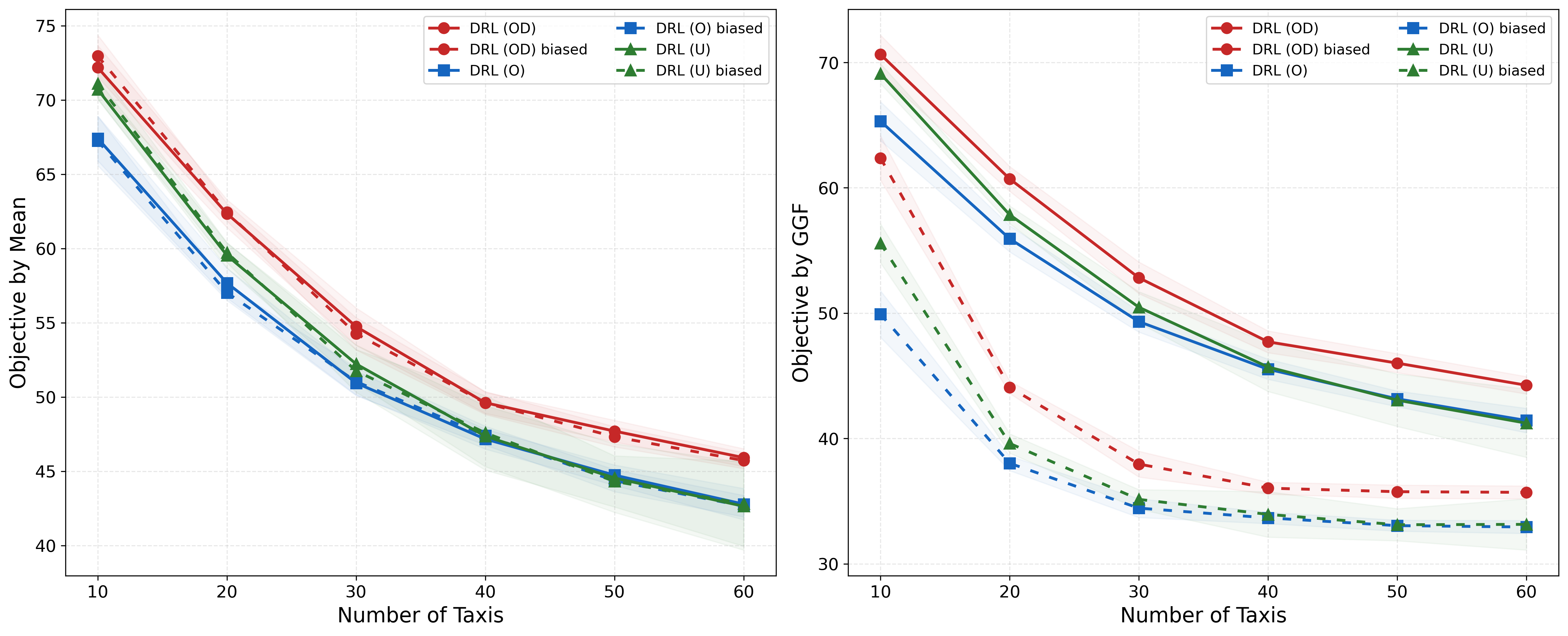}}
    {(Color online) Effect of Participant-Level Disaggregation
    \label{fig:fairness-by-execution}} 
    {Objectives are evaluated by the utilitarian mean and GGF under permutation-invariant and biased execution rules across the OD/O/U pricing granularity.  The same learned count policy is executed using either the permutation-invariant randomized disaggregation rule or the biased low-index first rule, isolating participant-level execution from aggregate count decisions.} 
\end{figure}

Table \ref{tab:nyc_objectives} reports mean, GGF, and $\alpha$-fairness evaluations with $\alpha=1$. The gap between mean and GGF remains small under permutation-invariant disaggregation, especially relative to the biased execution experiment. The table also shows that dynamic OD pricing generally outperforms origin-only and uniform pricing under CP-DRL, whereas static pricing can remain competitive in smaller cases because it avoids learning a high-dimensional policy space while demand variation is limited. This indicates that OD pricing has a better performance when the platform has sufficient data and computational capacity to learn spatial-temporal controls.

\begin{table}[!htb]
\TABLE{Evaluation Under Alternative Fairness Measures \label{tab:nyc_objectives}}{
    \begin{tabular}{cccccccc}
    \hline
    \multirow{2}{*}{\# taxis} & \multirow{2}{*}{Granularity} & \multicolumn{2}{c}{By Mean} & \multicolumn{2}{c}{By GGF} & \multicolumn{2}{c}{By $\alpha$-fairness} \\
    \cline{3-8}
    & & CP-DRL & CP-ALP & CP-DRL & CP-ALP & CP-DRL & CP-ALP \\
    \hline
    \multirow{4}{*}{10} & OD & $72.18 \pm 1.48$ & $52.45 \pm 1.02$ & $70.64 \pm 1.54$ & $50.54 \pm 1.05$ & $72.16 \pm 1.48$ & $52.41 \pm 1.02$ \\
     & O & $67.40 \pm 1.51$ & $45.38 \pm 0.64$ & $65.31 \pm 1.55$ & $43.59 \pm 0.87$ & $67.37 \pm 1.51$ & $45.34 \pm 0.65$ \\
     & U & $70.70 \pm 0.62$ & $31.22 \pm 0.92$ & $69.10 \pm 0.80$ & $29.56 \pm 1.14$ & $70.69 \pm 0.63$ & $31.17 \pm 0.93$ \\
     & S & $71.42 \pm 1.09$ & $68.40 \pm 0.61$ & $69.40 \pm 1.00$ & $66.80 \pm 0.69$ & $71.39 \pm 1.09$ & $68.38 \pm 0.61$ \\
    \hline
    \multirow{4}{*}{20} & OD & $62.34 \pm 0.94$ & $41.17 \pm 0.72$ & $60.71 \pm 0.99$ & $38.99 \pm 0.91$ & $62.33 \pm 0.94$ & $41.12 \pm 0.73$ \\
     & O & $57.67 \pm 1.00$ & $36.38 \pm 0.92$ & $55.93 \pm 1.05$ & $33.92 \pm 0.98$ & $57.65 \pm 1.00$ & $36.31 \pm 0.92$ \\
     & U & $59.53 \pm 0.85$ & $25.21 \pm 0.65$ & $57.85 \pm 0.70$ & $23.55 \pm 0.82$ & $59.51 \pm 0.84$ & $25.15 \pm 0.65$ \\
     & S & $59.89 \pm 0.64$ & $57.98 \pm 0.58$ & $58.35 \pm 0.60$ & $56.28 \pm 0.69$ & $59.87 \pm 0.64$ & $57.95 \pm 0.58$ \\
    \hline
    \multirow{4}{*}{30} & OD & $54.74 \pm 1.26$ & $35.87 \pm 0.58$ & $52.82 \pm 1.26$ & $33.49 \pm 0.73$ & $54.71 \pm 1.26$ & $35.80 \pm 0.58$ \\
     & O & $50.94 \pm 0.80$ & $31.47 \pm 0.63$ & $49.32 \pm 0.83$ & $29.28 \pm 0.67$ & $50.92 \pm 0.80$ & $31.39 \pm 0.63$ \\
     & U & $52.22 \pm 1.28$ & $21.53 \pm 0.56$ & $50.47 \pm 1.25$ & $19.93 \pm 0.52$ & $52.19 \pm 1.28$ & $21.47 \pm 0.56$ \\
     & S & $52.68 \pm 0.53$ & $51.90 \pm 0.54$ & $50.99 \pm 0.58$ & $50.02 \pm 0.44$ & $52.65 \pm 0.53$ & $51.88 \pm 0.54$ \\
    \hline
    \multirow{4}{*}{40} & OD & $49.61 \pm 0.71$ & $32.46 \pm 0.70$ & $47.71 \pm 0.87$ & $30.42 \pm 0.91$ & $49.58 \pm 0.71$ & $32.40 \pm 0.71$ \\
     & O & $47.16 \pm 0.67$ & $29.09 \pm 0.48$ & $45.52 \pm 0.80$ & $27.20 \pm 0.44$ & $47.14 \pm 0.67$ & $29.03 \pm 0.48$ \\
     & U & $47.36 \pm 2.05$ & $19.53 \pm 0.50$ & $45.70 \pm 1.95$ & $17.97 \pm 0.42$ & $47.33 \pm 2.04$ & $19.46 \pm 0.49$ \\
     & S & $48.56 \pm 0.38$ & $48.26 \pm 0.69$ & $46.72 \pm 0.67$ & $46.20 \pm 0.53$ & $48.53 \pm 0.38$ & $48.22 \pm 0.68$ \\
    \hline
    \multirow{4}{*}{50} & OD & $47.70 \pm 0.73$ & $30.87 \pm 0.59$ & $46.01 \pm 0.76$ & $28.96 \pm 0.59$ & $47.67 \pm 0.73$ & $30.81 \pm 0.59$ \\
     & O & $44.73 \pm 0.69$ & $27.42 \pm 0.36$ & $43.16 \pm 0.62$ & $25.44 \pm 0.52$ & $44.71 \pm 0.69$ & $27.35 \pm 0.36$ \\
     & U & $44.54 \pm 2.31$ & $18.75 \pm 0.44$ & $43.06 \pm 2.10$ & $17.23 \pm 0.45$ & $44.52 \pm 2.31$ & $18.68 \pm 0.44$ \\
     & S & $45.67 \pm 0.51$ & $45.55 \pm 0.41$ & $44.15 \pm 0.61$ & $43.88 \pm 0.48$ & $45.64 \pm 0.51$ & $45.52 \pm 0.41$ \\
    \hline
    \multirow{4}{*}{60} & OD & $45.91 \pm 0.60$ & $29.89 \pm 0.49$ & $44.24 \pm 0.69$ & $27.98 \pm 0.56$ & $45.88 \pm 0.60$ & $29.83 \pm 0.48$ \\
     & O & $42.78 \pm 1.05$ & $26.02 \pm 0.46$ & $41.43 \pm 0.97$ & $24.37 \pm 0.43$ & $42.76 \pm 1.05$ & $25.96 \pm 0.46$ \\
     & U & $42.64 \pm 2.95$ & $17.70 \pm 0.36$ & $41.22 \pm 2.73$ & $16.37 \pm 0.41$ & $42.61 \pm 2.94$ & $17.64 \pm 0.36$ \\
     & S & $43.69 \pm 0.29$ & $43.50 \pm 0.37$ & $42.21 \pm 0.41$ & $42.07 \pm 0.37$ & $43.67 \pm 0.29$ & $43.48 \pm 0.37$ \\
    \hline
    \end{tabular}}
{\textit{Note.} Entries are reported as mean $\pm$ standard deviation over 10 experiments. The $\alpha$-fairness column uses $\alpha=1$ and is computed as the geometric mean $\rho_1[\vv]=(\prod_{n=1}^N v_n)^{1/N}$.}
\end{table}

\section{Conclusion}\label{sec:8conclusion}
We studied fairness-aware sequential resource allocation in major-minor weakly coupled Markov decision processes. The proposed M2WCMDP framework models a centrally coordinated system with one major sub-MDP representing endogenous information and platform-level decisions, and a collection of minor sub-MDPs representing homogeneous participants with joint actions coupled through shared resource constraints. The objective balances the platform's expected total discounted reward with a welfare function $\rho$ defined over the vector of participant expected total discounted rewards.

The main theoretical result establishes a utilitarian reduction for the symmetric $\rho$-M2WCMDP optimization problem. We showed that, under symmetry conditions on the minor sub-MDPs, this fairness-aware problem can be solved by optimizing the utilitarian-based objective over permutation-invariant policies. We further use this structure to obtain a count-aggregation reformulation, which removes dependence on participant labels and provides a compact representation for large populations. To address large-scale and model-free settings, we developed a count-proportion deep reinforcement learning approach. The proposed CP-DRL method learns joint count actions using count-proportion state representations and stochastic policy networks. Numerical results on the machine replacement problem validate empirically the symmetry-based reduction in a controlled setting, while experiments on taxi dispatching and pricing show that coordinated pricing, relocation, and matching can improve fairness-aware welfare relative to various benchmarks. 

{Several extensions remain open. First, action generation remains a computational bottleneck as the current priority sampler is sequential and its representational flexibility depends on the chosen action-category partition. A natural next step is to design more efficient feasibility-preserving samplers and to characterize the approximation loss induced by coarse categories. Second, the exact utilitarian reduction relies on symmetry of homogeneous minor sub-MDPs. An important extension is to heterogeneous populations, for example through finitely many participant types. This would replace full symmetry by within-type symmetry, lead to a type-by-state count representation, and raise a new fairness question of how welfare should be balanced both within and across heterogeneous groups.}

\begingroup \parindent 0pt \parskip 0.0ex \def\enotesize{\normalsize} \theendnotes \endgroup

\section*{Acknowledgements}
Xiaohui Tu was partially funded by GERAD and IVADO. Yossiri Adulyasak was partially supported by the Canadian Natural Sciences and Engineering Research Council [Grant RGPIN-2021-03264] and by the Canada Research Chair program [CRC-2022-00087]. Erick Delage was partially supported by the Canadian Natural Sciences and Engineering Research Council [Grant RGPIN-2022-05261] and by the Canada Research Chair program [950-230057].

\bibliographystyle{apalike}
\bibliography{reference}

@article{aziz2024best,
  title={Best of both worlds: Ex ante and ex post fairness in resource allocation},
  author={Aziz, Haris and Freeman, Rupert and Shah, Nisarg and Vaish, Rohit},
  journal={Operations Research},
  volume={72},
  number={4},
  pages={1674--1688},
  year={2024},
  publisher={INFORMS}
}

@article{aminian2026markovian,
  title={Markovian Search with Ex Ante Constraints: Theory and Applications to Socially Aware Algorithmic Hiring},
  author={Aminian, Mohammad Reza and Manshadi, Vahideh and Niazadeh, Rad},
  journal={Management Science},
  year={2026},
  publisher={INFORMS}
}

@techreport{parrott2024tlc,
  author      = {Parrott, James A.},
  title       = {Revised Expense Model for the NYC Taxi and Limousine Commission's High-Volume For-Hire Vehicle Minimum Pay Standard},
  institution = {Center for New York City Affairs, The New School},
  year        = {2024},
  month       = dec,
  note        = {Report prepared for the New York City Taxi and Limousine Commission},
  url         = {https://www.nyc.gov/assets/tlc/downloads/pdf/driver_expense_report.pdf}
}

@article{caro2007dynamic,
  title={Dynamic assortment with demand learning for seasonal consumer goods},
  author={Caro, Felipe and Gallien, J{\'e}r{\'e}mie},
  journal={Management science},
  volume={53},
  number={2},
  pages={276--292},
  year={2007},
  publisher={INFORMS}
}

@book{zhang2022near,
  title={Near-optimality for multi-action multi-resource restless bandits with many arms},
  author={Zhang, Xiangyu},
  year={2022},
  publisher={Cornell University}
}

@misc{nyc_tripdata,
  author       = {{New York City Taxi and Limousine Commission}},
  title        = {TLC Trip Record Data},
  year         = {2026},
  publisher    = {New York City Taxi and Limousine Commission},
  url          = {https://www.nyc.gov/site/tlc/about/tlc-trip-record-data.page}
}

@article{moyano2021traffic,
  title={Traffic congestion and economic context: changes of spatiotemporal patterns of traffic travel times during crisis and post-crisis periods},
  author={Moyano, Amparo and St{\k e}pniak, Marcin and Moya-G{\'o}mez, Borja and Garc{\'\i}a-Palomares, Juan Carlos},
  journal={Transportation},
  volume={48},
  number={6},
  pages={3301--3324},
  year={2021},
  publisher={Springer}
}

@article{lehky2010decoding,
  title={Decoding poisson spike trains by gaussian filtering},
  author={Lehky, Sidney R},
  journal={Neural computation},
  volume={22},
  number={5},
  pages={1245--1271},
  year={2010},
  publisher={MIT Press One Rogers Street, Cambridge, MA 02142-1209, USA journals-info~…}
}

@inproceedings{michailidis2023balancing,
  title={Balancing fairness and efficiency in transport network design through reinforcement learning},
  author={Michailidis, Dimitris and Ghebreab, Sennay and Santos, Fernando P},
  booktitle={Proceedings of the 2023 International Conference on Autonomous Agents and Multiagent Systems},
  pages={2532--2534},
  year={2023}
}

@article{huaizhou2013fairness,
  title={Fairness in wireless networks: Issues, measures and challenges},
  author={Huaizhou, SHI and Prasad, R Venkatesha and Onur, Ertan and Niemegeers, IGMM},
  journal={IEEE Communications Surveys \& Tutorials},
  volume={16},
  number={1},
  pages={5--24},
  year={2013},
  publisher={IEEE}
}

@article{lesmana2019balancing,
  title={Balancing efficiency and fairness in on-demand ridesourcing},
  author={Lesmana, Nixie S and Zhang, Xuan and Bei, Xiaohui},
  journal={Advances in neural information processing systems},
  volume={32},
  year={2019}
}

@article{brown2025fluid,
  title={Fluid policies, reoptimization, and performance guarantees in dynamic resource allocation},
  author={Brown, David B and Zhang, Jingwei},
  journal={Operations Research},
  volume={73},
  number={2},
  pages={1029--1045},
  year={2025},
  publisher={INFORMS}
}

@article{delage2010percentile,
  title = {Percentile {{Optimization}} for {{Markov Decision Processes}} with {{Parameter Uncertainty}}},
  author = {Delage, Erick and Mannor, Shie},
  journal = {Operations Research},
  volume  = {58},
  number  = {1},
  pages   = {203--213},
  year    = {2010},
  doi     = {10.1287/opre.1080.0685}
}

@inproceedings{akbarzadeh2019restless,
  title={Restless bandits with controlled restarts: Indexability and computation of Whittle index},
  author={Akbarzadeh, Nima and Mahajan, Aditya},
  booktitle={2019 IEEE 58th conference on decision and control (CDC)},
  pages={7294--7300},
  year={2019},
  organization={IEEE}
}

@inproceedings{li2022efficient,
  title={Efficient resource allocation with fairness constraints in restless multi-armed bandits},
  author={Li, Dexun and Varakantham, Pradeep},
  booktitle={Uncertainty in Artificial Intelligence},
  pages={1158--1167},
  year={2022},
  organization={PMLR}
}

@article{gutjahr2018equity,
  title={Equity and deprivation costs in humanitarian logistics},
  author={Gutjahr, Walter J and Fischer, Sophie},
  journal={European Journal of Operational Research},
  volume={270},
  number={1},
  pages={185--197},
  year={2018},
  publisher={Elsevier}
}

@article{filippi2021single,
  title={On single-source capacitated facility location with cost and fairness objectives},
  author={Filippi, Carlo and Guastaroba, Gianfranco and Speranza, M Grazia},
  journal={European Journal of Operational Research},
  volume={289},
  number={3},
  pages={959--974},
  year={2021},
  publisher={Elsevier}
}

@article{bertsimas2013fairness,
  title={Fairness, efficiency, and flexibility in organ allocation for kidney transplantation},
  author={Bertsimas, Dimitris and Farias, Vivek F and Trichakis, Nikolaos},
  journal={Operations research},
  volume={61},
  number={1},
  pages={73--87},
  year={2013},
  publisher={INFORMS}
}

@article{hooker2012combining,
  title={Combining equity and utilitarianism in a mathematical programming model},
  author={Hooker, John N and Williams, H Paul},
  journal={Management Science},
  volume={58},
  number={9},
  pages={1682--1693},
  year={2012},
  publisher={INFORMS}
}

@article{bertsimas2012efficiency,
  title={On the efficiency-fairness trade-off},
  author={Bertsimas, Dimitris and Farias, Vivek F and Trichakis, Nikolaos},
  journal={Management Science},
  volume={58},
  number={12},
  pages={2234--2250},
  year={2012},
  publisher={INFORMS}
}

@article{sen1976real,
  title={Real national income},
  author={Sen, Amartya},
  journal={The Review of Economic Studies},
  volume={43},
  number={1},
  pages={19--39},
  year={1976},
  publisher={Wiley-Blackwell}
}

@inproceedings{siddique2020learning,
  title={Learning fair policies in multi-objective (deep) reinforcement learning with average and discounted rewards},
  author={Siddique, Umer and Weng, Paul and Zimmer, Matthieu},
  booktitle={International Conference on Machine Learning},
  pages={8905--8915},
  year={2020},
  organization={PMLR}
}

@article{cousins2021axiomatic,
  title={An axiomatic theory of provably-fair welfare-centric machine learning},
  author={Cousins, Cyrus},
  journal={Advances in Neural Information Processing Systems},
  volume={34},
  pages={16610--16621},
  year={2021}
}

@inproceedings{speicher2018unified,
  title={A unified approach to quantifying algorithmic unfairness: Measuring individual \&group unfairness via inequality indices},
  author={Speicher, Till and Heidari, Hoda and Grgic-Hlaca, Nina and Gummadi, Krishna P and Singla, Adish and Weller, Adrian and Zafar, Muhammad Bilal},
  booktitle={Proceedings of the 24th ACM SIGKDD international conference on knowledge discovery \& data mining},
  pages={2239--2248},
  year={2018}
}

@inproceedings{fan2023welfare,
  title={Welfare and Fairness in Multi-objective Reinforcement Learning},
  author={Fan, Ziming and Peng, Nianli and Tian, Muhang and Fain, Brandon},
  booktitle={Proceedings of the 2023 International Conference on Autonomous Agents and Multiagent Systems},
  pages={1991--1999},
  year={2023}
}

@article{xinying2023guide,
  title={A guide to formulating fairness in an optimization model},
  author={Chen, Violet Xinying and Hooker, John N},
  journal={Annals of Operations Research},
  volume={326},
  number={1},
  pages={581--619},
  year={2023},
  publisher={Springer}
}

@article{kolm1976unequal,
  title={Unequal inequalities. I},
  author={Kolm, Serge-Christophe},
  journal={Journal of economic Theory},
  volume={12},
  number={3},
  pages={416--442},
  year={1976},
  publisher={Academic Press}
}

@inproceedings{williamson2019fairness,
  title={Fairness risk measures},
  author={Williamson, Robert and Menon, Aditya},
  booktitle={International conference on machine learning},
  pages={6786--6797},
  year={2019},
  organization={PMLR}
}

@article{ju2023achieving,
  title={Achieving fairness in multi-agent markov decision processes using reinforcement learning},
  author={Ju, Peizhong and Ghosh, Arnob and Shroff, Ness B},
  journal={arXiv preprint arXiv:2306.00324},
  year={2023}
}

@article{rawls1971egalitarian,
  title={An egalitarian theory of justice},
  author={Rawls, John},
  journal={Philosophical ethics: An introduction to moral philosophy},
  pages={365--370},
  year={1971}
}

@article{zhu2021mean,
  title={A mean-field Markov decision process model for spatial-temporal subsidies in ride-sourcing markets},
  author={Zhu, Zheng and Ke, Jintao and Wang, Hai},
  journal={Transportation Research Part B: Methodological},
  volume={150},
  pages={540--565},
  year={2021},
  publisher={Elsevier}
}

@article{gallego1994optimal,
  title={Optimal dynamic pricing of inventories with stochastic demand over finite horizons},
  author={Gallego, Guillermo and Van Ryzin, Garrett},
  journal={Management science},
  volume={40},
  number={8},
  pages={999--1020},
  year={1994},
  publisher={INFORMS}
}

@article{adelman2004price,
  title={A price-directed approach to stochastic inventory/routing},
  author={Adelman, Daniel},
  journal={Operations Research},
  volume={52},
  number={4},
  pages={499--514},
  year={2004},
  publisher={INFORMS}
}

@article{bertsimas2007learning,
  title={A learning approach for interactive marketing to a customer segment},
  author={Bertsimas, Dimitris and Mersereau, Adam J},
  journal={Operations Research},
  volume={55},
  number={6},
  pages={1120--1135},
  year={2007},
  publisher={INFORMS}
}

@article{kazemi2019time2vec,
  title={Time2vec: Learning a vector representation of time},
  author={Kazemi, Seyed Mehran and Goel, Rishab and Eghbali, Sepehr and Ramanan, Janahan and Sahota, Jaspreet and Thakur, Sanjay and Wu, Stella and Smyth, Cathal and Poupart, Pascal and Brubaker, Marcus},
  journal={arXiv preprint arXiv:1907.05321},
  year={2019}
}

@article{schulman2017proximal,
  title={Proximal policy optimization algorithms},
  author={Schulman, John and Wolski, Filip and Dhariwal, Prafulla and Radford, Alec and Klimov, Oleg},
  journal={arXiv preprint arXiv:1707.06347},
  year={2017}
}

@article{mo2000fair,
  title={Fair end-to-end window-based congestion control},
  author={Mo, Jeonghoon and Walrand, Jean},
  journal={IEEE/ACM Transactions on networking},
  volume={8},
  number={5},
  pages={556--567},
  year={2000},
  publisher={IEEE}
}

@inproceedings{sun2022optimizing,
  title={Optimizing long-term efficiency and fairness in ride-hailing via joint order dispatching and driver repositioning},
  author={Sun, Jiahui and Jin, Haiming and Yang, Zhaoxing and Su, Lu and Wang, Xinbing},
  booktitle={Proceedings of the 28th ACM SIGKDD conference on knowledge discovery and data mining},
  pages={3950--3960},
  year={2022}
}

@article{Chen2024ABC,
title={Real-time spatial--intertemporal pricing and relocation in a ride-hailing network: Near-optimal policies and the value of dynamic pricing},
author={Chen, Qi and Lei, Yanzhe and Jasin, Stefanus},
journal={Operations Research},
volume={72},
number={5},
pages={2097--2118},
year={2024},
publisher={INFORMS}
}

@inproceedings{cai2021efficient,
  title={Efficient reinforcement learning in resource allocation problems through permutation invariant multi-task learning},
  author={Cai, Desmond and Lim, Shiau Hong and Wynter, Laura},
  booktitle={2021 60th IEEE Conference on Decision and Control (CDC)},
  pages={2270--2275},
  year={2021},
  organization={IEEE}
}

@techreport{cohen2016using,
  title={Using big data to estimate consumer surplus: The case of uber},
  author={Cohen, Peter and Hahn, Robert and Hall, Jonathan and Levitt, Steven and Metcalfe, Robert},
  year={2016},
  institution={National Bureau of Economic Research}
}

@article{huang2018taxi,
  title={Taxi driver speeding: Who, when, where and how? A comparative study between Shanghai and New York City},
  author={Huang, Yizhe and Sun, Daniel and Tang, Juanyu},
  journal={Traffic injury prevention},
  volume={19},
  number={3},
  pages={311--316},
  year={2018},
  publisher={Taylor \& Francis}
}

@article{gast2022reoptimization,
  title={Reoptimization nearly solves weakly coupled markov decision processes},
  author={Gast, Nicolas and Gaujal, Bruno and Yan, Chen},
  journal={arXiv preprint arXiv:2211.01961},
  year={2022}
}

@inproceedings{dai2017balanced,
  title={A balanced assignment mechanism for online taxi recommendation},
  author={Dai, Guang and Huang, Jianbin and Wambura, Stephen Manko and Sun, Heli},
  booktitle={2017 18th IEEE international conference on mobile data management (MDM)},
  pages={102--111},
  year={2017},
  organization={IEEE}
}

@article{mandal2022socially,
  title={Socially fair reinforcement learning},
  author={Mandal, Debmalya and Gan, Jiarui},
  journal={arXiv preprint arXiv:2208.12584},
  year={2022}
}

@book{moulin1991axioms,
  title={Axioms of cooperative decision making},
  author={Moulin, Herv{\'e}},
  number={15},
  series={1},
  year={1991},
  publisher={Cambridge university press}
}

@article{weymark1981generalized,
  title={Generalized Gini inequality indices},
  author={Weymark, John A},
  journal={Mathematical social sciences},
  volume={1},
  number={4},
  pages={409--430},
  year={1981},
  publisher={Elsevier}
}

@article{tsang2025unified,
  title={A Unified Framework for Analyzing and Optimizing a Class of Convex Fairness Measures},
  author={Tsang, Man Yiu and Shehadeh, Karmel S},
  journal={Operations Research},
  year={2025},
  publisher={INFORMS}
}

@inproceedings{bhattacharya2024active,
  title={Active Learning for Fair and Stable Online Allocations},
  author={Bhattacharya, Riddhiman and Nguyen, Thanh and Sun, Will Wei and Tawarmalani, Mohit},
  booktitle={Proceedings of the 25th ACM Conference on Economics and Computation},
  pages={196--197},
  year={2024}
}

@inproceedings{chen2020fair,
  title={Fair contextual multi-armed bandits: Theory and experiments},
  author={Chen, Yifang and Cuellar, Alex and Luo, Haipeng and Modi, Jignesh and Nemlekar, Heramb and Nikolaidis, Stefanos},
  booktitle={Conference on Uncertainty in Artificial Intelligence},
  pages={181--190},
  year={2020},
  organization={PMLR}
}

@inproceedings{huang2024statistical,
  title={On the statistical efficiency of mean-field reinforcement learning with general function approximation},
  author={Huang, Jiawei and Yardim, Batuhan and He, Niao},
  booktitle={International Conference on Artificial Intelligence and Statistics},
  pages={289--297},
  year={2024},
  organization={PMLR}
}

@inproceedings{hassanzadeh2023sequential,
  title={Sequential fair resource allocation under a markov decision process framework},
  author={Hassanzadeh, Parisa and Kreacic, Eleonora and Zeng, Sihan and Xiao, Yuchen and Ganesh, Sumitra},
  booktitle={Proceedings of the Fourth ACM International Conference on AI in Finance},
  pages={673--680},
  year={2023}
}

@article{sinha2023no,
  title={No-regret algorithms for fair resource allocation},
  author={Sinha, Abhishek and Joshi, Ativ and Bhattacharjee, Rajarshi and Musco, Cameron and Hajiesmaili, Mohammad},
  journal={Advances in Neural Information Processing Systems},
  volume={36},
  pages={48083--48109},
  year={2023}
}

@article{ghalme2021long,
  title={Long-term resource allocation fairness in average markov decision process (amdp) environment},
  author={Ghalme, Ganesh and Nair, Vineet and Patil, Vishakha and Zhou, Yilun},
  journal={arXiv preprint arXiv:2102.07120},
  year={2021}
}

@inproceedings{wen2021algorithms,
  title={Algorithms for fairness in sequential decision making},
  author={Wen, Min and Bastani, Osbert and Topcu, Ufuk},
  booktitle={International Conference on Artificial Intelligence and Statistics},
  pages={1144--1152},
  year={2021},
  organization={PMLR}
}

@article{kelly1998rate,
  title={Rate control for communication networks: shadow prices, proportional fairness and stability},
  author={Kelly, Frank P and Maulloo, Aman K and Tan, David Kim Hong},
  journal={Journal of the Operational Research society},
  volume={49},
  number={3},
  pages={237--252},
  year={1998},
  publisher={Taylor \& Francis}
}

@inproceedings{bistritz2020my,
  title={My fair bandit: Distributed learning of max-min fairness with multi-player bandits},
  author={Bistritz, Ilai and Baharav, Tavor and Leshem, Amir and Bambos, Nicholas},
  booktitle={International Conference on Machine Learning},
  pages={930--940},
  year={2020},
  organization={PMLR}
}

@article{meherrem2024stochastic,
  title={A stochastic maximum principle for general mean-field system with constraints},
  author={Meherrem, Shahlar and Hafayed, Mokhtar},
  journal={Numerical Algebra, Control and Optimization},
  pages={0--0},
  year={2024},
  publisher={Numerical Algebra, Control and Optimization}
}

@inproceedings{yang2018mean,
  title={Mean field multi-agent reinforcement learning},
  author={Yang, Yaodong and Luo, Rui and Li, Minne and Zhou, Ming and Zhang, Weinan and Wang, Jun},
  booktitle={International conference on machine learning},
  pages={5571--5580},
  year={2018},
  organization={PMLR}
}

@inproceedings{li2019efficient,
  title={Efficient ridesharing order dispatching with mean field multi-agent reinforcement learning},
  author={Li, Minne and Qin, Zhiwei and Jiao, Yan and Yang, Yaodong and Wang, Jun and Wang, Chenxi and Wu, Guobin and Ye, Jieping},
  booktitle={The world wide web conference},
  pages={983--994},
  year={2019}
}

@book{bensoussan2013mean,
  title={Mean field games and mean field type control theory},
  author={Bensoussan, Alain and Frehse, Jens and Yam, Phillip and others},
  volume={101},
  year={2013},
  publisher={Springer}
}

@article{andersson2011maximum,
  title={A maximum principle for SDEs of mean-field type},
  author={Andersson, Daniel and Djehiche, Boualem},
  journal={Applied Mathematics \& Optimization},
  volume={63},
  pages={341--356},
  year={2011},
  publisher={Springer}
}

@article{jusup2023safe,
  title={Safe model-based multi-agent mean-field reinforcement learning},
  author={Jusup, Matej and P{\'a}sztor, Barna and Janik, Tadeusz and Zhang, Kenan and Corman, Francesco and Krause, Andreas and Bogunovic, Ilija},
  journal={International conference on autonomous agents and multiagent systems},
  year={2024}
}

@article{cui2023major,
  title={Major-minor mean field multi-agent reinforcement learning},
  author={Cui, Kai and Fabian, Christian and Tahir, Anam and Koeppl, Heinz},
  journal={International Conference on Machine Learning},
  year={2024}
}

@inproceedings{cui2024learning,
  title={Learning discrete-time major-minor mean field games},
  author={Cui, Kai and Dayan{\i}kl{\i}, G{\"o}k{\c{c}}e and Lauri{\`e}re, Mathieu and Geist, Matthieu and Pietquin, Olivier and Koeppl, Heinz},
  booktitle={Proceedings of the AAAI Conference on Artificial Intelligence},
  volume={38},
  pages={9616--9625},
  year={2024}
}

@article{bauerle2023mean,
  title={Mean field Markov decision processes},
  author={B{\"a}uerle, Nicole},
  journal={Applied Mathematics \& Optimization},
  volume={88},
  number={1},
  pages={12},
  year={2023},
  publisher={Springer}
}

@article{chen2025primal,
  title={A primal-dual approach to constrained markov decision processes with applications to queue scheduling and inventory management},
  author={Chen, Yi and Dong, Jing and Wang, Zhaoran and Zhang, Chuheng},
  journal={Management Science},
  year={2025},
  publisher={INFORMS}
}

@article{nadarajah2025self,
  title={Self-adapting network relaxations for weakly coupled markov decision processes},
  author={Nadarajah, Selvaprabu and Cire, Andre A},
  journal={Management Science},
  volume={71},
  number={2},
  pages={1779--1802},
  year={2025},
  publisher={INFORMS}
}

@inproceedings{robledo2024deep,
  title={Deep reinforcement learning for weakly coupled MDP’s with continuous actions},
  author={Robledo, Francisco and Ayesta, Urtzi and Avrachenkov, Konstantin},
  booktitle={International Conference on Analytical and Stochastic Modeling Techniques and Applications},
  pages={67--80},
  year={2024},
  organization={Springer}
}

@article{brown2023strength,
  title={On the strength of relaxations of weakly coupled stochastic dynamic programs},
  author={Brown, David B and Zhang, Jingwei},
  journal={Operations Research},
  volume={71},
  number={6},
  pages={2374--2389},
  year={2023},
  publisher={INFORMS}
}

@article{el2023weakly,
  title={Weakly coupled deep Q-Networks},
  author={El Shar, Ibrahim and Jiang, Daniel},
  journal={Advances in Neural Information Processing Systems},
  volume={36},
  pages={43931--43950},
  year={2023}
}

@article{meuleau1998solving,
  title={Solving very large weakly coupled Markov decision processes},
  author={Meuleau, Nicolas and Hauskrecht, Milos and Kim, Kee-Eung and Peshkin, Leonid and Kaelbling, Leslie Pack and Dean, Thomas L and Boutilier, Craig},
  journal={AAAI/IAAI},
  volume={8},
  pages={2},
  year={1998}
}

@phdthesis{hawkins2003langrangian,
  title={A Langrangian decomposition approach to weakly coupled dynamic optimization problems and its applications},
  author={Hawkins, Jeffrey Thomas},
  year={2003},
  school={Massachusetts Institute of Technology}
}

@article{adelman2008relaxations,
  title={Relaxations of weakly coupled stochastic dynamic programs},
  author={Adelman, Daniel and Mersereau, Adam J},
  journal={Operations Research},
  volume={56},
  number={3},
  pages={712--727},
  year={2008},
  publisher={INFORMS}
}

@article{whittle1988restless,
  title={Restless bandits: Activity allocation in a changing world},
  author={Whittle, Peter},
  journal={Journal of applied probability},
  volume={25},
  number={A},
  pages={287--298},
  year={1988},
  publisher={Cambridge University Press}
}

@book{puterman2014markov,
  title={Markov decision processes: discrete stochastic dynamic programming},
  author={Puterman, Martin L},
  year={2005},
  publisher={John Wiley \& Sons}
}

\newpage
\appendix

\section*{\Large Electronic Companions of Paper ``Fair Policy Optimization in Major-Minor Weakly Coupled Markov Decision Processes''}
\setcounter{page}{1}

\vspace{22pt}

\section{Literature Review} \label{sec:2lr}
Motivated to address the large-scale coordination challenges and endogenous system-level information inherent in taxi dispatching systems, our work extends the standard weakly coupled Markov decision processes (WCMDP) framework to a major-minor formulation. This formulation is closely connected to the literature on traditional weakly coupled models and modern mean-field approaches, while explicitly integrating fairness notions.

\paragraph{Weakly Coupled Markov Decision Processes}
The foundational work by \cite{meuleau1998solving} recognized the potential for decomposition in resource-constrained systems, but the proposed heuristics were task-specific and without optimality guarantees. Subsequently, \cite{hawkins2003langrangian} introduced a Lagrangian relaxation approach for constrained stochastic dynamic programming problems to allow tractable approximations. \cite{adelman2008relaxations} extended these ideas through performance bounds and policy design using linear programming (LP) relaxations, thus providing theoretical justification for decomposition-based methods.

Recent advances continue to build on relaxation and decomposition approaches. For example, \cite{gast2022reoptimization} proposed LP-based policies with sublinear regret and bounded duality gaps as the size of sub-MDPs grows, while their practicability is based on repeatedly solving relaxed LPs in real time. \cite{brown2023strength} investigated the tightness of various relaxations for weakly coupled stochastic dynamic programs, quantifying the suboptimality introduced by standard linear relaxations and providing conditions under which tighter formulations significantly improve solution quality. \cite{chen2025primal} proposed a unified primal-dual framework that integrates policy gradient methods with dual variable updates to address resource-constrained planning problems more robustly, offering convergence guarantees under mild assumptions.

In parallel, several methods have attempted to address the limitations of classical decomposition techniques by introducing learning-based or more adaptive architectures. For example, \cite{el2023weakly} introduced a weakly coupled variant of deep Q-learning that partitions the value function estimation across sub-tasks and uses dual bounds to coordinate global feasibility. However, the choice and coverage of multiplier sets are unclear and potentially computationally expensive. Similarly, \cite{robledo2024deep} developed a deep reinforcement learning (RL) approach tailored to WCMDPs with continuous action spaces, which showed how neural function approximators can be integrated with soft constraint penalties to achieve efficient learning in high-dimensional problems.  \cite{nadarajah2025self} proposed self-adapting network relaxations that dynamically adjust relaxation tightness to the structure of linking constraints. These studies provide a promising direction for learning-based and structure-aware optimization in weakly coupled systems.

\paragraph{Mean-Field Control (MFC)} Mean-field approximations provide another powerful tool for modeling and solving large-population stochastic control problems. Initiated by \cite{andersson2011maximum} and formalized in \cite{bensoussan2013mean}, MFC framed control problems by enabling optimization directly on population-level distributions instead of individual interactions.

Recent work has extended MFC to more general and realistic settings. For example, \cite{bauerle2023mean} showed that, under weak coupling and ergodicity assumptions, the optimal control converges to the solution of a static optimization over stationary distributions as the population size grows to infinity. \cite{meherrem2024stochastic} established stochastic maximum principles for general mean-field systems with constraints, where coefficients depend nonlinearly on both the state process and its probability law. However, classical MFC typically assumes full knowledge of system dynamics and often relies on structural conditions such as regularity and convexity, limiting practical applicability. In contrast, our symmetry reduction is formulated for a finite population, and the count representation removes participant labels without taking an infinite-population limit.

\paragraph{Mean-Field Reinforcement Learning (MFRL)} MFRL directly extends MFC by integrating data-driven learning into large-population control, allowing each agent to interact with mean-field dynamics. \cite{yang2018mean} pioneered MFRL by developing mean-field Q-learning and actor-critic, and proved its convergence to a Nash equilibrium. Their work showed empirical success in resource allocation and coordination benchmarks, establishing MFRL as a scalable alternative to traditional multi-agent RL frameworks. \cite{huang2024statistical} refined the theory of MFRL, analyzing sample efficiency and learning complexity under general function approximation, and identifying limitations of previous convergence results.

This foundational result was applied in various transportation domains. For example, \cite{li2019efficient} applied this framework to large-scale ride-sharing dispatch, showing that mean-field interactions in a multi-agent RL setting could effectively scale. \cite{jusup2023safe} further extended this approach by incorporating log-barrier constraints into mean-field RL to ensure region-level safety and coverage, and validated the proposed methods in vehicle repositioning applications. 

Recent MFRL extensions introduce a major-minor structure, where a dominant agent influences a population of minor agents through mean-field interaction. \cite{cui2024learning, cui2023major} established finite-agent approximation results, proved that stationary policies suffice in the major-minor mean-field control setting, and introduced a policy gradient algorithm that effectively solves such structured multi-agent systems. This line of research is highly relevant to our work but differs in incorporating fairness objectives into a fully centralized major-minor setting, where minor actions are constrained by shared resources.

\paragraph{Fairness in Resource Allocation} 
Fairness concerns arise when allocating limited resources among multiple stakeholders. Classical notions of fairness in resource allocation have been grounded in network economics and game-theoretic models, including max-min fairness \citep{bistritz2020my}, proportional fairness \citep{kelly1998rate}, and $\alpha$-fairness \citep{mo2000fair}.

More recently, fairness has been extended to the sequential decision-making setting. \cite{wen2021algorithms} and \cite{ghalme2021long} explored fairness-aware formulations of MDPs, introducing objectives that account for equity across states or populations over time. These efforts have broadened the scope of fairness from static allocation to dynamic and learning-based environments.

In parallel, the online learning and bandits community has developed fairness-aware algorithms to make resource allocation decisions with performance guarantees. For example, \cite{chen2020fair} introduced a fairness-aware approach to contextual bandits that enforces minimum selection rates across users while achieving provable regret. \cite{sinha2023no} developed an online proportional fairness algorithm with regret bounds, while \cite{hassanzadeh2023sequential} introduced policies that optimize the Nash Social Welfare in sequential resource allocation problems with bounded optimality gaps. \cite{bhattacharya2024active} introduced actively sampled fair allocation algorithms that solicit feedback from a subset of agents per period, while achieving logarithmic regret in both fairness and matching stability. These works show a shift toward algorithmic frameworks that are not only efficient but also equitable.

\section{Boundary of the welfare class}\label{apx:welfare-examples}
To clarify the scope of the welfare class $\rho$ in Definition \ref{def:rho-fairness}, we categorize commonly used welfare functions into two classes: (i) \textit{directly compliant} measures that natively satisfy all four properties in Definition (\ref{def:rho-fairness}), such as the generalized Gini function (GGF) in Example \ref{exmp:ggf}, (ii) \textit{transformable} measures that satisfy the definition after an appropriate normalization or reformulation, such as the certainty-equivalent representation of $\alpha$-fairness in Example \ref{exmp:ce-alpha}, and the widely used mean-minus penalty introduced next. Table \ref{tab:fairness_measures} summarizes the two categories.

\begin{example}[Mean-Minus Penalty]
Another highly structural and widely applicable class of welfare measure can be constructed by subtracting a convex inequality penalty $\varphi(\vv)$ from the utilitarian mean $\bar{v} = \frac{1}{N}\sum_{i=1}^N v_i$, formulated as $\rho[\vv] = \bar{v} - c \cdot \varphi(\vv)$. If $\varphi(\vv)$ is convex, symmetric, and vanishes under perfect equality such that $\rho[c\vone] = c$, the resulting $\rho[\vv]$ directly satisfies Definition \ref{def:rho-fairness}. This unified form broadly includes Gini dispersion \citep{weymark1981generalized}, Gini coefficient \citep{sen1976real}, mean absolute deviation \citep{xinying2023guide}, and other inequality indices \citep{speicher2018unified, cousins2021axiomatic}. \hfill $\square$
\end{example}

\begin{table}\TABLE
    {Fairness Measures and Definition \ref{def:rho-fairness} Compliance \label{tab:fairness_measures}}
    {\begin{tabular}{lcc}
    \hline
    {Measure} & {Violated Properties} & {Transformation for Definition \ref{def:rho-fairness}}\\
    \hline
    \multicolumn{3}{c}{\textit{Category I: Directly Compliant Class (Natively Satisfying Definition \ref{def:rho-fairness})}} \\
    Utilitarian (Mean) & None & N/A\\
    Generalized Gini \citep{weymark1981generalized} & None & N/A\\
    Max-Min Fairness \citep{rawls1971egalitarian} & None & N/A\\
    \multicolumn{3}{c}{\textit{Category II: Transformed Class (via Reformulation)}} \\
    Standard $\alpha$-Fairness ($\alpha \neq$ 1) \citep{mo2000fair} & CVI & \textbf{CE Form}: $\rho_\alpha[\vv] = \left(\frac{1}{N}\sum v_n^{1-\alpha}\right)^{\frac{1}{1-\alpha}}$ \\
    Nash Social Welfare ($\alpha = 1$) \citep{fan2023welfare} & CVI & \textbf{CE Form}: $\rho_1[\vv] = \left(\prod v_n\right)^{\frac{1}{N}}$ \\
    Gini Coefficient \citep{sen1976real} & MN, CV, CVI & \textbf{Mean-Minus Form}: $\rho[\vv] = \bar{v} - \bar{v} \cdot \text{Gini}(\vv)$ \\
    \hline
    \end{tabular}}
    {\textit{Note.} For compactness, MN stands for monotonicity, CV for concavity, PI for permutation invariance, and CVI for constant vector invariance, as in Definition \ref{def:rho-fairness}. CE denotes certainty-equivalent as in Example \ref{exmp:ce-alpha}.}
\end{table}

While our framework is expansive, Definition \ref{def:rho-fairness} intentionally excludes certain fairness concepts to preserve mathematical tractability. For example, \textit{threshold fairness} objectives that seek to maximize the number of participants exceeding a specific performance baseline (e.g., $|\{n : v_n \geq c\}|$) are inherently step-functions that fundamentally violate CV. The resulting integer programming formulations are combinatorial and incompatible with gradient-based policy optimization \citep{xinying2023guide}. Additionally, \textit{priority-weighted welfare} structures where the platform assigns heterogeneous importance weights to specific individuals (e.g., higher weights for important customers such that $\sum_{n=1}^N w_n v_n$ where $w_i \neq w_j$) violate PI \citep{moulin1991axioms}.

\section{Proofs of Section \ref{sec:4reduction}}
We start this section with some preliminary results regarding 1) the effect of replacing a policy with one that has permuted indices on the value function of a symmetric major-minor weakly coupled Markov decision process (M2WCMDP) in \ref{apx:value_function_under_permuted}; and 2) a well-known result from \cite{puterman2014markov} on the equivalency between stationary policies and occupancy measures in \ref{apx:mapping-pi-x}. This is followed by the proof of Lemma \ref{thm:bar-policy} on the uniform state-value representation in \ref{apx:bar-policy}, which helps establish our main result, Theorem \ref{thm:eqv} in \ref{apx:thm:eqv}.

\subsection{Value Function under Permuted Policy for Symmetric M2WCMDP}\label{apx:value_function_under_permuted}

\begin{lemma}\label{thm:piQVQ}
    If a M2WCMDP is symmetric, then for any policy $\vpi$ and permutation operator $Q$, we have $\Vo({\vpi}) = Q \Vo({\ppi})$ and $V^0_0({\vpi}) = V^0_0({\ppi})$, where the permuted policy $\ppi(a^0, \a \mid s^0, \s):=\vpi(a^0, Q\a \mid s^0, Q\s)$ for all $(s^0, \s, a^0, \a)$ tuples.
\end{lemma}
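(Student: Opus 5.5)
We argue by a pathwise coupling of the two controlled processes. Fix $Q\in\G$ and define the relabelled process $(\tilde s^0_t,\tilde\s_t,\tilde a^0_t,\tilde\a_t):=(s^0_t,Q\s_t,a^0_t,Q\a_t)$, where $(s^0_t,\s_t,a^0_t,\a_t)$ is generated by $\ppi$ from $\bm\mu$. The plan is to show that the relabelled process has exactly the law of the process generated by $\vpi$ from $\bm\mu$. Once this holds, the value identities follow by reading off rewards component by component.

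\emph{Step 1: the three ingredients transform correctly.}
\begin{itemize}
\item Initial law. By item 2 of Definition \ref{def:symmetry-M2WCMDP}, $(s^0_0,Q\s_0)$ has law $\bm\mu$.
\item Policy. Given the state $(s^0,\s)$, the relabelled action $(a^0,Q\a)$ is drawn with probability $\ppi(a^0,\a\mid s^0,\s)=\vpi(a^0,Q\a\mid s^0,Q\s)$. This is exactly $\vpi$ evaluated at the relabelled state.
\item Feasibility. $\An(s^0,\s)$ is permutation-stable, because the coupling constraints in (\ref{eq:constraint}) are sums over $n$ of $d_k(a^n\mid s^n)$ with common $d_k$. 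Hence $\ppi$ is feasible, i.e.\ $\ppi\in\Pi$.
\item Minor transitions. The product kernel $\Pn(Q\tilde\s\mid s^0,Q\s,a^0,Q\a)=\Pn(\tilde\s\mid s^0,\s,a^0,\a)$. This holds because all minor sub-MDPs share the same $p$ (item 1), so permuting indices only reorders the factors of the product.
\item Major transitions. The major kernel is invariant under relabelling by item 3.
\end{itemize}

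\emph{Step 2: equality of laws.} By induction on $t$, using the Markov factorization of the path probability, the relabelled trajectory distribution under $\ppi$ equals the trajectory distribution under $\vpi$.

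\emph{Step 3: reading off the values.} The major reward is invariant by item 3. Therefore $V^0_0(\ppi)=\mathbb{E}_{\ppi}[\sum_t\gamma^t r^0(s^0_t,Q\s_t,a^0_t,Q\a_t)]=V^0_0(\vpi)$.

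For the minors, fix a convention for the permutation matrix: $(Q\s)^n=s^{\sigma(n)}$. Since the reward $r$ is common to all minors,
\[
V^n_0(\vpi)=\mathbb{E}\Big[\sum_t\gamma^t r\big(\tilde s^0_t,\tilde s^n_t,\tilde a^0_t,\tilde a^n_t\big)\Big]
=\mathbb{E}_{\ppi}\Big[\sum_t\gamma^t r\big(s^0_t,s^{\sigma(n)}_t,a^0_t,a^{\sigma(n)}_t\big)\Big]
=V^{\sigma(n)}_0(\ppi).
\]
This says $\Vo(\vpi)=Q\Vo(\ppi)$, because the same $Q$ acting on vectors reorders components in the same way. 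Interchanging the sum and the expectation is justified by bounded rewards (finite spaces) and $\gamma<1$.

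\emph{Main obstacle.} The only delicate point is bookkeeping: making sure that $Q$ acting on $\s$, on $\a$, and on the value vector uses one consistent convention. If the convention were mismatched, the conclusion would read $Q^\top$ instead of $Q$. I would fix $(Q\vv)^n=v^{\sigma(n)}$ once and for all at the start and check every step against it.
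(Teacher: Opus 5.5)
Your proposal is correct and takes essentially the same route as the paper. Both arguments run an induction on $t$ showing that the relabelled process $(s^0_t,Q\s_t,a^0_t,Q\a_t)$ under $\ppi$ has the law of the process under $\vpi$, using the identical minor kernels and the invariance of the major kernel, and then read off $\Vo(\vpi)=Q\Vo(\ppi)$ and $V^0_0(\vpi)=V^0_0(\ppi)$ from the permutation invariance of $\bm{\mu}$, the common reward $r$, and the invariance of $r^0$. The differences are cosmetic: you build the initial law into the coupling instead of conditioning on $(\bar{s}^0_0,\so)$ and summing afterwards, and you explicitly check that $\ppi$ is feasible, which the paper leaves implicit in this lemma.
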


This lemma implies an important equivalency in symmetric M2WCMDPs with identical minor sub-MDPs. If we permute the states and actions of a policy, the permuted version of the resulting value function is equivalent to the original value function.

\begin{proof}{Proof.}
Fix a permutation operator $Q\in\G$. We can first show that for all $t \geq 0$,
\begin{equation}\label{eq:permuted-trajectory}
    \begin{aligned}
        &\Prob^{\ppi}(s^0_t=s^0, \s_t=\s, a^0_t=a^0, \a_t=\a|s_0^0 = \bar{s}_0^0, \s_0=\so) \\
        &\qquad = \Prob^{\vpi}(s^0_t=s^0, \s_t=Q \s, a^0_t=a^0, \a_t = Q \a|s_0^0 = \bar{s}_0^0, \s_0 = Q \so).
    \end{aligned}
\end{equation}

This can be done inductively. Starting at $t=0$, we have that, for all $(\bar{s}_0^0, \so, s^0, \s, a^0, \a)$:
\begin{align*}
    & \Prob^{\ppi}(s_0^0 = s^0, \s_0=\s,a^0_0 = a^0, \a_0=\a|s_0^0 = \bar{s}_0^0, \s_0=\so) \\
    &={\ppi(a^0, \a \mid s^0, \s)}\sI\{s_0^0 = \bar{s}_0^0, \s=\so\}\\
    &= {\vpi(a^0, Q\a \mid s^0, Q\s)\sI\{s_0^0 = \bar{s}_0^0, \s=\so\}}\\
    &= {\vpi(a^0, Q\a \mid s^0, Q\s)}\sI\{s_0^0 = \bar{s}_0^0, Q\s=Q\so\}\\
    &=\Prob^{\vpi}(s_0^0 = s^0, \s_0=Q \s, a^0_0 = a^0, \a_0=Q \a|s_0^0 = \bar{s}_0^0, \s_0=Q\so).
\end{align*}

Next, assuming that Equation (\ref{eq:permuted-trajectory}) holds for all $(\bar{s}_0^0, \so, s^0, \s, a^0, \a)$. By applying this to an arbitrary tuple $(\bar{s}_0^0, \so, \tilde{s}^0, \tilde{\s}, \tilde{a}^0, \tilde{\a})$, we can show that it is also the case for $t+1$, namely:
\begin{align*}
    &\Prob^{\ppi}(s^0_{t+1}=\tilde{s}^0, \s_{t+1}=\tilde{\s}, a^0_{t+1}=\tilde{a}^0, \a_{t+1}=\tilde{\a}|s_0^0 = \bar{s}_0^0, \s_0=\so) \\
    &= \ppi(\tilde{a}^0, \tilde{\a} \mid \tilde{s}^0, \tilde{\s})\sum_{s^0, \s, a^0, \a}  \Prob(s^0_{t+1} = \tilde{s}^0, \s_{t+1}=\tilde{\s}|s^0_t=s^0, \s_t=\s, a^0_t=a^0, \a_t=\a) \\
    & \qquad \cdot \Prob^{\ppi}(s^0_t=s^0, \s_t=\s, a^0_t=a^0, \a_t=\a|s_0^0 = \bar{s}_0^0, \s_0=\so)\\
\end{align*}
\begin{align*}
    &= \vpi(\tilde{a}^0, Q \tilde{\a} \mid \tilde{s}^0, Q \tilde{\s})\sum_{s^0, \s, a^0, \a} \Pn(\tilde{\s}|s^0, \s, a^0 ,\a) p^0(\tilde{s}^0 | s^0, \s, a^0, \a)\\
    & \qquad \cdot \Prob^{\vpi}(s^0_t=s^0, \s_t=Q \s, a^0_t=a^0, \a_t = Q \a|s_0^0 = \bar{s}_0^0, \s_0= Q \so)\\
    &= \vpi(\tilde{a}^0, Q \tilde{\a} \mid \tilde{s}^0, Q \tilde{\s})\sum_{s^0, \s, a^0, \a} \Pn(Q \tilde{\s}|s^0, Q \s, a^0, Q \a) p^0(\tilde{s}^0 | s^0, Q \s, a^0, Q \a)\\
    &\qquad \cdot \Prob^{\pi}(s^0_t = s^0, \s_t=Q \s, a^0_t = a^0, \a_t=Q \a|s_0^0 = \bar{s}_0^0, \s_0=Q \so)\\
    &=\Prob^{\vpi}(s^0_{t+1}=\tilde{s}^0, \s_{t+1}=Q \tilde{\s}, a^0_{t+1}=\tilde{a}^0, \a_{t+1}=Q \tilde{\a}|s_0^0 = \bar{s}_0^0, \s_0=Q\so),
\end{align*}
where we used the fact that the minor sub-MDPs are identical so $\Pn(\tilde{\s} \mid s^0, \s, a^0, \a)=\Pn(Q\tilde{\s} \mid s^0, Q\s, a^0, Q\a)$ {according to minor kernel equivariance (Definition \ref{def:symmetry-M2WCMDP}.1)} and $p^0(\tilde{s}^0 \mid s^0, \s, a^0, \a) = p^0(\tilde{s}^0 \mid s^0, Q \s, a^0, Q \a)$ {according to major kernel equivariance (Definition \ref{def:symmetry-M2WCMDP}.2)}.

We now have that,
\begin{equation*}
    \begin{aligned}
    &\V_0({\ppi}) = \sum_{s^0, \s,a^0, \a}\sum_{\bar{s}_0^0, \so} \bm{\mu}(\bar{s}_0^0, \so)\sum_{t=0}^\infty \gamma^t \Prob^{\ppi}(s^0_t=s^0, \s_t=\s, a^0_t=a^0, \a_t=\a \mid s_0^0 = \bar{s}_0^0, \s_0=\so) \r(s^0, \s, a^0, \a)\\
    &= \sum_{s^0, \s,a^0, \a}\sum_{\bar{s}_0^0, \so} \bm{\mu}(\bar{s}_0^0, \so)\sum_{t=0}^\infty \gamma^t \Prob^{\vpi}(s^0_t = s^0, \s_t= Q \s,a^0_t = a^0,\a_t= Q\a|s_0^0 = \bar{s}_0^0,\s_0=Q\so) \r(s^0, \s, a^0, \a)\\
    &= \sum_{s^0, \s,a^0, \a}\sum_{\bar{s}_0^0, \so} \bm{\mu}(\bar{s}_0^0, Q \so)\sum_{t=0}^\infty \gamma^t \Prob^{\vpi}(s^0_t = s^0, \s_t= Q \s,a^0_t = a^0,\a_t= Qa|s^0_0 =\bar{s}^0_0, \s_0=Q\so) \r(s^0, \s, a^0, \a)\\
    &= \sum_{s^0, \s,a^0, \a}\sum_{\bar{s}_0^0, \so} \bm{\mu}(\bar{s}_0^0, Q \so)\sum_{t=0}^\infty \gamma^t \Prob^{\vpi}(s^0_t = s^0,\s_t= Q \s,a^0_t = a^0,\a_t= Q\a|s^0_0=\bar{s}^0_0, \s_0=Q\so) Q^{-1}\r(s^0, Q\s, a^0,Q\a)\\
    &= Q^{-1}\left(\sum_{s^0, \s,a^0, \a}\sum_{\bar{s}_0^0, \so} \bm{\mu}(\bar{s}_0^0, Q \so)\sum_{t=0}^\infty \gamma^t \Prob^{\vpi}(s^0_t = s^0,\s_t= Q \s,a^0_t = a^0,\a_t= Q\a|s_0^0 = \bar{s}_0^0, \s_0=Q\so) \r(s^0, Q\s, a^0,Q\a)\right)\\
    &= Q^{-1} \left(\sum_{s^0, \s', a^0, \a'}\sum_{\bar{s}^0_0, \so'} \bm{\mu}(\bar{s}^0_0, \so')\sum_{t=0}^\infty \gamma^t \Prob^{\vpi}(s^0_t = s^0,\s_t= \s',a^0_t = a^0,\a_t= \a'|s^0_0=\bar{s}^0_0, \s_0=\so') \r(s^0, \s', a^{0}, \a')\right) \\
    & = Q^{-1} \V_0({\vpi})
    \end{aligned}
\end{equation*}
where we first use the relation between $\mathbb{P}^{\ppi}$ and $\vpi$, then exploit the permutation invariance of $\bm{\mu}$. We then exploit the permutation invariance {$\r(s^0, Q\s, a^0, Q\a)=Q^{-1} \r(s^0, \s, a^0, \a)$, maintaining consistent multiplication by $Q^{-1}$,} and reindex the summations using $\s':=Q \s$, $\a':=Q \a$, and $\so':=Q\so$. 

{Furthermore, $V^0_0({\vpi}) = V^0_0({\ppi})$ follows by the same reindexing argument using the scalar invariance $r^0(s^0,\s,a^0,\a)=r^0(s^0,Q\s,a^0,Q\a)$ as in Definition \ref{def:symmetry-M2WCMDP}.} \hfill $\square$
\end{proof}

\subsection{Mapping between stationary policies and occupancy measures}\label{apx:mapping-pi-x}

\begin{lemma} (Theorem 6.9.1 of \cite{puterman2014markov})\label{prop:x-pi-x}
    Let $\Pi$ denote the set of stationary stochastic Markov policies and $\mathcal{X}$ the set of occupancy measures.
    \begin{enumerate}[leftmargin=*]
        \item For any policy $\vpi \in \Pi$, an occupancy measure $q_{\vpi}: \S^0 \times \Sn \times \A^0 \times \An \rightarrow {[0, 1/(1-\gamma)]}\in\mathcal{X}$ is obtained as
            \begin{equation}
                q_{\vpi}(s^0,\s, a^0, \a) := \sum_{\bar{s}^0_0, \so} \bm{\mu}(\bar{s}^0_0, \so) \sum_{t=0}^{\infty} \gamma^{t} \Prob^{\vpi}\left(s_t^0 = s^0, \s_t=\s, a_t^0 = a^0, \a_t=\a | s^0_0 = \bar{s}^0_0, \s_0=\so \right),
            \label{eq:x-pi}
            \end{equation}
        for all $s^0 \in \S^0, \s \in \Sn$ and $a^0 \in \A^0, \a \in \An$.
        \item For any occupancy measure $q(s^0, \s, a^0, \a): \S^0 \times \Sn \times \A^0 \times \An \rightarrow {[0, 1/(1-\gamma)]}\in\mathcal{X}$, a policy $\vpi_{q}$ can be constructed as
            \begin{equation}
                \vpi_{q}(a^0, \a \mid s^0, \s):=\frac{q(s^0, \s, a^0, \a)}{\sum\limits_{a^0, \a} q\left(s^0, \s, a^0, \a\right)},
            \label{eq:pi-x}
            \end{equation}
        for all $s^0 \in \S^0, \s \in \Sn$ and $a^0 \in \A^0, \a \in \An$ {such that $\sum_{a^0, \a} q\left(s^0, \s, a^0, \a\right) > 0$ and letting the policy be arbitrary on unreachable states.}
    \end{enumerate}
    One necessarily has that for all $q\in\mathcal{X}$,  $q=q_{\pi_q}$.
\end{lemma}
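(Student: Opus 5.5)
The plan follows the standard linear-programming argument for discounted MDPs, applied to the joint labelled MDP with state $(s^0,\s)$ and action $(a^0,\a)$. For brevity write $z:=(s^0,\s)$ and $y:=(a^0,\a)$. Let $P(\tilde z\mid z,y):=p^0(\tilde s^0\mid s^0,\s,a^0,\a)\,\Pn(\tilde\s\mid s^0,\s,a^0,\a)$ be the joint transition kernel, and $\bm\mu(z)$ the initial distribution. I would first make $\mathcal X$ explicit as the set of nonnegative functions $q$ on feasible state--action pairs that satisfy the discounted flow-balance equations
\begin{equation*}
\sum_{y} q(\tilde z,y)=\bm\mu(\tilde z)+\gamma\sum_{z,y}P(\tilde z\mid z,y)\,q(z,y),\qquad\forall \tilde z .
\end{equation*}
Summing these equations over $\tilde z$ gives total mass $1/(1-\gamma)$. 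Hence every entry of $q$ lies in $[0,1/(1-\gamma)]$, consistent with the stated range.

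For part 1, fix $\vpi\in\Pi$ and let $q_{\vpi}$ be defined by (\ref{eq:x-pi}). Nonnegativity is immediate. For flow balance, I would split the series at $t=0$. The $t=0$ term, summed over actions, equals $\bm\mu(\tilde z)$. For $t\ge1$, the Markov property gives
\begin{equation*}
\Prob^{\vpi}(z_t=\tilde z)=\sum_{z,y}P(\tilde z\mid z,y)\,\Prob^{\vpi}(z_{t-1}=z,y_{t-1}=y).
\end{equation*}
Pulling out one factor $\gamma$ then yields $\gamma\sum_{z,y}P(\tilde z\mid z,y)\,q_{\vpi}(z,y)$. Interchanging the sums is justified by absolute convergence, since all terms are nonnegative and bounded by $\gamma^t$.

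For part 2 and the identity $q=q_{\vpi_q}$, take $q\in\mathcal X$ and let $x(z):=\sum_y q(z,y)$ be its state marginal. By construction $q(z,y)=\vpi_q(y\mid z)\,x(z)$ for every $z$. When $x(z)=0$ this holds trivially, because both sides vanish regardless of how $\vpi_q$ is chosen there; so the arbitrary choice on unreachable states is harmless. Substituting into flow balance shows that $x$ solves the linear system $x=\bm\mu+\gamma P_{\vpi_q}^\top x$, where $P_{\vpi_q}(\tilde z\mid z):=\sum_y\vpi_q(y\mid z)P(\tilde z\mid z,y)$. By part 1, the marginal $x'$ of $q_{\vpi_q}$ solves the same system. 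Since $P_{\vpi_q}$ is stochastic and $\gamma<1$, the matrix $I-\gamma P_{\vpi_q}^\top$ is invertible; this follows from a Neumann series argument, or from the fact that $\|\gamma P^\top_{\vpi_q}\|_1\le\gamma$. Therefore $x=x'$. Finally,
\begin{equation*}
q_{\vpi_q}(z,y)=\vpi_q(y\mid z)\,x'(z)=\vpi_q(y\mid z)\,x(z)=q(z,y).
\end{equation*}

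The main obstacle is the bookkeeping in part 2. One must check that the policy $\vpi_q$ is supported on the feasible sets $\A^0\times\An(s^0,\s)$, which holds because $q$ is defined only on feasible pairs. One must also argue that the zero-marginal states do not break the identity $q=\vpi_q x$. Once both are settled, everything else is the uniqueness of the solution to the discounted linear system.
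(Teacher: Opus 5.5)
Your argument is correct and is essentially the standard proof of Theorem 6.9.1 in Puterman, which the paper cites rather than reproving. You identify $\mathcal{X}$ with the nonnegative solutions of the discounted flow-balance (dual LP) constraints, verify these for $q_{\vpi}$ via the Markov property, and obtain $q=q_{\vpi_q}$ from uniqueness of the solution to $x=\bm\mu+\gamma P_{\vpi_q}^\top x$; your treatment of zero-marginal states, where Puterman instead assumes a full-support initial distribution, correctly justifies the paper's ``arbitrary on unreachable states'' clause.
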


Now, we show that the value functions can be represented using occupancy measures.

\begin{lemma} \label{prop:xr=muv}
    For any policy $\vpi \in \Pi$, and the occupancy measure $q_{\vpi}$ defined by (\ref{eq:x-pi}), the expected total discounted rewards under the policy $\vpi$ can be expressed as:
    \begin{equation}\label{eq:vpi}
        \Vo({\vpi}) = \sum_{s^0, \s, a^0, \a} q_{\vpi}(s^0, \s, a^0, \a) \r(s^0, \s, a^0, \a).
    \end{equation}
    and
    \begin{equation}\label{eq:v0pi}
        V^0_0({\vpi}) = \sum_{s^0, \s, a^0, \a} q_{\vpi}(s^0, \s, a^0, \a) r^0 (s^0, \s, a^0, \a),
    \end{equation}
\end{lemma}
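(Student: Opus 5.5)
The plan is to expand the definition of $\Vo(\vpi)$ as a discounted sum of expectations and regroup it by state--action tuples, with the occupancy measure $q_{\vpi}$ of (\ref{eq:x-pi}) appearing as the coefficient of each reward.

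Starting from (\ref{eq:vector-component}), I would first condition on the initial state, using the law of total expectation:
\[
V^n_0(\vpi)=\sum_{\bar s^0_0,\so}\bm{\mu}(\bar s^0_0,\so)\,\E_{\vpi}\Bigl[\sum_{t=0}^\infty\gamma^t r^n(s^0_t,s^n_t,a^0_t,a^n_t)\Bigm| s^0_0=\bar s^0_0,\s_0=\so\Bigr].
\]

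Next I would interchange the infinite sum with the expectation. This is justified because the joint state and action spaces are finite, so rewards are bounded by some $R_{\max}$, and $\sum_t\gamma^t R_{\max}<\infty$ since $\gamma<1$. Dominated convergence (or Fubini) therefore applies.

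Then I would write each per-period expectation as a finite sum over the time-$t$ state--action distribution:
\[
\E_{\vpi}[r^n(\cdot)\mid\cdot]=\sum_{s^0,\s,a^0,\a}\Prob^{\vpi}(s^0_t=s^0,\s_t=\s,a^0_t=a^0,\a_t=\a\mid\cdot)\,r^n(s^0,s^n,a^0,a^n).
\]

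The remaining step is to exchange the order of summation. The sums over the finite state--action space and over the finite initial states can be moved outside the series in $t$. For fixed $(s^0,\s,a^0,\a)$, the inner series is then exactly $q_{\vpi}(s^0,\s,a^0,\a)$ as defined in (\ref{eq:x-pi}), multiplied by $r^n(s^0,s^n,a^0,a^n)$. This is just the $n$-th component of $\r(s^0,\s,a^0,\a)$. Stacking over $n\in[N]$ gives (\ref{eq:vpi}).

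Running the identical argument with the scalar reward $r^0(s^0,\s,a^0,\a)$ in place of $r^n$ yields (\ref{eq:v0pi}).

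The only point that needs care is the interchange of limits. It is settled by absolute convergence: the sum is bounded by $R_{\max}/(1-\gamma)$, and $q_{\vpi}$ takes values in $[0,1/(1-\gamma)]$ as stated in Lemma \ref{prop:x-pi-x}. Everything else is bookkeeping, so I expect no genuine obstacle.
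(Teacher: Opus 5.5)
Your proposal is correct and follows essentially the same route as the paper's proof. You expand $V^n_0(\vpi)$ by conditioning on the initial state, write it as a sum over time and state--action tuples, and regroup so that the inner series is exactly $q_{\vpi}$ from (\ref{eq:x-pi}), then repeat the argument for $r^0$. Your explicit justification of interchanging the series with the finite sums, via bounded rewards and $\gamma<1$, is a welcome detail that the paper leaves implicit.
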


\begin{proof}{Proof.}
Expanding the expected total discounted rewards $\Vo(\vpi)$ (as defined by Equation \ref{eq:vector} and \ref{eq:vector-component}), we have:
\begin{equation*}
    \Vo({\vpi}) = \sum_{\bar{s}^0_0, \so} \bm{\mu}(\bar{s}^0_0, \so) \sum_{s^0, \s, a^0, \a} \sum_{t=0}^{\infty} \gamma^t \Prob^{\vpi}\left(s_t^0 = s^0, \s_t=\s, a_t^0 = a^0, \a_t=\a | s^0_0 = \bar{s}^0_0, \s_0=\so \right) \r(s^0, \s, a^0, \a).
\end{equation*}
Rearranging the terms:
\begin{equation*}
    \resizebox{\linewidth}{!}{$
    \Vo({\vpi}) = \sum \limits_{s^0, \s, a^0, \a} \left(\sum \limits_{\bar{s}^0_0, \so} \bm{\mu}(\bar{s}^0_0, \so) \sum \limits_{t=0}^{\infty} \gamma^t \Prob^{\vpi}\left(s_t^0 = s^0, \s_t=\s, a_t^0 = a^0, \a_t=\a | s^0_0 = \bar{s}^0_0, \s_0=\so \right) \right) \r(s^0, \s, a^0, \a).
    $}
\end{equation*}
By replacing the term in parentheses by the occupancy measure $q_{\vpi}(s^0, \s, a^0, \a)$ in \eqref{eq:x-pi}, we got:
\begin{equation*}
    \Vo({\vpi}) = \sum_{s^0, \s, a^0, \a} q_{\vpi}(s^0, \s, a^0, \a) \r(s^0, \s, a^0, \a).
\end{equation*}
Similarly as above, we can prove equation (\ref{eq:v0pi}). This completes the proof. \hfill $\square$
\end{proof}

\subsection{Proof of Lemma \ref{thm:bar-policy}}\label{apx:bar-policy}
\setcounter{lemma}{0}
\renewcommand{\thelemma}{\arabic{lemma}}
\begin{lemma}[Uniform State-Value Representation]
If a M2WCMDP is symmetric (Definition \ref{def:symmetry-M2WCMDP}), then for any policy $\vpi$, there exists a corresponding permutation-invariant policy $\bpi$ such that the vector of expected total discounted rewards for all sub-MDPs under $\bpi$ is equal to the average of the expected total discounted rewards for each sub-MDP, i.e., $\V_0(\bpi) = \bar{V}_0(\vpi) \vone$, where $\bar{V}_0(\vpi) := \frac{1}{N} \sum_{n=1}^N V_0^n(\vpi)$, and $V^0_0(\bpi) = {V}^0_0(\vpi)$.
\end{lemma}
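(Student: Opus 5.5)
Let $q_{\vpi}$ be the occupancy measure of $\vpi$ from Lemma \ref{prop:x-pi-x}. The plan is to average this occupancy measure over all label permutations, turn the average back into a stationary policy, and read off the values through Lemma \ref{prop:xr=muv}.

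\textbf{Averaged occupancy measure.} For each $Q\in\G$, the permuted policy $\ppi$ of Lemma \ref{thm:piQVQ} has occupancy measure
\[
q_{\ppi}(s^0,\s,a^0,\a)=q_{\vpi}(s^0,Q\s,a^0,Q\a).
\]
This follows from the trajectory identity \eqref{eq:permuted-trajectory} combined with permutation invariance of $\bm{\mu}$, exactly as in the proof of Lemma \ref{thm:piQVQ}. I would then set
\[
\bar q:=\frac{1}{N!}\sum_{Q\in\G} q_{\ppi}.
\]
The set $\mathcal{X}$ of occupancy measures is the feasible region of the dual LP, given by linear flow-balance equations plus nonnegativity, so it is convex. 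Hence $\bar q\in\mathcal{X}$. I also need $\bar q$ to be supported only on feasible actions. This holds because $\a\in\An(s^0,\s)$ if and only if $Q\a\in\An(s^0,Q\s)$, since the resource constraints are sums over $n$ and all minor sub-MDPs share the same $d_k$.

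\textbf{Policy and its symmetry.} Define $\bpi:=\vpi_{\bar q}$ via \eqref{eq:pi-x}. By Lemma \ref{prop:x-pi-x} we then have $q_{\bpi}=\bar q$. To see that $\bpi$ is permutation-invariant, note that $\bar q(s^0,P\s,a^0,P\a)=\bar q(s^0,\s,a^0,\a)$ for every $P\in\G$, because $\G$ is a group and the sum merely reindexes. The ratio in \eqref{eq:pi-x} therefore inherits this invariance on reachable states. On unreachable states the paper allows an arbitrary policy, so I choose a permutation-invariant one there, for example uniform over feasible actions; the feasible set is permutation-equivariant, so this choice is invariant too.

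\textbf{Values.} By Lemma \ref{prop:xr=muv} and linearity,
\[
\V_0(\bpi)=\frac{1}{N!}\sum_{Q\in\G}\V_0(\ppi)=\frac{1}{N!}\sum_{Q\in\G}Q^{-1}\V_0(\vpi),
\]
using Lemma \ref{thm:piQVQ}. Averaging $Q^{-1}\vv$ over all permutations gives $\bar v\vone$, because each coordinate of $\vv$ lands in each position exactly $(N-1)!$ times. This yields $\V_0(\bpi)=\bar V_0(\vpi)\vone$. The same computation with the scalar invariance $V^0_0(\ppi)=V^0_0(\vpi)$ gives $V^0_0(\bpi)=V^0_0(\vpi)$.

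\textbf{Main obstacle.} The delicate step is justifying that $\bar q$ is again an occupancy measure, i.e.\ convexity of $\mathcal{X}$. The stationary-policy correspondence of Lemma \ref{prop:x-pi-x} must also be valid on the state-dependent feasible action sets. A convex combination of policies is generally not a valid way to mix policies, which is why I average occupancy measures rather than policies. I would make this step explicit by writing the flow-balance constraints
\[
\sum_{a^0,\a} q(\tilde s^0,\tilde\s,a^0,\a)-\gamma\sum_{s^0,\s,a^0,\a}p^0(\tilde s^0\mid s^0,\s,a^0,\a)\,\Pn(\tilde\s\mid s^0,\s,a^0,\a)\,q(s^0,\s,a^0,\a)=\bm{\mu}(\tilde s^0,\tilde\s)
\]
for all $(\tilde s^0,\tilde\s)$. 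These are linear in $q$ with a fixed right-hand side, so they are preserved under convex combinations.
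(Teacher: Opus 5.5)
Your proposal is correct and matches the paper's proof step for step. You average the permuted occupancy measures $q_{\ppi}$, use convexity of $\mathcal{X}$ and Lemma \ref{prop:x-pi-x} to recover $\bpi$, and then evaluate the values through Lemma \ref{prop:xr=muv} and Lemma \ref{thm:piQVQ}. You also check explicitly that $\bar q$, and hence $\bpi$, is permutation-invariant, with an invariant choice on unreachable states; the paper asserts this property but never verifies it.
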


\begin{proof}{Proof by construction.}
First, for any fixed $Q$, we characterize its occupancy measure $q_{\ppi}$ as
\begin{equation}\label{eq:xq}
    q_{\ppi}(s^0,\s, a^0, \a) := \sum_{\bar{s}^0_0, \so} \bm{\mu}(\bar{s}^0_0, \so) \sum_{t=0}^{\infty} \gamma^{t} \Prob^{\ppi}\left(s_t^0 = s^0, \s_t=\s, a_t^0 = a^0, \a_t=\a \mid s^0_0 = \bar{s}^0_0, \s_0=\so \right).
\end{equation}

Next, we construct a new measure $\bar{q}$ obtained by averaging all permuted occupancy measures $q_{\ppi}$ for $Q \in \G$ on all $(s^0,\s, a^0, \a)$ pairs as
\begin{equation}
    \bar{q}(s^0,\s, a^0, \a) := \frac{1}{N!} \sum_{Q} q_{\ppi}(s^0,\s, a^0, \a).
    \label{eq:bar-pi-x}
\end{equation}

One can confirm that $\bar{q}$ is an occupancy measure, i.e., $\bar{q}\in\mathcal{X}$, since $\forall Q \in \G$, each $q_{\ppi}\in\mathcal{X}$ and $\mathcal{X}$ is {a convex polytope (Puterman 6.9.2). Crucially, one can show that the averaged policy $\bpi$ inherits pathwise feasibility. Because every $\ppi(\cdot \mid s^0,\s)$ only assigns probability mass within $\An(s^0, Q\s) = Q \An(s^0, \s)$, and the feasible set is permutation-invariant {(from Definition \ref{def:symmetry-M2WCMDP})}, the support of $\bpi(\cdot \mid s^0,\s)$ remains strictly inside the feasible joint action set $\An(s^0, \s)$.}

From Lemma \ref{prop:x-pi-x}, a stationary policy $\bpi$ can be constructed such that its occupancy measure matches $\bar{q}(s^0, \s, a^0, \a)$. Namely, for all $(s^0,\s, a^0, \a)$ pairs,
\begin{align*}
    q_{\bpi}(s^0,\s, a^0, \a)&:=\sum_{\bar{s}^0_0, \so} \bm{\mu}(\bar{s}^0_0, \so) \sum_{t=0}^{\infty} \gamma^{t} \Prob^{\bpi}\left(s_t^0 = s^0, \s_t=\s, a_t^0 = a^0, \a_t=\a | s^0_0 = \bar{s}^0_0, \s_0=\so \right)\\
    & = \bar{q}({s}^0,\s, a^0, \a).
\end{align*}

We can then derive the following steps:
{\begingroup \allowdisplaybreaks
\begin{align*}\label{eq:v-bar-one}
    \Vo({\bpi}) &=\sum_{s^0, \s, a^0, \a} q_{\bpi}(s^0, \s, a^0, \a) \r(s^0, \s, a^0, \a) \quad  \mbox{(By Lemma \ref{prop:xr=muv})}\\
    &=\sum_{s^0, \s, a^0, \a} \bar{q}(s^0, \s, a^0, \a) \r(s^0, \s, a^0, \a) \quad  \mbox{(By Lemma \ref{prop:x-pi-x})}\\
    &=\frac{1}{N!} \sum_{Q \in \G}
    \sum_{s^0, \s, a^0, \a} q_{\ppi}(s^0, \s, a^0, \a) \r(s^0, \s, a^0, \a) \quad  \mbox{(By construction in \eqref{eq:bar-pi-x}) }\\
    &= \frac{1}{N!} \sum_{Q \in \G}  \Vo({\ppi}) \quad  \mbox{(By Lemma \ref{prop:xr=muv})}\\
    &= \frac{1}{N!} \sum_{Q \in \G} Q^{-1} \Vo(\vpi) \quad \mbox{(By Lemma \ref{thm:piQVQ})}\\
    &= \frac{1}{N!} \sum_{Q \in \G} Q^{-1} \begin{bmatrix}
    V_0^{1}(\vpi) \\ \cdots \\ V_0^{N}(\vpi) \\\end{bmatrix} \quad \text{(Vector form)} \\
    &= \frac{1}{N!} \begin{bmatrix}
    (N-1)! \sum_{n=1}^N V_0^{n}(\vpi) \\ \cdots \\ (N-1)! \sum_{n=1}^N V_0^{n}(\vpi)\end{bmatrix} \quad \text{(Property of permutation group)}\\
    &= \frac{1}{N!} (N-1)! \sum_{n=1}^N V^{n}_0(\vpi) \vone = \frac{1}{N} \sum_{n=1}^N V_0^{n}(\vpi) \vone = \bar{V}_0(\vpi) \vone.
\end{align*} \endgroup}

Similarly, we have
\begin{align*}
    V^0_0(\bpi)  & = \sum_{s^0, \s, a^0, \a} q_{\bpi}(s^0, \s, a^0, \a) r^0 (s^0, \s, a^0, \a)\\
    & = \sum_{s^0, \s, a^0, \a} \bar{q}(s^0, \s, a^0, \a) r^0 (s^0, \s, a^0, \a)\\
    & = \frac{1}{N!} \sum_{Q \in \G} \sum_{s^0, \s, a^0, \a} q_{\vpi^Q}(s^0, \s, a^0, \a) r^0 (s^0, \s, a^0, \a) \\
    & = \frac{1}{N!} \sum_{Q \in \G} V^0_0(\vpi^Q) = \frac{1}{N!} \sum_{Q \in \G} V^0_0(\vpi) = V^0_0(\vpi),
\end{align*}
{relying on the scalar major identity $V^0_0({\ppi}) = V^0_0({\vpi})$ established in Lemma \ref{thm:piQVQ}.} \hfill $\square$
\end{proof}

\subsection{Proof of Theorem \ref{thm:eqv}} \label{apx:thm:eqv}
\setcounter{theorem}{0}
\renewcommand{\thetheorem}{\arabic{theorem}}
\begin{theorem}[Utilitarian Reduction]
{Under Assumption \ref{assum:domain}, for a symmetric M2WCMDP, let $\Pi^*_{U,\mymbox{PI}}$ be the set of optimal policies for the utilitarian approach that is permutation-invariant, then $\Pi^*_{U,\mymbox{PI}}$ is necessarily non-empty and all $\vpi^*_{U,\mymbox{PI}}\in\Pi^*_{U,\mymbox{PI}}$ satisfy $G_\rho(\vpi_{U,\mbox{\tiny{PI}}}^{*})=\max \limits_{\vpi \in \Pi}  G_\rho(\vpi).$}
\end{theorem}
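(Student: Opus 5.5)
The plan is to sandwich $\max_{\vpi\in\Pi}G_\rho(\vpi)$ between the utilitarian value of permutation-invariant policies, using Lemma \ref{thm:bar-policy} together with the properties of Definition \ref{def:rho-fairness}. The argument has three parts: non-emptiness of $\Pi^*_{U,\mymbox{PI}}$, an upper bound $\rho[\V_0(\vpi)]\le \bar V_0(\vpi)$, and the identity $G_\rho(\bpi)=G_U(\vpi)$ for symmetrized policies.

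\textbf{Step 1: non-emptiness.} The state and action spaces are finite, so the utilitarian problem \eqref{eq:obj-utilitarian-reduced} is an ordinary discounted MDP with scalar reward $r^0+\frac{\lambda}{N}\vone^\top\r$. By Lemma \ref{prop:xr=muv}, $G_U$ is linear in the occupancy measure, and $\mathcal{X}$ is a compact polytope. Hence an optimal stationary policy $\vpi^*_U\in\Pi$ exists by Lemma \ref{prop:x-pi-x}.

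Apply Lemma \ref{thm:bar-policy} to $\vpi^*_U$ to obtain a permutation-invariant $\bpi$ with $V^0_0(\bpi)=V^0_0(\vpi^*_U)$ and $\V_0(\bpi)=\bar V_0(\vpi^*_U)\vone$. Then $\bar V_0(\bpi)=\bar V_0(\vpi^*_U)$, so $G_U(\bpi)=G_U(\vpi^*_U)$ and $\bpi\in\Pi^*_{U,\mymbox{PI}}$.

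One point needs checking: that the $\bpi$ constructed from $\bar q$ is itself permutation-invariant in the sense of Definition \ref{def:permutation-invariant-policy}. I would verify that $\bar q(s^0,Q\s,a^0,Q\a)=\bar q(s^0,\s,a^0,\a)$, which holds because the average over the group is invariant under relabelling $Q\mapsto QQ'$ together with \eqref{eq:permuted-trajectory}. The ratio formula \eqref{eq:pi-x} then inherits this invariance. On unreachable states, the arbitrary choice can be made permutation-invariant, for instance uniform over the feasible set, which is permutation-invariant.

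\textbf{Step 2: upper bound for every policy.} Fix any $\vpi\in\Pi$. By Assumption \ref{assum:domain} and PI, $Q\V_0(\vpi)\in\gV$ and $\rho[Q\V_0(\vpi)]=\rho[\V_0(\vpi)]$ for every $Q\in\G$. Since $\gV$ is convex, the average $\frac{1}{N!}\sum_Q Q\V_0(\vpi)=\bar V_0(\vpi)\vone$ lies in $\gV$. Applying CV (Jensen over a finite convex combination) and then CVI gives
\begin{equation*}
\rho[\V_0(\vpi)]=\frac{1}{N!}\sum_{Q}\rho[Q\V_0(\vpi)]\le \rho\Big[\frac{1}{N!}\sum_Q Q\V_0(\vpi)\Big]=\rho[\bar V_0(\vpi)\vone]=\bar V_0(\vpi).
\end{equation*}
With $\lambda\ge0$, this yields $G_\rho(\vpi)\le G_U(\vpi)\le G_U(\vpi^*)$ for any $\vpi^*\in\Pi^*_{U,\mymbox{PI}}$.

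\textbf{Step 3: attainment.} Take $\vpi^*\in\Pi^*_{U,\mymbox{PI}}$. I need $\V_0(\vpi^*)$ to be a constant vector. Since $\vpi^*$ is permutation-invariant, $\vpi^{*Q}=\vpi^*$, and Lemma \ref{thm:piQVQ} gives $\V_0(\vpi^*)=Q\V_0(\vpi^*)$ for all $Q$. Hence $\V_0(\vpi^*)=\bar V_0(\vpi^*)\vone$. By CVI, $G_\rho(\vpi^*)=G_U(\vpi^*)\ge\sup_\pi G_\rho(\vpi)$. Combined with $\vpi^*\in\Pi$, this gives equality and shows the maximum is attained.

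The main obstacle is the permutation-invariance bookkeeping in Step 1, namely that symmetrizing the occupancy measure yields a genuinely permutation-invariant policy. The convexity argument in Step 2 is routine.
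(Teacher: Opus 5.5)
Your proposal is correct and follows essentially the same sandwich argument as the paper. You symmetrize a utilitarian-optimal policy via Lemma~\ref{thm:bar-policy}, use PI, CV (Jensen) and CVI to get $\rho[\V_0(\vpi)]\le \bar V_0(\vpi)$ for every $\vpi$, and apply CVI to the resulting constant value vector. Your version is slightly more complete than the paper's in two respects: Step~3 (via Lemma~\ref{thm:piQVQ} with $\vpi^{Q}=\vpi$) covers every element of $\Pi^*_{U,\mathrm{PI}}$ rather than only the constructed $\bpi^*_U$, and you never presuppose that a $\rho$-optimal policy exists, whereas the paper's chain of inequalities starts from one.
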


\begin{proof}{Proof.}
We define the utilitarian fairness measure $\rho$ as $\rho_{\ut}[\vv] = \frac{1}{N}\sum_{n=1}^N v_n$. Let the optimal policy to the optimization problem (\ref{eq:obj-rho-M2WCMDP}) with utilitarian objective be $\vpi^*_{\ut} \in \arg \max\limits_{\vpi} G_{\ut}(\vpi)$.

With Lemma \ref{thm:bar-policy}, we can construct a permutation-invariant policy $\bar{\pi}^*_{\ut}$ satisfying 
\begin{equation} \label{eq:bar-policy-1}
    \bar{V}_0({\vpi^*_{\ut}})\vone = \V_0({\bpi^*_{\ut}})\in\gV, \quad \text{and} \quad V^0_0({\vpi^*_{\ut}}) = V^0_0({\bpi^*_{\ut}}),
\end{equation}
with $\bar{V}_0({\vpi^*_{\ut}}):=\frac{1}{N}\sum_{n=1}^N V_0^n({\vpi^*_{\ut}})$. Then we have that
\begin{equation*}
\begin{aligned}
    V^0_0({\vpi^*_{\ut}}) + \lambda \rho_{\ut}\left[\V_0({\vpi^*_{\ut}})\right] 
    &= V^0_0({\bpi^*_{\ut}}) + \lambda \rho_{\ut}\left[\bar{V_0}({\vpi^*_{\ut}})\vone\right] \\
    & = V^0_0({\bpi^*_{\ut}}) + \lambda \rho\left[\bar{V_0}({\vpi^*_{\ut}})\vone\right] \\
    & = V^0_0({\bpi^*_{\ut}}) + \lambda \rho\left[\V_0({\bpi^*_{\ut}})\right],
\end{aligned}
\end{equation*}
where we exploit $\rho[\bar{v}\vone]=\bar{v}$ for all {$\bar{v} \in \sR$ such that $\bar{v}\vone \in\gV$}.

Further, let $\vpi^*$ be any optimal policy to the $\rho$-M2WCMDP problem. One can establish that:    
\begin{equation}\label{ineq:ggf-extension}
\begin{aligned}
    V^0_0({\vpi^*}) + \lambda \rho\left[\V_0({{\vpi}^*})\right] &\geq V^0_0({\bpi^*_{\ut}}) + \lambda \rho\left[\V_0({\bpi^*_{\ut}})\right] \\
    & = V^0_0({\vpi^*_{\ut}}) + \lambda \rho_{\ut}\left[\V_0({\vpi^*_{\ut}})\right] \\
    & \geq V^0_0({\vpi^*}) + \lambda \rho_{\ut}\left[\V_0({{\vpi}^*})\right],
\end{aligned}
\end{equation}
{where the first inequality holds because $\vpi^*$ is optimal for $G_{\rho}$ and $\bpi^*_{\ut} \in \Pi$ (it is a stationary Markov policy by Lemma 1, and feasible because it inherits feasibility from $\vpi_U^*$).}

By Jensen's inequality and the fact that $\rho[\cdot]$ is concave, we have that 
    \[\rho[\vv] = \frac{1}{N!}\sum_{Q\in\G} \rho[Q\vv] \leq \rho[\frac{1}{N!}\sum_{Q\in\G} Q\vv] = \rho[\frac{1}{N}\sum_{n=1}^N v_n \vone] = \frac{1}{N}\sum_{n=1}^N v_n = \rho_{\ut}[\vv],\]
where we use permutation invariance of $\rho$, followed with its concavity and its constant vector invariance. Then, it holds that $V^0_0({\vpi^*}) + \lambda \rho_{\ut}\left[\V_0({{\vpi}^*})\right] \geq V^0_0({\vpi^*}) + \lambda \rho\left[\V_0({{\vpi}^*})\right]$. The inequalities in (\ref{ineq:ggf-extension}) should therefore all reach equality:
\begin{equation*}
    V^0_0({\vpi^*}) + \lambda \rho\left[\V_0({\vpi^*})\right] = V^0_0({\bpi^*_{\ut}}) + \lambda \rho\left[\V_0({\bpi^*_{\ut}})\right] = V^0_0({\vpi^*_{\ut}}) + \lambda \rho_{\ut}\left[\V_0({\vpi^*_{\ut}})\right].
\end{equation*}

Therefore, 
\begin{equation*}
    G_\rho(\bpi^*_{\ut}) = V_0({\bpi^*_{\ut}}) + \lambda \rho \left[\V_0({\bpi^*_{\ut}})\right] = V_0({\vpi^*_{\ut}}) + \lambda \rho_{\ut}\left[\V_0({\vpi^*_{\ut}})\right]  = {V_0({\vpi^*})} + \lambda \rho \left[\V_0({\vpi^*})\right] =\max_\pi  G_\rho(\vpi).
\end{equation*}
\hfill $\square$
\end{proof}

\section{Proof of Section \ref{sec:5policy_nn}}\label{apx:proof-sampler-properties}
This section proves the feasibility and deterministic-coverage properties of the priority-based sampler stated in Lemma \ref{lemma:sampler-properties}.

\begin{lemma}[Feasibility and deterministic coverage]
    Given that there exists a fallback action $\aFB\in\A$ such that $d_k(\aFB \mid s) = 0$ for all $s\in\S$ and all $k\in[K]$, the sampling procedure described in Algorithm \ref{alg:sampling},  has the following properties:
    \begin{enumerate}
        \item $\Prob(\u\in{\cA}(s^0,\x))=1$ for all $s^0\in\S^0$ and $\x\in\cS$.
        \item If $\{\gA_c\}_{c\in\gC}=\gA$, then for any deterministic policy $\cpi^D:\S^0\times{\cS}\rightarrow${$\{0, \dots, N\}^{|\gS| \times |\gA|}$ satisfying $\cpi^D(s^0, \x) \in\cA(s^0,\x)$}, there exists a mapping $h:\S^0\times{\cS}\rightarrow [0, 1]^K\times\Delta(\gC)^{|\S|}\times [0, 1]^{|\S|\times|\A|}$ such that a sample $\u\sim\vpi_{h(s^0,\x)}(\cdot\mid s^0,\x))= \cpi^D(s^0,\x)$ with probability one.
    \end{enumerate}
\end{lemma}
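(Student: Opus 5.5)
For Property 1, I will argue by invariants maintained along every execution path of Algorithm \ref{alg:sampling}, so the conclusion holds for every realization rather than only in expectation. Three facts need to be checked.

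\emph{Termination.} At the start, $B(s,c)\ge 0$ with $\sum_{c}B(s,c)=x(s)$, and $\tilde b_k=b_k(s^0)\tilde p_k\in[0,b_k(s^0)]$. Each pass of the inner while loop either adds a new pair to $\gF$ or decrements some $B(s,c_i)$ by one. When a budget reaches zero, all pairs $(s,a)$ with $a\in\gA_{c_i}$ enter $\gF$. Hence a pair with $B(s,c_i)=0$ is never sampled, and each loop runs at most $\sum_s x(s)+|\S||\A|$ times.

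\emph{Resource feasibility.} I will prove by induction that $\tilde b_k\ge 0$ and
\[
\sum_{s,a} d_k(a\mid s)\,u(s,a)+\tilde b_k=b_k(s^0)\tilde p_k\le b_k(s^0).
\]
An increment of $u(s,a)$ happens only when $d_k(a\mid s)\le\tilde b_k$ for all $k$. It is then matched by subtracting $d_k(a\mid s)$ from $\tilde b_k$, so both the identity and nonnegativity are preserved.

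\emph{Conservation.} The per-state total satisfies $\sum_a u(s,a)\le\sum_c\bigl(\text{initial }B(s,c)\bigr)=x(s)$, because each increment of $u(s,\cdot)$ with $a\in\gA_{c_i}$ consumes one unit of $B(s,c_i)\ge 1$. The final fallback step then adds $x(s)-\sum_a u(s,a)\ge 0$ to $u(s,\aFB)$. This gives equality in the count constraints and leaves all resource usage unchanged, since $d_k(\aFB\mid s)=0$. Integrality and the bounds $0\le u\le N$ are immediate, so $\u\in\cA(s^0,\x)$ surely. This part is routine.

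For Property 2 (singleton categories, $\gA_c=\{a_c\}$), I will construct $h$ explicitly. Let $\u^D=\cpi^D(s^0,\x)$ and choose:
\begin{itemize}
\item $\tilde p_k=1$ for all $k$;
\item $\widetilde W(s,c)=u^D(s,a_c)/x(s)$ when $x(s)>0$, and an arbitrary point of the simplex otherwise;
\item $U\equiv 1$.
\end{itemize}
Since $x(s)\widetilde W(s,c)=u^D(s,a_c)$ is already integral with sum $x(s)$, the rounding step returns the unique minimizer $B(s,c)=u^D(s,a_c)$, at objective value zero.

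In stage $c_i$, only pairs $(s,a_{c_i})$ with $B(s,c_i)>0$ are unexhausted. The key claim is that the resource check never fails. At any moment, the resources already consumed satisfy
\[
\sum_{s,a}d_k(a\mid s)\,u(s,a)\le\sum_{s,a}d_k(a\mid s)\,u^D(s,a)-d_k(a_{c_i}\mid s'),
\]
whenever the pair $(s',a_{c_i})$ still has positive budget. This holds because the current $u$ is dominated componentwise by $\u^D$, with at least one unit of $(s',a_{c_i})$ still missing, and $d_k\ge 0$. Combined with feasibility of $\u^D$, this gives $d_k(a_{c_i}\mid s')\le\tilde b_k$, so every sampled pair is accepted.

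Each stage therefore terminates exactly when $u(s,a_{c_i})=u^D(s,a_{c_i})$ for all $s$, regardless of the random sampling order. After all stages $\u=\u^D$, and the fallback step adds $x(s)-\sum_a u^D(s,a)=0$. Hence the output equals $\cpi^D(s^0,\x)$ with probability one.

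The main subtlety, and the step I expect to be hardest, is this order-independence: it must hold along every sample path, not merely with positive probability. The domination argument above is what handles it. Minor care is also needed for the $x(s)=0$ case and for the wording ``$\{\gA_c\}=\gA$'', which I read as the singleton partition.
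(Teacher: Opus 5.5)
Your proposal is correct and follows essentially the same route as the paper: the same conserved quantity $\tilde b_k+\sum_{s,a}d_k(a\mid s)u(s,a)=b_k(s^0)\tilde p_k$ for Property 1, and the same choice $h=(\vone_K,\,u^D(s,\cdot)/x(s),\,\vone)$ for Property 2. Your domination argument spells out the paper's one-line claim that every sampled pair passes the resource check, and you also verify termination and $x(s)-\sum_a u(s,a)\ge 0$, which the paper leaves implicit.
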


\begin{proof}{Proof.}
    The first property can be verified by following the updates of $\u$ made by the algorithm and confirming it never violates the resource consumption condition. In fact, throughout the procedure whenever $(\tilde{\vb},\u)$ are updated to $(\tilde{\vb}',\u')$, one can confirm that the updated values satisfy $\tilde{b}_k'+\sum_{s,a} d_k(a \mid s)u'(s,a)= \tilde{b}_k+\sum_{s,a} d_k(a \mid s)u(s,a)$ and that $\tilde{b}_k'\geq 0$. Since $(\tilde{\vb},\u)$ is initialized at $(\vb(s^0)\cdot\tilde{\vp},\bm{0}_{|\S|\times |\A|})$, this implies that when the procedure terminates:
    \[\tilde{b}_k+\sum_{s,a} d_k(a \mid s)u(s,a) = b_k(s^0)\tilde{p}_k \leq b_k(s^0).\]
    One can further confirm that, at termination, $\sum_{a} u(s,a)=x(s)$ for all $s$ due to the last step of the algorithm.

    The second property is achieved by setting $h(s^0,\x):={(\vone_{K},h_W(s^0,\x),\vone_{|\S|\times |\A|})}$, with $[h_W(s^0,\x)](s,c):=[\cpi^D(s^0,\x)](s,c)/\sum_a [\cpi^D(s^0,\x)](s,a)$ {for $\sum_a [\cpi^D(s^0,\x)](s,a)>0$. If the denominator equals 0, the corresponding row of $h_W(s^0,\x)$ can be chosen arbitrarily}. Fixing some $(s^0,\x)$, referring to $c$ as $a$ given the assumed partition, and using shorthand notations $h_W(s,c)$ and {$u^D(s,a)$} for $[h_W(s^0,\x)](s,c)$ and $[\cpi^D(s^0,\x)](s,a)$ respectively, one can follow the steps of the algorithm. For each $a\in\A$, the procedure first initializes $B(s,a):= \lfloor x(s)\cdot h_W(s,a) \rfloor= u^D(s,a)$ exactly since $\u^D\in{\cA}(s^0,\x)$ so that $\sum_a u^D(s,a)=x(s)$. Consequently, $R(s)=x(s)-\sum_a B(s,a)=0$, so the largest-remainder assignment is vacuous. Next given that this maximum planned budget on counts is feasible, namely $\sum_{s,a} d_k(a\mid s) B(s,a) = \sum_{s,a} d_k(a\mid s) u^D(s,a)\leq b_k(s^0)$ for all $k$, we can conclude that samples of type $(s,a)$ will be produced until the budget $B(s,a)$ runs out. Thus, the state-count plan $u(s,a)$ will increment up to and stop at $u^D(s,a)$ with probability one. {Consequently, ${\u}=\u^D =\cpi^D(s^0,\x)$ with probability one.} \hfill \Halmos
\end{proof}

\section{Count Aggregation Reformulation} \label{apx:exact-form-count-M2WCMDP}
Complementing Section \ref{sec4.2:count_mdp}, the transition kernel and the initial distribution of the count aggregation M2WCMDP $\gM_{\phi}$ under the mapping $\phi = (f, g)$ is obtained as follows.

\paragraph{Transition Probability} The minor transition probability $\Pn_{\phi}(\tilde \x \mid \x, s^0, a^0, \u)$ is the probability that the system moves from count state $\x$ to $\tilde \x$ given the major state-action pair $(s^0, a^0)$ and the minor action counts $\u$. We define the pre-image $f^{-1}(\tilde \x)$ as the set containing all labelled elements $\tilde \s \in \Sn$ that map to $\tilde \x$.

Given the equivalence of transitions within the pre-image set, for an arbitrary state-action pair $(\bar{\s}, \bar{\a}) \in \phi^{-1}(\x, \u)$, the probability of transitioning from $\x$ to $\tilde \x$ under action $\u$ is the sum of the probabilities of all individual transitions in the original space that correspond to this count state:
$$\Pn_{\phi}(\tilde \x \mid \x, s^0, a^0, \u) := \sum_{\tilde \s \in f^{-1}(\tilde \x)} \Pn(\tilde \s \mid s^0, \bar{\s}, a^0, \bar{\a}) = \sum_{\tilde \s \in f^{-1}(\tilde \x)} \prod_{n=1}^N p(\tilde s^n \mid s^0, \bar{s}^n, a^0, \bar{a}^n).$$
The major transition probability is similarly derived as $p^0_{\phi}(\tilde s^0 \mid s^0, \x, a^0, \u) := p^0(\tilde s^0 \mid s^0, \bar{\s}, a^0, \bar{\a})$.

\paragraph{Initial Distribution} By using a state count representation for the symmetric M2WCMDP, we know that $\sum_{s \in \S} x(s) = N$. The cardinality of the pre-image set $f^{-1}(\x)$ can be obtained through the multinomial expansion of $(s_1 + s_2 + \dots + s_{|\S|})^N$. Intuitively, distributing $N$ identical sub-MDPs into $|\S|$ distinct states such that the counts match $\x$ is given by the multinomial coefficient $|f^{-1}(\x)| = \frac{N!}{\prod_{s\in\S} x(s)!}$. Given that the joint initial distribution $\bm{\mu}(s^0, \s)$ is permutation-invariant, the pushforward probability of starting from major state $s^0$ and minor count state $\x$ is $$\bm{\mu}_f(s^0, \x) := \sum_{\s \in f^{-1}(\x)} \bm{\mu}(s^0, \s) = |f^{-1}(\x)| \cdot \bm{\mu}(s^0, \bar{\s}) = \frac{N!}{\prod_{s\in\S} x(s)!} \cdot \bm{\mu}(s^0, \bar{\s}),$$ for any arbitrary $\bar{\s}$ such that $f(\bar{\s}) = \x$.

\section{Exact Approaches for Solving Utilitarian-Reduced Count M2WCMDP} \label{apx:dual-lp}
Based on the utilitarian reduction established in Theorem \ref{thm:eqv}, the symmetric $\rho$-M2WCMDP problem can be solved through its utilitarian-reduced count aggregation M2WCMDP $\gM_\phi$ (see Definition \ref{def:count-MDP}). Using the exact form constructed in \ref{apx:exact-form-count-M2WCMDP} and following the standard dual LP methods for discounted MDPs derived in Section 6.9.1 by \cite{puterman2014markov}, we define the discounted state-action occupancy measures $q_{\phi}(s^0, \x, a^0, \u) \geq 0$ over the aggregated joint state space $\mathcal{S}^0 \times \cS$ and {the state-dependent feasible joint action space $\A^0 \times \cA(s^0, \x)$}.

Under the utilitarian reduction, the objective is to maximize the expected total discounted sum of the {platform reward $r^0_{\phi}$ and the fairness-weighted mean participant reward $\lambda \bar{r}_{\phi}$}. The exact LP is formulated as follows:
\begin{equation}\label{eq:utilitarian-reduced-lp}
    \begin{array}{rl}
    \max\limits_{q_\phi} &  \sum \limits_{s^0 \in \mathcal{S}^0, \x \in \cS} \sum \limits_{a^0 \in \A^0, \u \in \cA(s^0, \x)} \left[r^0_{\phi}(s^0,\x,a^0,\u)+\lambda\bar r_{\phi}(s^0,\x,a^0,\u)\right] q_{\phi}(s^0, \x, a^0, \u) \\
    \textit{s.t.} & \sum\limits_{a^0 \in \A^0, \u \in \cA(s^0, \x)} q_{\phi}(s^0, \x, a^0, \u) \\
    & \quad -  \gamma \hspace{-5pt} \sum \limits_{\tilde s^0 \in \mathcal{S}^0, \tilde \x \in \cS} \sum \limits_{\tilde a^0 \in \A^0, \tilde \u \in \cA(\tilde s^0, \tilde \x)} \hspace{-3pt} q_{\phi}(\tilde s^0, \tilde \x, \tilde a^0, \tilde \u) p^0_{\phi}(s^0 \mid \tilde s^0, \tilde \x, \tilde a^0, \tilde \u) \Pn_{\phi}(\x \mid \tilde s^0, \tilde \x, \tilde a^0, \tilde \u) \\ 
    &\quad = \bm{\mu}_f(s^0, \x), \qquad \forall s^0 \in \mathcal{S}^0, \x \in \cS \\
    & q_{\phi}(s^0, \x, a^0, \u) \geq 0 \qquad \forall s^0 \in \mathcal{S}^0, \x \in \cS,  \forall a^0 \in \A^0, \u \in \cA(s^0, \x)
\end{array}
\end{equation}

Because $\bm{\mu}_f$ is the pushforward of the original probability distribution $\bm{\mu}$, it satisfies $\sum_{s^0\in\S^0, \, \x\in\cS}\bm{\mu}_f(s^0,\x)=1$. For an optimal solution $q_{\phi}^*(s^0, \x, a^0, \u)$, an optimal count policy is recovered as
    $$\vpi_\phi^*(a^0, \u \mid s^0, \x) := \frac{q_\phi^*(s^0, \x, a^0, \u)}{\sum_{\tilde{a}^0, \tilde{\u}} q_\phi^*(s^0, \x, \tilde{a}^0, \tilde{\u})}, \quad \forall s^0, \x, a^0, \u, $$
for the utilitarian-reduced problem such that $\sum_{\tilde{a}^0, \tilde{\u}} q_\phi^*(s^0, \x, \tilde{a}^0, \tilde{\u}) > 0$, with the policy chosen arbitrarily on unreachable states. Let $\Gamma(s^0, \s, \u) := \left\{ \a \in \An(s^0, \s) \mid g(\s, \a) = \u \right\}$ denote the set of feasible labelled actions inducing the count action $\u$. The resulting count policy can be lifted to a permutation-invariant policy in the original labelled M2WCMDP by assigning count actions among exchangeable participants through a permutation-invariant disaggregation rule that distributes the probability mass equally
\begin{equation}\label{eq:lift-to-labelled-pi-policy}
    \vpi_{\mymbox{\textit{PI}}}(a^0, \a \mid s^0, \s) = \frac{\vpi_\phi^*(a^0, g(\s, \a) \mid s^0, f(\s))}{|\Gamma(s^0, \s, g(\s, \a))|}, \qquad \a \in \Gamma(s^0, \s, g(\s,\a)).
\end{equation}

By construction, $\vpi_{\mymbox{\textit{PI}}}$ is permutation-invariant, i.e., $\vpi_{\mymbox{\textit{PI}}}(a^0, Q\a \mid s^0, Q\s) = \vpi_{\mymbox{\textit{PI}}}(a^0, \a \mid s^0, \s)$ for any permutation matrix $Q$. This explicit exchangeable disaggregation prevents systematic discrimination during execution, addressing the inherent biases of arbitrary deterministic tie-breaking rules. By Theorem \ref{thm:eqv}, the lifted policy $\vpi_{\mymbox{\textit{PI}}}$ is therefore optimal for the original $\rho$-M2WCMDP for any $\rho$ satisfying Definition \ref{def:rho-fairness} under the transformation $\phi$.

\section{Additional Details and Experiments of Machine Replacement Problem} \label{apx:details-MRP}

\subsection{Problem Instance Generation}\label{apx:instance-generation-mrp}
This section details the construction of the components used to generate the test instances based on \cite{akbarzadeh2019restless}, including the cost function, the transition matrix, and the reset probability. This experiment uses a synthetic data generator implemented on our own that considers a system with $S$ states and binary actions ($A=2$), where the two possible actions are to operate or to replace. After generating the per-machine state-action cost matrix of size $N \times S \times A$, we normalize the costs to the range [0, 1] by dividing each entry by the maximum cost over all state-action pairs. This ensures that the discounted return always falls within the range $[0, \frac{1}{1-\gamma}]$.

\paragraph{Cost Function} The cost function $c(s)$ for $s \in [S]$ can be defined in four ways: 1) \textit{Linear}: $c(s)=s-1$, where the cost increases linearly with the state index; 2) \textit{Quadratic}: $c(s) = (s - 1)^2$ with a more severe penalty for higher states compared to the linear case; 3) \textit{Exponential}: $c(s)=e^{s-1}$, which leads to exponentially increasing costs; 4) \textit{Replacement Cost Constant Coefficient (RCCC)}: $c(s) = 1.5 (S - 1)^2$, which is based on a constant ratio of 1.5 to the maximum quadratic cost. These components define the state-action cost $c(s,a)$ in Section \ref{sec:mrp-expt}.

\paragraph{Transition Function} The transition matrix for the deterioration action is constructed as follows. Once the machine reaches the $S$-th state, it remains in that worst state indefinitely until being successfully reset by a replacement action. For the $s$-th state $s \in [S-1]$, the probability of remaining in the same state at the next step is given by a model parameter $p_m \in [0, 1]$, and the probability of transitioning to the $(s+1)$-th state is $1-p_m$.

\paragraph{Reset Probability} When a replacement occurs, there is a probability $p_s$ that the machine successfully resets to the first state, and a corresponding probability $1-p_s$ of failing to be repaired and following the deterioration rule. In our experiments, we only consider a pure reset to the first state with probability 1.

\subsection{Hyperparameters} \label{apx:hyperparameter-mrp}
In our experimental setup, we chose the Proximal Policy Optimization (PPO) algorithm to implement the count-proportion-based deep reinforcement learning (CP-DRL) architecture. The hidden layers are fully connected and the Tanh activation function is used. There are two layers, with each layer consisting of 64 units. The learning rate for the actor is set to $5 \times 10^{-4}$ and the critic is set to $3 \times 10^{-4}$. {The Vanilla-DRL baseline uses the same network architecture as PPO, with two hidden layers of 64 neurons each, but with a softmax output layer for its stochastic policy.}

\subsection{Additional Experiments} \label{apx:additional-expt-mrp}
This subsection complements the small-instance optimality comparisons in Section \ref{sec:mrp-expt}.

\paragraph{Scalability} We assess CP-DRL scalability by increasing the number of machines while keeping the resource proportion at 0.1 for the \textit{Exponential-RCCC} instances. We refer to this scaled (SC) extension as CP-DRL(SC). We vary the number of machines from 10 to 100 to evaluate CP-DRL performance as the problem size grows. We also use CP-DRL(SC), trained on 10 machines with 1 unit of resource, and scale it to tasks with 20 to 100 machines. Figure \ref{fig:combined}a shows CP-DRL and CP-DRL(SC) consistently achieve higher GGF values than Whittle index policy (WIP) as machine numbers increase. CP-DRL(SC) delivers results comparable to separately trained CP-DRL, reducing training time while maintaining similar performance. Both WIP and CP-DRL show linear growth in time consumption per episode as machine numbers scale up.

\begin{figure}[htb] \FIGURE
    {
    \subcaptionbox[size=small]{GGF values for the number of machines $N \in [10, 100]$}
    {\includegraphics[width=0.32\textwidth]{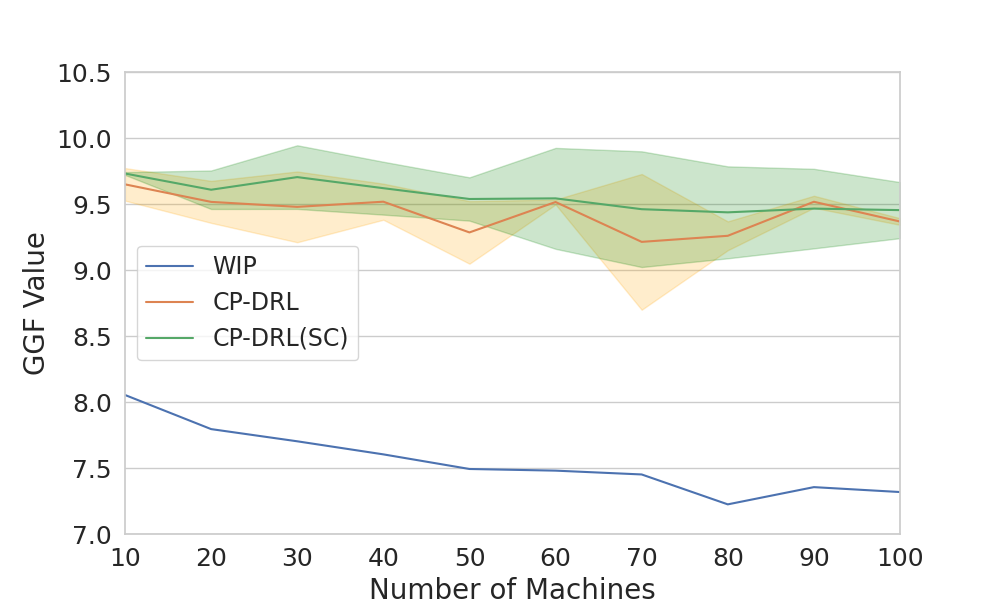}}\hfill
    \subcaptionbox[size=small]{Time per episode in seconds with a resource ratio $b/N =0.1$}
    {\includegraphics[width=0.32\textwidth]{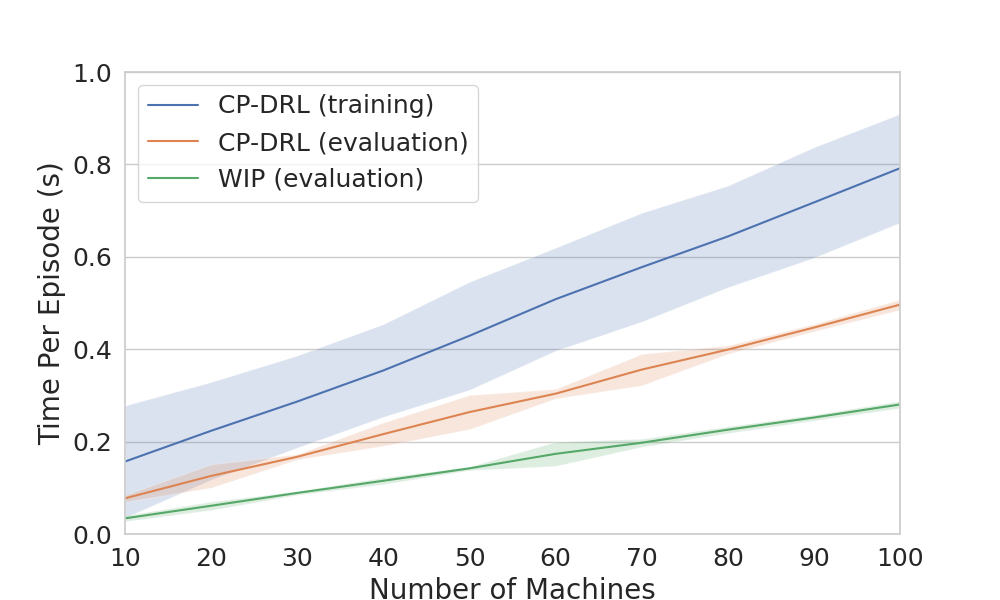}}\hfill
    \subcaptionbox[size=small]{Time per episode in seconds with a resource ratio $b/N =0.5$}
    {\includegraphics[width=0.32\textwidth]{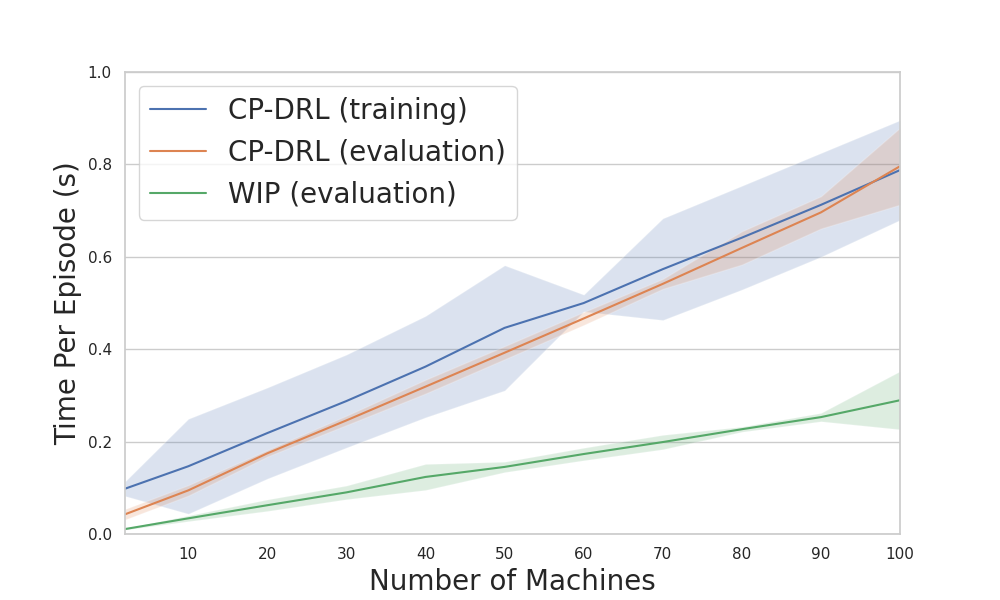}}
    }
    {(Color online) Scalability and Time Efficiency of CP-DRL \label{fig:combined}}
    {Subfigure (a) shows the scalability of CP-DRL with a fixed resource ratio of 0.1. Subfigure (a) presents GGF values across different machine counts, with intervals representing the standard deviation over 5 runs. Subfigure (b) and (c) depict time per episode in seconds for a fixed resource ratio of 0.1 and 0.5, respectively. In all time plots, the green line represents WIP during MC evaluation, the blue line shows CP-DRL during training, and the orange line represents CP-DRL during MC evaluation.}
\end{figure}

\paragraph{Efficiency} Using the count dual LP model (\ref{eq:utilitarian-reduced-lp}) reduces the model size, but constraints still grow as $\binom{N+S-1}{S-1}$ and variables increase by $\binom{N+S-1}{S-1} \cdot A$. These growth patterns create computational challenges as the problem size increases. In addition to the time per episode for a fixed ratio of 0.1 in Figure \ref{fig:combined}b, we analyze performance with a 0.5 ratio (Figure \ref{fig:combined}c) and varying machine proportions, keeping the number of machines fixed at 10. We evaluate CP-DRL over machine proportions from 0.1 to 0.9. The results show that the time per episode increases linearly with the number of machines, while training and evaluation times remain relatively stable. This indicates that the sampling procedure for legal actions is the primary bottleneck. Meanwhile, the resource ratio has minimal impact on computing times.

\section{Additional Details and Results of Taxi Dispatching and Pricing Problem}\label{apx:details}
We report additional details in the numerical studies. All experiments were conducted on the Rorqual cluster provided by the Digital Research Alliance of Canada, utilizing Python 3.10.13 and Gurobi 13.0.0. To manage the extensive parameter testing, we utilize Slurm job arrays for parallel execution. Each task was assigned 1 CPU core and 1.0 GB of RAM.

\subsection{Data Source, Preprocessing, and Calibration} \label{apx:nyc-data}
We calibrate the fundamental vehicle speed based on \cite{huang2018taxi}. The average occupancy speed is approximately set at 20 km per hour. Since that explicit empty-cruising data is not provided, we assume that the empty speed is identical to the occupancy speed. Consequently, in a single time step $\Delta t$, a vehicle travels a distance of 1 km. The time window is fixed for 10 hours per day, corresponding to $T=$ 200 time steps.

The reward function approximates the New York City (NYC) taxi fare structure with a base fare and a distance-based term. We aggregate the fixed costs (base fare \$3.00, metropolitan transportation authority surcharge \$0.50, and improvement surcharge \$1.00) into a constant intercept $c_f:= \$4.5$. Based on \cite{parrott2024tlc}, the variable reward proportional to the distance traveled is approximated as $c_r :=\$2.17$ per km. The operational cost includes fuel and maintenance costs, and calibrated at $c_o:=\$0.54$ per km. Further, to model the impact of dynamic pricing on customer decisions, we adopt a log response function $\psi_{log}(p):=e^{-\eta(p-1)}$ with an elasticity parameter $\eta$ := 0.51 following empirical studies on ride-sharing platforms \citep{cohen2016using}. The price range is set as $p \in \gP := (0, 10]$. {We write $p_{\min}:=\inf\gP$ and $p_{\max}:=\sup\gP$ for the price bounds used in \ref{apx:hyperparameter-taxi}.}

\begin{figure}[htb]\FIGURE
    {\includegraphics[width=\textwidth]{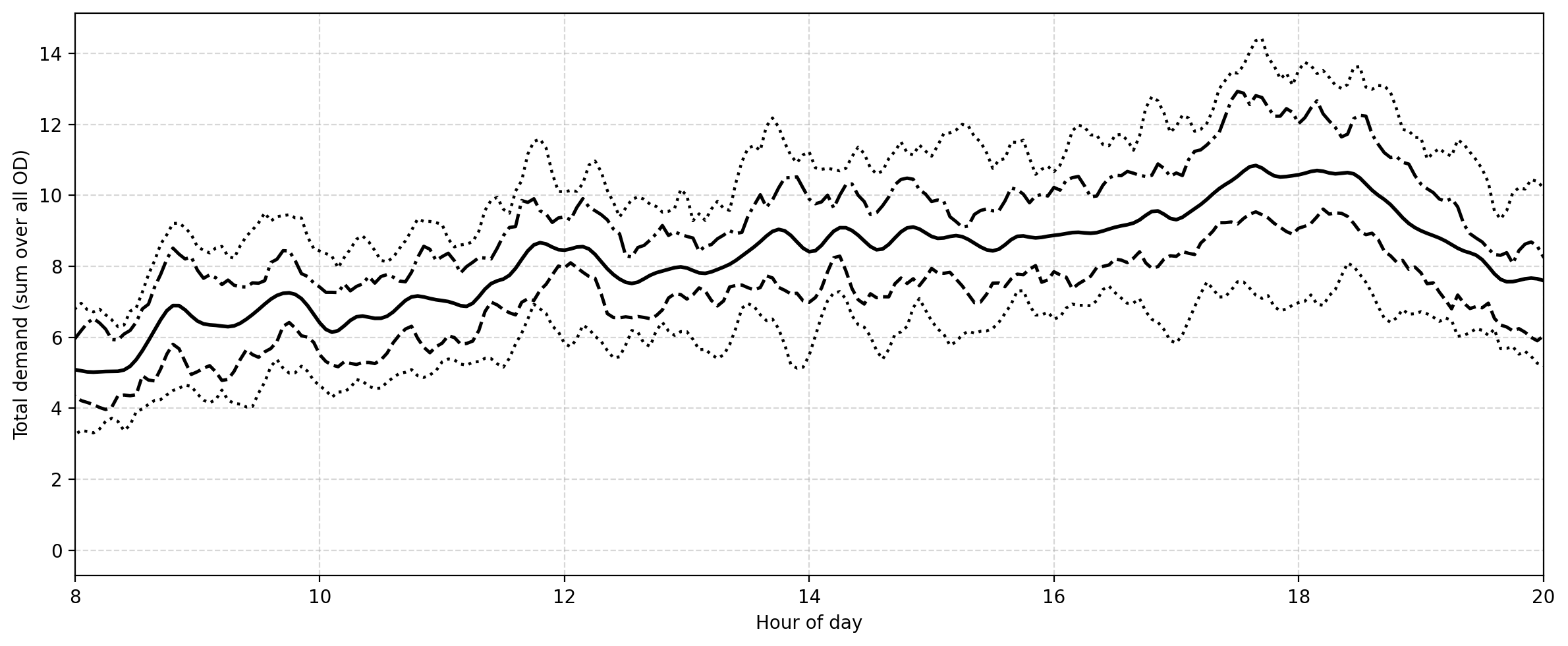}}
    {Smoothed Aggregate Trip Arrivals for the Four-Zone Midtown Network \label{fig:arrival}} 
    {The solid line represents the Gaussian-smoothed average demand across Tuesdays from July to November 2025, while the dashed lines mark the 25th and 75th percentiles, and the dotted lines mark the 10th and 90th percentiles. The horizontal axis is hour of day and each observation corresponds to a three-minute resolution.} 
\end{figure}

\paragraph{Arrival Rate} For empirical validation, we utilize historical trip data from July to November 2025, and focus on Tuesday from 8:00 a.m. to 8:00 p.m. to capture the representative transition between peak and off-peak demand patterns. We model order arrivals as an inhomogeneous Poisson process with time-varying rates $\boldsymbol{\theta}^{\text{base}}_t$. To estimate a stable empirical base-rate profile rather than fitting high-frequency sampling noise, we first compute mean arrivals across the selected Tuesdays and then apply a Gaussian smoothing filter \citep{lehky2010decoding} with $\sigma := 2.0$, approximately spanning a typical city traffic changing window of $\pm 15$ minutes to the raw count data \citep{moyano2021traffic}. This filter is an implementation choice for estimating the base demand $\hat{\boldsymbol{\theta}}^{\text{base}}_t$, while reducing variance for preprocessing. Figure \ref{fig:arrival} illustrates the summary statistics for the total demand originating from the four-zone Midtown network on Tuesday (see zone details in \ref{apx:graph}). The plot reveals a steady increase in demand intensity throughout the day, peaking during the evening commute.

\paragraph{Network Construction and Simulation Environment} \label{apx:graph} We consider two subgraphs of increasing size, each induced from the same NYC zone graph as shown in Figure \ref{fig:selected}. Sub-figure \ref{fig:selected}a comprises 4 nodes covering the densest commercial regions in Midtown Manhattan for mechanism analysis of the joint pricing-and-routing decisions. Sub-figure \ref{fig:selected}b expands to 15 nodes spanning Midtown, Upper Manhattan, and Central Park, etc., and is mainly used for scalability and real-world performance evaluation. Detailed region names are provided in Table \ref{tab:node}.

\begin{figure}[htb]\FIGURE
    {\includegraphics[width=0.8\textwidth]{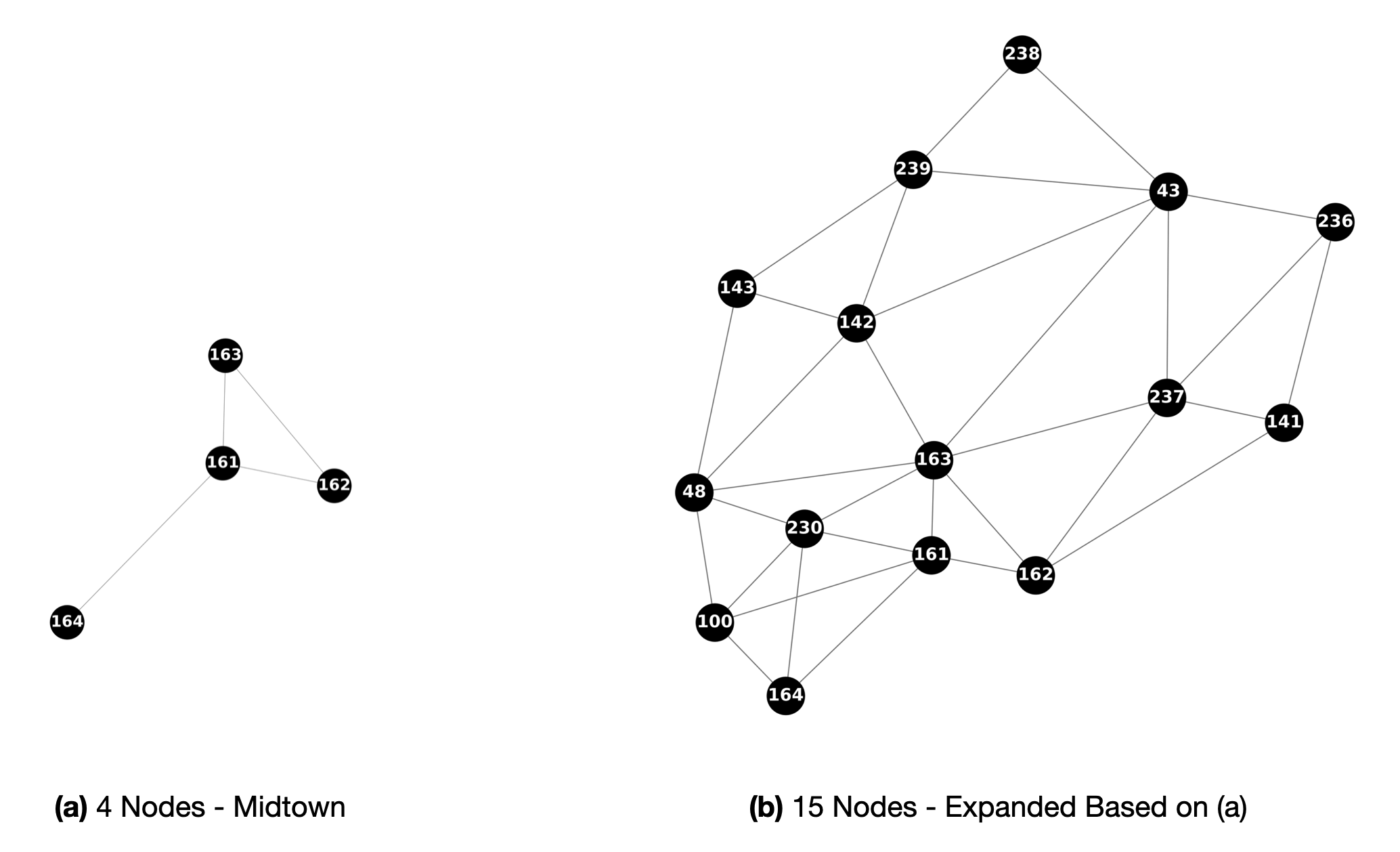}}
    {Selected NYC Sub-Networks \label{fig:selected}} 
    {Sub-figure (a) is used for mechanism analysis, and (b) for scalability and NYC-calibrated performance evaluation. Node labels are taxi-zone IDs. The corresponding zone names are listed in Table \ref{tab:node}.} 
\end{figure}

\begin{table}[htb]\TABLE
{Node Selection and Corresponding Zone Names \label{tab:node}}{
    \begin{tabular}{ccp{3.8cm}p{9.2cm}}
    \hline
    {Graph} & {Count} & {Selected Zone IDs} & {Zone Names} \\ \hline
    {a} & 4 & 161, 162, 163, 164 & 
    161: Midtown Center, 162: Midtown East, 163: Midtown North, 164: Midtown South \\
    
    {b} & 15 & 43, 48, 100, 141, 142, 143, 161, 162, 163, 164, 230, 236, 237, 238, 239 & 
    \textit{Graph a nodes, plus:} \newline 
    43: Central Park, 48: Clinton East, 100: Garment District, 141: Lenox Hill West, 142: Lincoln Square East, 143: Lincoln Square West, 230: Times Square/Theatre District, 236: Upper East Side North, 237: Upper East Side South, 238: Upper West Side North, 239: Upper West Side South \\ \hline
    \end{tabular}}
{}
\end{table}

\subsection{Graph Neural Network Architecture}\label{apx:gnn}
We encode the spatial relationships among zones using a graph neural network (GNN) that operates on the augmented major state $s^0 = (\vq, \vpLag, t)$ and the count state $\x$. Here, $\vq\in\sR^{M\times M}$ is the current origin-destination (OD) request queue, $\vpLag\in\sR^{M\times M}$ is the lagged price-multiplier matrix, and $\x\in\sR^{M\times(D_{\max}+1)}$ is the zone-level taxi-count feature.

For each edge-valued input $\bm{e} \in \{\vq, \vpLag\}$, we summarize the local edge structure at every zone $i$ by its mean outgoing and incoming edge values over the adjacency matrix $\mAdj$:
\begin{equation*}
    \resizebox{\linewidth}{!}{$
    e_{\operatorname{out}}(i) := \frac{1}{\max(1, \sum\limits_{k} \Adj(i, k))}\sum_{j=1}^{M} e(i, j) \Adj(i, j), \quad
    e_{\operatorname{in}}(i)  := \frac{1}{\max(1,\sum\limits_{k} \Adj(k, i))} \sum_{j=1}^{M} e(j, i) \Adj(j, i).$}
\end{equation*}

The pair $[e_{\operatorname{in}}(i), e_{\operatorname{out}}(i)] \in \sR^2$ is concatenated to form a compact summary of the underlying edge feature. Each channel is then independently mapped into a common $d_h$-dimensional node space via a learned linear projection $\mW$ followed by a ReLU activation layer:
\begin{equation*}
    \vh^{q}_i := \operatorname{ReLU}\left(\mW_{q}[\vq_{\operatorname{in}}(i) \| \vq_{\operatorname{out}}(i)]\right), \,
    \vh^{p}_i := \operatorname{ReLU}\left(\mW_{p}[\vp_{\operatorname{in}}(i) \| \vp_{\operatorname{out}}(i)]\right), \,
    \vh^{x}_i := \operatorname{ReLU}\left(\mW_{x}\x(i)\right),
\end{equation*}
where $\mW_{q}, \mW_{p} \in \sR^{d_h \times 2}$, $\mW_{x} \in \sR^{d_h \times (D_{\max}+1)}$ are learnable weight matrices.

The three per-node embeddings are concatenated and passed through a shared linear layer to produce a single fused node representation before any message passing occurs:
\begin{equation*}
    \vh_i^{(0)} := \operatorname{ReLU}\left(\mW_{\operatorname{fuse}}[\vh^{q}_i \| \vh^{p}_i \| \vh^{x}_i]\right), \quad \mW_{\operatorname{fuse}} \in \sR^{d_h \times 3d_h}.
\end{equation*}

Fusing the latent features at the node level prior to propagation ensures that the graph convolution layers capture interactions among supply, demand, and pricing signals jointly, rather than aggregating spatially disjoint representations. Then, the network applies $L$ graph convolution layers over the adjacency matrix $\mAdj$. At each layer $\ell = 0, \ldots, L-1$, the embedding of zone $i$ is updated by separately transforming the mean neighborhood embedding and the zone's self-embedding:
\begin{equation} \label{eq:gnn_layer}
    \vh_i^{(\ell+1)} := \operatorname{ReLU}\left(\mW^{(\ell)}{\left(\frac{1}{\max(1,\sum\limits_{k} \Adj(i, k))}\sum_{j=1}^{M} \Adj(i, j)\vh_j^{(\ell)}\right)} + \bm{B}^{(\ell)}\vh_i^{(\ell)}\right),
\end{equation}
where $\mW^{(\ell)}, \bm{B}^{(\ell)} \in \sR^{d_h \times d_h}$ are layer-specific learnable parameters. The mean aggregation in Equation \ref{eq:gnn_layer} normalizes by the degree of zone $i$, making the update invariant to network size. The temporal context $t$ is encoded through a Time2Vec representation $\bm{e}_t \in \sR^{d_t}$ \citep{kazemi2019time2vec} with both sinusoidal and cosinusoidal embeddings, and appended to the output only after spatial propagation is complete. The final feature vector passed to the downstream policy and value heads is $\bm{z} := [\vh^{(L)} \| \bm{e}_t] \in \sR^{M d_h + d_t}$, where $\vh^{(L)} := [\vh_1^{(L)}, \ldots, \vh_M^{(L)}]^\top \in \sR^{M \times d_h}$ collects the final node embeddings across all zones. Putting temporal context at the output rather than at intermediate layers prevents the time signal from interfering with the spatial message-passing dynamics, while still allowing downstream heads to condition their decisions on the current period.

\subsection{Hyperparameters and Reproducibility Protocol} \label{apx:hyperparameter-taxi}
In this section, we report the training configuration required to reproduce the taxi experiments in Section \ref{sec:taxi-expt}. Table \ref{tab:taxi-ppo-hyperparameters} summarizes the hyperparameters used for the GNN feature extractor and PPO.

\begin{table}[htb]\TABLE
    {PPO and Simulation Settings for the Taxi Experiments \label{tab:taxi-ppo-hyperparameters}}
    {\begin{tabular}{p{5.0cm}p{8.0cm}}
    \hline
    Setting & Value \\
    \hline
    Rollout length & $n_{\text{steps}}=2048$ \\
    Mini-batch size & 128; 256 for the 15-zone configuration \\
    Optimization epochs & 5 \\
    Discount factor & $\gamma$ = 0.95\\
    Fairness preference factor & $\lambda$ = 1\\
    Learning rate and optimizer & $3\times10^{-4}$; Adam \\
    Target KL divergence & 0.03; 0.05 for the 15-zone configuration \\
    Episode length & $T$ = 200 time steps \\
    Training episodes & 20,000 episodes; 40,000 for the 15-zone configuration \\
    GNN hidden dimension & $d_h$ = 32 \\
    GNN time embedding dimension & $d_t$ = 32 \\
    GNN feature dimension & 128 \\
    GNN layers & 2; 3 for the 15-zone configuration \\
    \hline
    \end{tabular}}
    {}
\end{table}

\subsubsection{Normalization}
The observation contains the taxi counts $\bm{x}$, order queue $\vq$, and normalized time $t/T$. Taxi counts are divided by the fleet size $N$, and the order queue is divided by $\max\{\sum_{i,j \in \gM}q(i,j),1\}$. The per ride reward is divided by $\max \{c_o D_{\max}, \, |c_f + c_rD_{\max} p_{\max} - c_o D_{\max}|, \, 1\}$ to approximately fall between the range $[-1, 1]$ without changing its sign.

\subsubsection{Structured Low-Rank Pricing} \label{apx:low-rank-pricing}
For OD pricing, the policy head outputs origin and destination latent factors $\mL^{\text{O}},\mL^{\text{D}}\in[0,1]^{M\times r_p}$, where $r_p$ is the chosen rank. Define their row averages by $\ell^{\text{O}}(i):=r_p^{-1}\sum_{q=1}^{r_p}L^{\text{O}}(i,q)$ and $\ell^{\text{D}}(j):=r_p^{-1}\sum_{q=1}^{r_p}L^{\text{D}}(j,q)$. The OD price multiplier is $p^{\text{OD}}(i,j):=p_{\min}+{(1/2)}(p_{\max}-p_{\min})\left(\ell^{\text{O}}(i)+\ell^{\text{D}}(j)\right)$, with $a^0:=\vp^{\text{OD}}$, i.e., the normalized OD score matrix is additively separable into origin and destination effects. Origin-only pricing uses $p^{\text{O}}(i)=p_{\min}+(p_{\max}-p_{\min})\ell^{\text{O}}(i)$, with $a^0:=\vp^{\text{O}}\vone_M^T$. Uniform pricing takes the scalars $\ell^{\text{O}}, \ell^{\text{D}}$ and averages the scores $p^{\text{U}}=p_{\min}+(p_{\max}-p_{\min})(\ell^{\text{O}} + \ell^{\text{D}})/2$ to obtain a single multiplier, with $a^0:=p^{\text{U}} \vone_{M \times M}$. We set $r_p=4$ for the four-zone experiments and $r_p=10$ for the 15-zone experiments.

\subsection{Benchmark Details}\label{apx:benchmark}
The benchmarks use the same simulator dynamics, economic parameters, episode horizon, and count-action feasibility checks as the CP-DRL.

\paragraph{Count-Proportional Assignment Linear Program (CP-ALP)} \label{apx:alp}
We solve a myopic taxi-assignment model per-step. Notations match the taxi count model in Example \ref{exmp:taxi-dispatching-count}, with the additional variable ${\nu}(i,j) \in \mathbb{Z}_+$ representing the number of taxis that are relocated to $i$, which are still in-transit but assumed to immediately serve requests to $j$.

We first consider the case (i), in which there are more taxis than orders,
\begin{equation}
    \begin{array}{rl} 
     \max \limits_{\u^1, \u^2, \bm{\nu} \geq \bm{0}} &  \sum \limits_{i\in\gM} \sum\limits_{j\in\gM} \left(\left[c_f+c_r\pLag(i,j)D(i,j)\right]u^2(i,j)- c_oD(i,j)\left[u^1(i,j)+u^2(i,j)\right]\right)\\
     \text{s.t.} & \sum\limits_{j\in\gM} u^1(i, j) + \sum\limits_{j \in \gM} u^2(i, j) = x(i, 0), \hspace{30pt} \forall i \in \gM, \\
     & \sum\limits_{k \in \gM} u^1(k, i) \geq \sum\limits_{j \in \gM} {\nu}(i, j),  \hspace{76pt} \forall i \in \gM, \\
     & {\nu}(i, j) + u^2(i, j) \geq q(i, j), \hspace{78pt} \forall i, j \in \gM, \\
     & u^2(i, j) \leq q(i, j),  \hspace{120pt} \forall i, j \in \gM.
\end{array}
\end{equation}

The first constraint is the flow conservation of idle taxis at each origin that each available taxi is either relocated or directly assigned. The second represents that the total number of orders immediately fulfilled at $i$ cannot exceed the number of taxis relocated into $i$. The third guarantees that all customer requests are satisfied, either by direct assignment $\u^2$ or by post-relocated taxis $\bm{\nu}$. The fourth  constrains the order capacity.

We next consider case (ii), in which the number of taxis is smaller than the number of orders:

\begin{equation}
    \begin{array}{rl} 
     \max \limits_{\u^1, \u^2, \bm{\nu} \geq \bm{0}} &  \sum \limits_{i\in\gM} \sum\limits_{j\in\gM} \left(\left[c_f+c_r\pLag(i,j)D(i,j)\right]u^2(i,j)- c_oD(i,j)\left[u^1(i,j)+u^2(i,j)\right]\right)\\
     \text{s.t.} & \sum\limits_{j\in\gM} u^1(i, j) + \sum\limits_{j \in \gM} u^2(i, j) = x(i, 0), \hspace{30pt} \forall i \in \gM, \\
     & \sum\limits_{k \in \gM} u^1(k, i) \leq \sum\limits_{j \in \gM} {\nu}(i, j),  \hspace{76pt} \forall i \in \gM, \\
     & {\nu}(i, j) + u^2(i, j) \leq q(i, j), \hspace{78pt} \forall i, j \in \gM.
\end{array}
\end{equation}

\paragraph{Random (RDM)} At each step, the entries of the priority-score matrix are sampled independently from a uniform distribution $[0, 1]$.

\paragraph{Lowest Income First (LIF)} All idle taxis are globally sorted by accumulated discounted reward, with taxi index used only for deterministic tie breaking. Each taxi is then considered based on the sorted order. If there are outstanding requests at the current origin, the chosen taxi is matched to the destination maximizing current trip net reward. If no request is available at the current origin, the taxi remains in place. LIF neither relocates taxis proactively nor optimizes price.

\paragraph{Multilayer Perceptron (MLP)} The configuration is kept the same as CP-DRL, but replaces graph message passing by 2-layer fully connected feature processing with a hidden dimension of 128.

\subsection{Additional Results on Fairness Preference Factor} \label{apx:additional-expt-taxi}
Table \ref{tab:nyc_calibrated_lambda_0_objectives} reports a robustness check with values of platform components and percentages to the objectives by GGF, complementing the fairness-aware taxi evaluations in Table \ref{tab:nyc_objectives}. For OD pricing, the CP-DRL advantage over CP-ALP ranges from approximately 37.6\% at 10 taxis to 54.5\% at 50 taxis, showing that the dynamic advantage is not driven only by the fairness term. Across CP-DRL variants, OD pricing is generally strongest. This check indicates that CP-DRL's gains arise from intertemporal coordination of pricing and dispatch rather than from the particular welfare weight alone.

\begin{table}[htb]\TABLE
    {Platform Value and the Percentage to the Fairness-Aware Objective \label{tab:nyc_calibrated_lambda_0_objectives}}{
    \begin{tabular}{cccccc}
    \hline
    \# Taxis & Granularity & CP-DRL & \% & CP-ALP & \% \\
    \hline
    \multirow{3}{*}{10} & OD & $24.89 \pm 0.51$ & $35.23\%$ & $18.09 \pm 0.35$ & $35.79\%$ \\
     & O & $23.24 \pm 0.52$ & $35.59\%$ & $15.65 \pm 0.22$ & $35.91\%$ \\
     & U & $24.38 \pm 0.22$ & $35.28\%$ & $10.77 \pm 0.32$ & $36.43\%$ \\
    \hline
    \multirow{3}{*}{20} & OD & $31.97 \pm 0.48$ & $52.66\%$ & $21.11 \pm 0.37$ & $54.15\%$ \\
     & O & $29.57 \pm 0.51$ & $52.88\%$ & $18.66 \pm 0.47$ & $55.01\%$ \\
     & U & $30.53 \pm 0.43$ & $52.77\%$ & $12.93 \pm 0.33$ & $54.92\%$ \\
    \hline
    \multirow{3}{*}{30} & OD & $33.51 \pm 0.77$ & $63.45\%$ & $21.96 \pm 0.35$ & $65.59\%$ \\
     & O & $31.19 \pm 0.49$ & $63.24\%$ & $19.27 \pm 0.39$ & $65.80\%$ \\
     & U & $31.97 \pm 0.78$ & $63.35\%$ & $13.18 \pm 0.34$ & $66.14\%$ \\
    \hline
    \multirow{3}{*}{40} & OD & $33.63 \pm 0.48$ & $70.50\%$ & $22.01 \pm 0.48$ & $72.36\%$ \\
     & O & $31.97 \pm 0.46$ & $70.24\%$ & $19.72 \pm 0.33$ & $72.51\%$ \\
     & U & $32.11 \pm 1.39$ & $70.25\%$ & $13.24 \pm 0.34$ & $73.69\%$ \\
    \hline
    \multirow{3}{*}{50} & OD & $34.56 \pm 0.53$ & $75.13\%$ & $22.37 \pm 0.43$ & $77.25\%$ \\
     & O & $32.42 \pm 0.50$ & $75.11\%$ & $19.87 \pm 0.26$ & $78.13\%$ \\
     & U & $32.28 \pm 1.68$ & $74.95\%$ & $13.59 \pm 0.32$ & $78.87\%$ \\
    \hline
    \multirow{3}{*}{60} & OD & $34.87 \pm 0.45$ & $78.82\%$ & $22.70 \pm 0.37$ & $81.14\%$ \\
     & O & $32.49 \pm 0.80$ & $78.42\%$ & $19.76 \pm 0.35$ & $81.10\%$ \\
     & U & $32.38 \pm 2.24$ & $78.55\%$ & $13.44 \pm 0.27$ & $82.12\%$ \\
    \hline
    \end{tabular}}
{\textit{Note.} Entries are mean $\pm$ sample standard deviation over available experiments. Values are recomputed as $V^0_0$ and percentages as $ V^0_0/(V^0_0 + 1 \cdot \rho(\V_0))$ to Table \ref{tab:nyc_objectives}.}
\end{table}

\end{document}